\author{%
    Ben Finkelshtein\textsuperscript{1},\;
    {\.I}smail {\.I}lkan Ceylan\textsuperscript{2,3,1},\;
    Michael Bronstein\textsuperscript{1,3},\;
    Ron Levie\textsuperscript{4} \\[0.6ex]
    \textsuperscript{1}University of Oxford,\;
    \textsuperscript{2}TU Wien,\;
    \textsuperscript{3}AITHYRA,\;
    \textsuperscript{4}Technion – Israel Institute of Technology
}
\def\eqref#1{equation~\ref{#1}}
\def\1{\bm{1}}
\def\vb{{\bm{b}}}
\def\vx{{\bm{x}}}
\def\vy{{\bm{y}}}
\def\mA{{\bm{A}}}
\def\mD{{\bm{D}}}
\def\mE{{\bm{E}}}
\def\mH{{\bm{H}}}
\def\mQ{{\bm{Q}}}
\def\mT{{\bm{T}}}
\def\mV{{\bm{V}}}
\def\mW{{\bm{W}}}
\def\mX{{\bm{X}}}
\def\mY{{\bm{Y}}}
\def\mZ{{\bm{Z}}}
\def\mLambda{{\bm{\Lambda}}}
\DeclareMathAlphabet{\mathsfit}{\encodingdefault}{\sfdefault}{m}{sl}
\SetMathAlphabet{\mathsfit}{bold}{\encodingdefault}{\sfdefault}{bx}{n}
\def\gB{{\mathcal{B}}}
\def\gE{{\mathcal{E}}}
\def\gK{{\mathcal{K}}}
\def\gL{{\mathcal{L}}}
\def\gM{{\mathcal{M}}}
\def\gN{{\mathcal{N}}}
\def\gO{{\mathcal{O}}}
\def\gV{{\mathcal{V}}}
\def\sN{{\mathbb{N}}}
\def\sR{{\mathbb{R}}}
\def\sZ{{\mathbb{Z}}}
\DeclareMathOperator*{\argmin}{arg\,min}
\newtheorem{theorem}{Theorem}[section]
\newaliascnt{definition}{theorem}
\newtheorem{definition}[definition]{Definition}
\newaliascnt{proposition}{theorem}
\newtheorem{proposition}[proposition]{Proposition}
\newaliascnt{lemma}{theorem}
\newtheorem{lemma}[lemma]{Lemma}
\newaliascnt{remark}{theorem}
\newtheorem{remark}[remark]{Remark}
\theoremstyle{plain}
\newenvironment{theoremcopy}[1]{\innercustomthm}{\endinnercustomthm}
\newenvironment{propositioncopy}[1]{\innercustomprop}{\endinnercustomprop}
\newenvironment{lemmacopy}[1]{\innercustomlem}{\endinnercustomlem}
\newcommand\red[1]{\textcolor{red}{#1}}  
\newcommand\blue[1]{\textcolor{blue}{#1}}  
\newcommand\gray[1]{\textcolor{gray}{#1}}  
\newcommand{\stdfont}{}
\newcommand\tinysmall{\@setfontsize\tinysmall{8}{10}}
\newcommand{\abs}[1]{\left|#1\right|}
\newcommand{\norm}[1]{\left\|#1\right\|}
\newcommand{\GL}[1]{\mathrm{GL}\left(#1\right)}
\newcommand\stepl{{\left(\ell\right)}}
\newcommand\steplplus{{\left(\ell + 1\right)}}
\newcommand\oneMat{{\mathbf{1}}}
\newcommand\oneVec{{\mathbf{1}}}
\def\valpha{{\bm{\alpha}}}
\def\mLambda{{\bm{\Lambda}}}
\def\vbeta{{\bm{\beta}}}
\def\vgamma{{\bm{\gamma}}}
\def\vdelta{{\bm{\delta}}}
\def\vkappa{{\bm{\kappa}}}
\definecolor{white}          {RGB}{255, 255, 255} 
\definecolor{softYellow}{RGB}{255, 255, 50} 
\definecolor{softGreen}{RGB}{50, 255, 50}  
\definecolor{darkpink}{rgb}{0.91, 0.33, 0.5}
\definecolor{azure}{rgb}{0.0, 0.5, 1.0}
\definecolor{awesomered}{rgb}{1.0, 0.13, 0.32}
\definecolor{mikadoyellow}{rgb}{1.0, 0.77, 0.05}
\definecolor{sapphire}{rgb}{0.03, 0.15, 0.4}
\definecolor{darkred}{rgb}{0.6, 0.0, 0.0}
\definecolor{lightgray}{rgb}{0.83, 0.83, 0.83}
\def\numNodes{6}%
\def\numFeat{5}%
\def\numClass{4}%
\def\cellSize{0.3}
\def\height{2.8}
\def\rightPred{10.8}
\def\cubedepth{0.1} 
\newcommand{\drawcuboidcell}[6]{%
  \fill[fill=#3, fill opacity=#6, draw=black, line width=0.15pt]
    (#1, #2) rectangle ++(\cellSize, -\cellSize);

  \ifnum#4=0
    \fill[fill=#3!40!white, fill opacity=#6, draw=black, line width=0.15pt]
      (#1, #2) -- ++(\cubedepth, \cubedepth)
      -- ++(\cellSize, 0) -- ++(-\cubedepth, -\cubedepth)
      -- cycle;
  \fi

  \ifnum#5=\numexpr\numCol-1\relax
    \fill[fill=#3!40!black, fill opacity=#6, draw=black, line width=0.15pt]
      (#1+\cellSize, #2) -- ++(\cubedepth, \cubedepth)
      -- ++(0, -\cellSize) -- ++(-\cubedepth, -\cubedepth)
      -- cycle;
  \fi
}
\newcommand{\drawmatrix}[4]{%
  \ifthenelse{\equal{#4}{feat}}{\def\numCol{5}}{\def\numCol{4}}%
    \ifthenelse{\equal{#4}{pred}}{
        \def\opacity{0.5}
    }{
        \def\opacity{1.0}
    }

  \foreach \j in {0,1,2,3,4} {%
    \ifnum\j<\numCol
      \foreach \i in {0,1,2,3,4,5} {
        \ifthenelse{\equal{#3}{true}}{%
          \ifthenelse{\equal{\i}{1}}{\def\iSwap{4}}{%
            \ifthenelse{\equal{\i}{4}}{\def\iSwap{1}}{\def\iSwap{\i}}%
          }%
          \ifthenelse{\equal{#4}{feat}}{%
            \ifthenelse{\equal{\j}{2}}{\def\jSwap{4}}{%
              \ifthenelse{\equal{\j}{4}}{\def\jSwap{2}}{\def\jSwap{\j}}%
            }%
          }{%
            \ifthenelse{\equal{\j}{0}}{\def\jSwap{2}}{%
              \ifthenelse{\equal{\j}{2}}{\def\jSwap{0}}{\def\jSwap{\j}}%
            }%
          }%
        }{%
          \def\iSwap{\i}%
          \def\jSwap{\j}%
        }%
        \drawcuboidcell{#1+\j *\cellSize}{#2-\i*\cellSize}{lightgray}{\i}{\j}{\opacity}
      }
    \fi
  }

  \foreach \j in {0,1,2,3,4} {%
    \ifnum\j<\numCol
      \foreach \i in {0,1,2,3,4,5} {
        \ifthenelse{\equal{#3}{true}}{%
          \ifthenelse{\equal{\i}{1}}{\def\iSwap{4}}{%
            \ifthenelse{\equal{\i}{4}}{\def\iSwap{1}}{\def\iSwap{\i}}%
          }%
          \ifthenelse{\equal{#4}{feat}}{%
            \ifthenelse{\equal{\j}{2}}{\def\jSwap{4}}{%
              \ifthenelse{\equal{\j}{4}}{\def\jSwap{2}}{\def\jSwap{\j}}%
            }%
          }{%
            \ifthenelse{\equal{\j}{0}}{\def\jSwap{2}}{%
              \ifthenelse{\equal{\j}{2}}{\def\jSwap{0}}{\def\jSwap{\j}}%
            }%
          }%
        }{%
          \def\iSwap{\i}%
          \def\jSwap{\j}%
        }%

        \def\row{\iSwap}%
        \def\col{\jSwap}%
        \def\fillColor{}%

        \ifthenelse{\equal{#4}{feat}}{%
          \ifnum\row=0 \ifnum\col=4 \def\fillColor{softYellow} \fi \fi
          \ifnum\row=1 \ifnum\col=4 \def\fillColor{softYellow} \fi \fi
          \ifnum\row=2 \ifnum\col=4 \def\fillColor{softYellow} \fi \fi
          \ifnum\row=4
            \ifnum\col=0 \def\fillColor{darkpink} \fi
            \ifnum\col=1 \def\fillColor{darkpink} \fi
          \fi
        }{%
          \ifthenelse{\equal{#4}{label}}{%
            \ifnum\row=0 \ifnum\col=1 \def\fillColor{orange} \fi \fi
            \ifnum\row=1 \ifnum\col=1 \def\fillColor{orange} \fi \fi
            \ifnum\row=2 \ifnum\col=3 \def\fillColor{orange} \fi \fi
            \ifnum\row=3 \ifnum\col=3 \def\fillColor{orange} \fi \fi
            \ifnum\row=4 \ifnum\col=0 \def\fillColor{orange} \fi \fi
            \ifnum\row=5 \ifnum\col=3 \def\fillColor{orange} \fi \fi
          }{%
            \ifnum\row=0 \ifnum\col=3 \def\fillColor{orange} \fi \fi
            \ifnum\row=1 \ifnum\col=1 \def\fillColor{orange} \fi \fi
            \ifnum\row=2 \ifnum\col=3 \def\fillColor{orange} \fi \fi
            \ifnum\row=3 \ifnum\col=3 \def\fillColor{orange} \fi \fi
            \ifnum\row=4 \ifnum\col=0 \def\fillColor{orange} \fi \fi
            \ifnum\row=5 \ifnum\col=3 \def\fillColor{orange} \fi \fi
          }%
        }%
        \ifx\fillColor\empty
        \else
          \drawcuboidcell{#1+\j *\cellSize}{#2-\i*\cellSize}{\fillColor}{\i}{\j}{\opacity}
        \fi
      }
    \fi
  }
}
\newif\ifcuboidshade
\newif\ifcuboidemphedge
\tikzset{
  cuboid/.is family,
  cuboid,
  shiftx/.initial=0,
  shifty/.initial=0,
  dimx/.initial=3,
  dimy/.initial=3,
  dimz/.initial=3,
  scale/.initial=1,
  densityx/.initial=1,
  densityy/.initial=1,
  densityz/.initial=1,
  rotation/.initial=0,
  anglex/.initial=0,
  angley/.initial=90,
  anglez/.initial=225,
  scalex/.initial=1,
  scaley/.initial=1,
  scalez/.initial=0.5,
  front/.style={draw=black,fill=white},
  top/.style={draw=black,fill=white},
  right/.style={draw=black,fill=white},
  shade/.is if=cuboidshade,
  shadecolordark/.initial=black,
  shadecolorlight/.initial=white,
  shadeopacity/.initial=0.15,
  shadesamples/.initial=16,
  emphedge/.is if=cuboidemphedge,
  emphstyle/.style={thick},
  zcut/.initial=100,
  topcut/.style={draw=black,fill=white},
  rightcut/.style={draw=black,fill=white},
}
\newcommand{\tikzcuboidkey}[1]{\pgfkeysvalueof{/tikz/cuboid/#1}}
\newcommand{\tikzcuboid}[1]{
    \tikzset{cuboid,#1} 
  \pgfmathsetlengthmacro{\vectorxx}{\tikzcuboidkey{scalex}*cos(\tikzcuboidkey{anglex})*28.452756}
  \pgfmathsetlengthmacro{\vectorxy}{\tikzcuboidkey{scalex}*sin(\tikzcuboidkey{anglex})*28.452756}
  \pgfmathsetlengthmacro{\vectoryx}{\tikzcuboidkey{scaley}*cos(\tikzcuboidkey{angley})*28.452756}
  \pgfmathsetlengthmacro{\vectoryy}{\tikzcuboidkey{scaley}*sin(\tikzcuboidkey{angley})*28.452756}
  \pgfmathsetlengthmacro{\vectorzx}{\tikzcuboidkey{scalez}*cos(\tikzcuboidkey{anglez})*28.452756}
  \pgfmathsetlengthmacro{\vectorzy}{\tikzcuboidkey{scalez}*sin(\tikzcuboidkey{anglez})*28.452756}
  \begin{scope}[xshift=\tikzcuboidkey{shiftx}, yshift=\tikzcuboidkey{shifty}, scale=\tikzcuboidkey{scale}, rotate=\tikzcuboidkey{rotation}, x={(\vectorxx,\vectorxy)}, y={(\vectoryx,\vectoryy)}, z={(\vectorzx,\vectorzy)}]
    \pgfmathsetmacro{\steppingx}{1/\tikzcuboidkey{densityx}}
  \pgfmathsetmacro{\steppingy}{1/\tikzcuboidkey{densityy}}
  \pgfmathsetmacro{\steppingz}{1/\tikzcuboidkey{densityz}}
  \newcommand{\dimx}{\tikzcuboidkey{dimx}}
  \newcommand{\dimy}{\tikzcuboidkey{dimy}}
  \newcommand{\dimz}{\tikzcuboidkey{dimz}}
  \pgfmathsetmacro{\secondx}{2*\steppingx}
  \pgfmathsetmacro{\secondy}{2*\steppingy}
  \pgfmathsetmacro{\secondz}{2*\steppingz}
\pgfmathsetmacro{\xrangeend}{floor(\dimx/\steppingx)*\steppingx}
\pgfmathtruncatemacro{\xsteps}{\xrangeend/\steppingx}
\pgfmathsetmacro{\yrangeend}{floor(\dimy/\steppingy)*\steppingy}
\pgfmathtruncatemacro{\ysteps}{\yrangeend/\steppingy}
\foreach \xi in {1,...,\xsteps} {
  \foreach \yi in {1,...,\ysteps} {
    \pgfmathsetmacro{\x}{\xi*\steppingx}
    \pgfmathsetmacro{\y}{\yi*\steppingy}
    \pgfmathsetmacro{\lowx}{\x - \steppingx}
    \pgfmathsetmacro{\lowy}{\y - \steppingy}
    \filldraw[cuboid/front] (\lowx,\lowy,\dimz) -- (\lowx,\y,\dimz) -- (\x,\y,\dimz) -- (\x,\lowy,\dimz) -- cycle;
  }
}
\pgfmathsetmacro{\zrangeend}{floor(\dimz/\steppingz)*\steppingz}
\pgfmathtruncatemacro{\zsteps}{\zrangeend/\steppingz}
\pgfmathsetmacro{\zcut}{\tikzcuboidkey{zcut}}
\foreach \xi in {1,...,\xsteps} {
  \foreach \zi in {1,...,\zsteps} {
    \pgfmathsetmacro{\x}{\xi*\steppingx}
    \pgfmathsetmacro{\z}{\zi*\steppingz}
    \pgfmathsetmacro{\lowx}{\x - \steppingx}
    \pgfmathsetmacro{\lowz}{\z - \steppingz}
    \pgfmathparse{\z > \zcut}
    \ifnum\pgfmathresult=1
        \filldraw[cuboid/top] (\lowx,\dimy,\lowz) -- (\lowx,\dimy,\z) -- (\x,\dimy,\z) -- (\x,\dimy,\lowz) -- cycle;
    \else
        \filldraw[cuboid/topcut] (\lowx,\dimy,\lowz) -- (\lowx,\dimy,\z) -- (\x,\dimy,\z) -- (\x,\dimy,\lowz) -- cycle;
    \fi
  }
}
    \foreach \y in {\steppingy,\secondy,...,\dimy}
  { \foreach \z in {\steppingz,\secondz,...,\dimz}
    {   \pgfmathsetmacro{\lowy}{(\y-\steppingy)}
      \pgfmathsetmacro{\lowz}{(\z-\steppingz)}
    \pgfmathparse{\z > \zcut}
    \ifnum\pgfmathresult=1
        \filldraw[cuboid/right] (\dimx,\lowy,\lowz) -- (\dimx,\lowy,\z) -- (\dimx,\y,\z) -- (\dimx,\y,\lowz) -- cycle;
    \else
        \filldraw[cuboid/rightcut] (\dimx,\lowy,\lowz) -- (\dimx,\lowy,\z) -- (\dimx,\y,\z) -- (\dimx,\y,\lowz) -- cycle;
    \fi
    }
  }
  \ifcuboidemphedge
    \draw[cuboid/emphstyle] (0,\dimy,0) -- (\dimx,\dimy,0) -- (\dimx,\dimy,\dimz) -- (0,\dimy,\dimz) -- cycle;%
    \draw[cuboid/emphstyle] (0,\dimy,\dimz) -- (0,0,\dimz) -- (\dimx,0,\dimz) -- (\dimx,\dimy,\dimz);%
    \draw[cuboid/emphstyle] (\dimx,\dimy,0) -- (\dimx,0,0) -- (\dimx,0,\dimz);%
    \fi

    \ifcuboidshade
    \pgfmathsetmacro{\cstepx}{\dimx/\tikzcuboidkey{shadesamples}}
    \pgfmathsetmacro{\cstepy}{\dimy/\tikzcuboidkey{shadesamples}}
    \pgfmathsetmacro{\cstepz}{\dimz/\tikzcuboidkey{shadesamples}}
    \foreach \s in {1,...,\tikzcuboidkey{shadesamples}}
    {   \pgfmathsetmacro{\lows}{\s-1}
        \pgfmathsetmacro{\cpercent}{(\lows)/(\tikzcuboidkey{shadesamples}-1)*100}
        \fill[opacity=\tikzcuboidkey{shadeopacity},color=\tikzcuboidkey{shadecolorlight}!\cpercent!\tikzcuboidkey{shadecolordark}] (0,\s*\cstepy,\dimz) -- (\s*\cstepx,\s*\cstepy,\dimz) -- (\s*\cstepx,0,\dimz) -- (\lows*\cstepx,0,\dimz) -- (\lows*\cstepx,\lows*\cstepy,\dimz) -- (0,\lows*\cstepy,\dimz) -- cycle;
        \fill[opacity=\tikzcuboidkey{shadeopacity},color=\tikzcuboidkey{shadecolorlight}!\cpercent!\tikzcuboidkey{shadecolordark}] (0,\dimy,\s*\cstepz) -- (\s*\cstepx,\dimy,\s*\cstepz) -- (\s*\cstepx,\dimy,0) -- (\lows*\cstepx,\dimy,0) -- (\lows*\cstepx,\dimy,\lows*\cstepz) -- (0,\dimy,\lows*\cstepz) -- cycle;
        \fill[opacity=\tikzcuboidkey{shadeopacity},color=\tikzcuboidkey{shadecolorlight}!\cpercent!\tikzcuboidkey{shadecolordark}] (\dimx,0,\s*\cstepz) -- (\dimx,\s*\cstepy,\s*\cstepz) -- (\dimx,\s*\cstepy,0) -- (\dimx,\lows*\cstepy,0) -- (\dimx,\lows*\cstepy,\lows*\cstepz) -- (\dimx,0,\lows*\cstepz) -- cycle;
    }
    \fi 

  \end{scope}
}
\begin{document}

\maketitle

\begin{abstract}
    Graph machine learning architectures are typically tailored to specific tasks on specific datasets, which hinders their broader applicability.  This has led to a new quest in graph machine learning: \emph{how to build graph foundation models} capable of generalizing across arbitrary graphs and features?
    In this work, we present a recipe for designing graph foundation models for node-level tasks from first principles. 
    The key ingredient underpinning our study is a systematic investigation of the symmetries that a graph foundation model must respect. In a nutshell, we argue that
    \emph{label permutation-equivariance} alongside  \emph{feature permutation-invariance} are necessary in addition to the common \emph{node permutation-equivariance} on each local neighborhood of the graph.  To this end, we first characterize the space of linear transformations that are equivariant to permutations of nodes and labels, and invariant to permutations of features. We then prove that the resulting  network is a \emph{universal approximator} on multisets that respect the aforementioned symmetries. Our recipe uses such layers on the multiset of features induced by the local neighborhood of the graph to obtain a class of graph foundation models for node property prediction. We validate our approach through extensive experiments on 29 real-world node classification datasets, demonstrating both strong zero-shot empirical performance and consistent improvement as the number of training graphs increases.
\end{abstract}

\tableofcontents

\newpage

\section{Introduction}

\looseness=-1
Foundation models have shown remarkable success in diverse domains such as {natural language} \citep{raffel2023exploringlimitstransferlearning,paaß2023foundationmodelsnaturallanguage,touvron2023llamaopenefficientfoundation}, {computer vision} \citep{dosovitskiy2021imageworth16x16words,radford2021learningtransferablevisualmodels,wang2022gitgenerativeimagetotexttransformer,bao2022beitbertpretrainingimage}, and {audio processing} \citep{Chen_2022,chen2022beatsaudiopretrainingacoustic,radford2022robustspeechrecognitionlargescale}.  
This success has sparked interest in building foundation models for graph machine learning (\emph{graph foundation models (GFMs)})\citep{mao2024positiongraphfoundationmodels,zhao2025fullyinductivenodeclassificationarbitrary,bechlerspeicher2025positiongraphlearninglose}, raising a fundamental question: \emph{what are the key requirements for a graph  machine learning architecture to generalize across tasks, graph structures, and feature and label sets?} 

Significant progress has recently been made on link-level tasks in knowledge graphs \citep{geng2022relationalmessagepassingfully,lee2023ingraminductiveknowledgegraph, galkin2024foundationmodelsknowledgegraph,zhang2025trixexpressivemodelzeroshot,huang2025expressiveknowledgegraphfoundation}, which typically do not involve node features. This success is partially due to the nature of knowledge graphs, where learning focuses on structured relational patterns rather than raw feature inputs. 
Unlike in knowledge graphs, natural language processing or computer vision, node-level tasks in general graphs face a fundamental obstacle: the absence of a shared feature "vocabulary": Features may encode textual embeddings in one dataset, molecular properties in another, or social attributes in yet another. This semantic diversity makes it inherently difficult to define a unified feature space, posing a major challenge to building graph foundation models that generalize across domains.

Classical graph neural networks (GNNs) \citep{chen2020simple,Kipf16,velic2018graph,xu18} rely on fixed feature ordering and predefined feature sets, making them ill-equipped to  generalize across arbitrary graphs with varying features.  
To date, the only architecture shown to generalize zero-shot across graphs with rich node features is that of \citet{zhao2025fullyinductivenodeclassificationarbitrary}, which represents a promising step forward. However, their model offers a specific empirical solution and does not address the deeper question: \emph{how can one systematically design generalizable node-level graph foundation models?}

\textbf{Contributions.} In this paper, we directly address this challenge by introducing a theoretically-grounded recipe for designing GFMs for node-level classification and regression tasks.
On the theoretical front, our starting point is identifying three symmetries that a GFM must respect:
\begin{enumerate}[leftmargin=0.75cm,topsep=0pt, parsep=1pt,itemsep=2pt]
   \item 
    \looseness=-1\emph{Node permutation-equivariance ({\color{azure}$S_N$} in \cref{fig:intro}).} The standard requirement in graph learning is to ensure that the predictions are invariant under isomorphisms. This is satisfied by design in message-passing GNNs through permutation-invariant local aggregation, which guarantees that a permutation of the input nodes would result in a consistent permutation of the node-level outputs.
    \item \emph{Label permutation-equivariance ({\color{darkred}$S_C$} in \cref{fig:intro}).} In node-level tasks (classification or regression), outputs should also respect permutations of class labels or multi-regression targets. Permuting the ground-truth should yield a consistent permutation in the predictions, ensuring semantic consistency irrespective of the chosen ordering.
    \item \emph{Feature permutation-invariance ({\color{sapphire}$S_F$} in \cref{fig:intro}).} Graph node features often represent different quantities from different domains and can vary substantially across different graphs, which poses challenges in identifying a shared feature vocabulary. Hence, a GFM should not rely on feature ordering or dimensionality. 
    This invariance ensures robust predictions irrespective of how the features are arranged across different datasets and tasks.
\end{enumerate}
Having established this \emph{triple-symmetry} criterion, we then characterize the space of linear transformations that are equivariant to permutations of nodes and labels, and invariant to permutations of features.
We prove that the resulting \emph{triple-symmetry network} (TSNet) (see \cref{fig:intro}) is a \emph{universal approximator} of functions that take multisets as inputs and respect the aforementioned symmetries.
This is our key theoretical contribution, as it allows us to apply TSNets on the neighborhoods of the input graph to define a symmetry preserving message passing scheme. 

The resulting TSNet scheme can be interpreted as a message passing network in which messages are not only sent between nodes but also across feature and label channels via sum aggregation. We then extend this, to introduce a more general GFM recipe: use \emph{any} GNN aggregation to pass messages  between nodes, feature channels, and labels. Each such GNN yields a GFM which transfer across graphs of arbitrary sizes and with any number of feature and label channels, for both node classification and regression.
The resulting GFM is trained on a given set of graphs with their node features and labels, and then transferred to new graphs with different types of features and labels, unseen during training. To summarize, we  make the following contributions:

\begin{figure}[t]
    \centering
    \vspace{-2.4em}
    \begin{tikzpicture}[
    scale=0.8, xshift=-2cm]
    \drawmatrix{0}{\height}{false}{feat}
    \drawmatrix{2}{\height}{false}{label}
    \node at (0.8,0.7) {\scriptsize $\mX$};
    \node at (2.6,0.7) {\scriptsize $\mY$};
  
    \drawmatrix{0}{0}{true}{feat}
    \drawmatrix{2}{0}{true}{label}
    \node at (0.7,0.7-\height) {\scriptsize $(\sigma_N,\sigma_F)\cdot\mX$};
    \node at (2.7,0.7-\height) {\scriptsize $(\sigma_N,\sigma_C)\cdot\mY$};

    \draw[thick, line width=1pt, azure, 
        arrows={Stealth[scale=0.8]-Stealth[scale=0.8]}, 
        bend left=40] 
        (-0.1,1.05+\height -2.5) 
        to node[midway, left=1pt, font=\small] {$S_N$}
        (-0.1,2.1+\height -2.5);
    \draw[thick, line width=1pt, ->, sapphire,
      arrows={Stealth[scale=0.8]-Stealth[scale=0.8]}] 
      (0.75,2.55+\height -2.5) 
        .. controls (0.95,3.1+\height -2.5) and (1.15,3.1+\height -2.5) ..
        node[midway, above=2pt, font=\small] {$S_F$}
      (1.35,2.55+\height -2.5);
    \draw[thick, line width=1pt, ->, darkred,
      arrows={Stealth[scale=0.8]-Stealth[scale=0.8]}] 
      (2.15,2.55+\height -2.5) 
        .. controls (2.35,3.1+\height -2.5) and (2.6,3.1+\height -2.5) ..
        node[midway, above=2pt, font=\small] {$S_C$}
      (2.8,2.55+\height -2.5);

    \foreach \row in {1,4} {
        \foreach \x/\nCols in {0/\numFeat, 2/\numClass} {
            \def\xRight{\x + \nCols*\cellSize}
            \def\yTopT{\height - \row*\cellSize}
            \def\yBotT{\yTopT - \cellSize}
            \def\yTopB{-\row*\cellSize}
            \def\yBotB{\yTopB - \cellSize}
            
            \draw[line width=1.5pt, draw=azure] (\x, \yTopT) rectangle (\xRight, \yBotT);
            
            \draw[line width=1.5pt, draw=azure] (\x, \yTopB) rectangle (\xRight, \yBotB);
        }
    }

    \foreach \col in {2,4} {
        \def\xLeft{\col*\cellSize}
        \def\xRight{\xLeft + \cellSize}
        \def\yTopT{\height}
        \def\yBotT{\height - \numNodes*\cellSize}
        \def\yTopB{0}
        \def\yBotB{-\numNodes*\cellSize}

        \draw[line width=1.5pt, draw=sapphire] (\xLeft, \yTopT) rectangle (\xRight, \yBotT);

        \draw[line width=1.5pt, draw=sapphire] (\xLeft, \yTopB) rectangle (\xRight, \yBotB);
    }

    \foreach \col in {0,2} {
        \def\xLeft{2 + \col*\cellSize}
        \def\xRight{\xLeft + \cellSize}
        \def\yTopT{\height}
        \def\yBotT{\height - \numNodes*\cellSize}
        \def\yTopB{0}
        \def\yBotB{-\numNodes*\cellSize}

        \draw[line width=1.5pt, draw=darkred] (\xLeft, \yTopT) rectangle (\xRight, \yBotT);

        \draw[line width=1.5pt, draw=darkred] (\xLeft, \yTopB) rectangle (\xRight, \yBotB);
    }

    \draw[thick, draw=black!70, line width=4.5pt, arrows={-Stealth[scale=0.8]}] (3.5,-0.8 +\height) .. controls (7,-2.0 +\height) .. (10.5,-0.8 +\height);
    \draw[thick, draw=black!70, line width=4.5pt, arrows={-Stealth[scale=0.8]}] (3.5,-1.0) .. controls (7,0.2) .. (10.5,-1.0);
    
    \node[
        draw=black,
        fill=blue!20!white,
        text=black,
        font=\bfseries,
        minimum width=3.6cm,
        minimum height=1.9cm,
        rounded corners=7pt,
        drop shadow,
        top color=white, 
        bottom color=blue!20,
        fill opacity=0.6,  
        align=center  
      ] 
      at (7,0.5)
      {Triple-symmetry\\network};

    \drawmatrix{\rightPred}{\height}{false}{pred}
    \drawmatrix{\rightPred}{0}{true}{pred}

    \draw[thick, line width=1pt, azure, 
        arrows={Stealth[scale=0.8]-Stealth[scale=0.8]}, 
        bend right=40] 
        (\rightPred+\cellSize * \numClass+0.1,1.05+\height -2.5) 
        to node[midway, right=1pt, font=\small] {$S_N$}
        (\rightPred+\cellSize * \numClass +0.1,2.1+\height -2.5);
    \draw[thick, line width=1pt, ->, darkred,
      arrows={Stealth[scale=0.8]-Stealth[scale=0.8]}] 
      (\rightPred+0.15,2.55+\height -2.5) 
        .. controls (\rightPred+0.35,3.1+\height -2.5) and (\rightPred+0.6,3.1+\height -2.5) ..
        node[midway, above=2pt, font=\small] {$S_C$}
      (\rightPred+0.8,2.55+\height -2.5);

    \foreach \row in {1,4} {
        \def\xRight{\rightPred + \numClass*\cellSize}
        \def\yTopT{\height - \row*\cellSize}
        \def\yBotT{\yTopT - \cellSize}
        \def\yTopB{-\row*\cellSize}
        \def\yBotB{\yTopB - \cellSize}
        
        \draw[line width=1.5pt, draw=azure] (\rightPred, \yTopT) rectangle (\xRight, \yBotT);
        
        \draw[line width=1.5pt, draw=azure] (\rightPred, \yTopB) rectangle (\xRight, \yBotB);
    }

    \foreach \col in {0,2} {
        \def\xLeft{\rightPred + \col*\cellSize}
        \def\xRight{\xLeft + \cellSize}
        \def\yTopT{\height}
        \def\yBotT{\height - \numNodes*\cellSize}
        \def\yTopB{0}
        \def\yBotB{-\numNodes*\cellSize}

        \draw[line width=1.5pt, draw=darkred] (\xLeft, \yTopT) rectangle (\xRight, \yBotT);

        \draw[line width=1.5pt, draw=darkred] (\xLeft, \yTopB) rectangle (\xRight, \yBotB);
    }
    \node at (\rightPred+0.6,0.7) {\scriptsize $\hat\mY$};
    \node at (\rightPred+0.7,0.7-\height) {\scriptsize $(\sigma_N,\sigma_C)\cdot\hat \mY$};
    
    \node[fill=white, opacity=0.0] at (14,1) {};
\end{tikzpicture}
    \caption{The input to a triple-symmetry network is a feature matrix $\mX$ and (possibly masked) label matrix $\mY$. The encoder must be equivariant to element-wise permutations $\sigma_N\in S_N$ (affecting the rows of both $\mX$ and $\mY$), equivariant to class label permutations $\sigma_C\in S_C$ (affecting the columns of $\mY$), and invariant to feature permutations $\sigma_F\in S_F$ (affecting the columns of $\mX$).}
    \label{fig:intro}
    \vspace{-1.6em}
\end{figure}
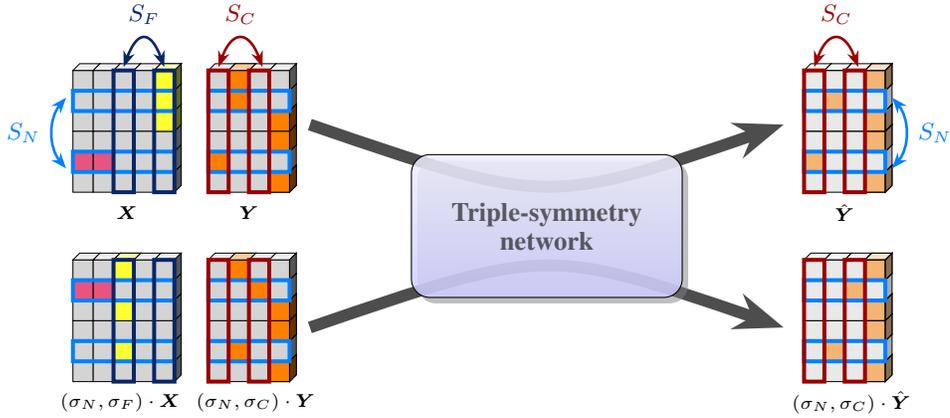

\begin{itemize}[leftmargin=0.2cm,topsep=0pt, parsep=.5pt,itemsep=2pt]
    \item 
    \looseness=-1 \textbf{Universality results.} We characterize the space of linear maps equivariant to permutations of nodes, features, and labels, and prove that the resulting multilayer architecture is a universal  approximator of functions that take multiset as inputs and respect nodes, features and label symmetries (\Cref{sec:theory}).
    \item \textbf{A general recipe for GFMs.} We introduce triple-symmetry graph neural networks (TS-GNNs), a modular and practical method to transform any GNN aggregation into a GFM. This allows the GNN to operate on arbitrary graphs, feature, and label sets (\Cref{sec:recipe}).

    \item \looseness=-1\textbf{Empirical validation.} We evaluate our framework on 29 real-world node classification datasets, showing consistent performance improvements over the corresponding standard  GNN backbone that TS-GNN uses (e.g., GraphSAGE, GAT \citep{hamilton2018inductiverepresentationlearninglarge,velic2018graph}). We also establish TS-GNN as the first graph foundation model to show improved zero-shot accuracy with increasing training data (\Cref{subsec:train_size}).
\end{itemize}

\section{Background: Graphs, Groups, and Equivariance}
\label{sec:backgr}

\looseness=-1
\textbf{Graphs and Vectors.} 
Vectors, matrices and tensors are denoted in bold, and for any tensor $\mQ$, we denote its 
$i$-th row by $\mQ_{i,:,:}$, its $j$-th column by $\mQ_{:,j,:}$ and its $z$-th slice by $\mQ_{:,:,z}$. The vector of $N$ ones and the all-ones $N\times M$ matrix are denoted by  $\oneVec_N$ and $\oneVec_{N, M}$, respectively.  Throughout the paper, $N$ denotes the number of elements in a set or the number nodes in a graph; $F$ denotes the number of input features per element; and $C$ denotes the number of output classes or regression targets. 
We focus on node-level tasks and consider an \emph{undirected}, unweighted graph $G = (\gV, \gE, \mX, \mY)$, where $\mX \in \sR^{N \times F}$ is a matrix of node features and $\mY \in \sR^{N \times C}$ is a matrix encoding labels over $C$ classes (for classification) or  $C$ targets (for regression). 
We represent the topology of the graph $(\gV, \gE)$ using an adjacency matrix $\mA \in \{0,1\}^{N \times N}$.

\textbf{Groups, permutations and arrays.} Let $S_N$ be the group of permutations (i.e. the \emph{symmetric group}) over the set $[N] = \{1, 2, \ldots , N\}$. 
 We define the action of $S_N \times S_F\times S_C$ on $\sR^{K\times N\times (F+C)}$ by 
\[
    \left((\sigma_N, \sigma_F, \sigma_C)\cdot \mX\right)_{i,j} =
    \begin{cases}
        \mX_{:,\sigma_N^{-1}(i),\sigma_F^{-1}(j)} & \text{if } j\in[F], \\
        \mX_{:,\sigma_N^{-1}(i),F+\sigma_C^{-1}(j-F)} & \text{otherwise } 
    \end{cases}
\]
for all $(\sigma_F, \sigma_C, \sigma_N) \in S_F \times S_C\times S_N$ and $\mX \in \sR^{K\times N\times (F+C)}$. That is, $\sigma_F$ permutes the first $F$ columns (features), $\sigma_C$ permutes the last $C$ columns (labels), and $\sigma_N$  
 permutes the rows.

\looseness=-1
\textbf{Equivariance and Invariance.} We focus on functions that are equivariant or invariant under joint node-, feature- and label-permutations. Namely, a function $f: \sR^{K_1\times N\times(F+C)} \to \sR^{K_2\times N\times(F+C)}$ is \emph{$(S_N\times S_F\times S_C)$-equivariant} if $f\left( (\sigma_N, \sigma_F, \sigma_C) \cdot \mX \right) = (\sigma_N, \sigma_F, \sigma_C) \cdot f(\mX)$ 
for all $(\sigma_N, \sigma_F, \sigma_C)\in S_N\times S_F\times S_C$ and for all $\mX\in\sR^{K_1\times N\times (F+C)}$. 
Similarly, a function $f: \sR^{K_1\times N\times (F+C)} \to \sR^{K_2\times N\times C}$ is \emph{$(S_N\times S_C)$-equivariant} and \emph{$S_F$-invariant} if $f\left( (\sigma_N, \sigma_F, \sigma_C) \cdot \mX \right) = (\sigma_N, \sigma_C) \cdot f(\mX)$ 
for all $(\sigma_N, \sigma_F, \sigma_C)\in S_N\times S_F\times S_C$ and for all $\mX\in\sR^{K_1\times N\times (F+C)}$. 

\textbf{Graph neural networks (GNNs).} Our framework provides a recipe for converting some GNNs into GFMs. In particular, we focus on \emph{message-passing neural networks} (MPNNs), a subclass of GNNs that update the initial representations $\mX_{v}\in\sR^{K^\stepl\times N}$ of each node $v$ for $0 \leq \ell \leq L - 1$ iterations based on its own state and the state of its neighbors $\mathcal{N}_v$ as
\begin{align*}
	\mX_v^\steplplus =\sigma \left(\psi  \left(
	\mW_1^\stepl\mX_v^\stepl, \{\!\!\{\mW_2^\stepl\mX_u^\stepl\mid u\in\mathcal{N}_v\}\!\!\}\right) \right),
\end{align*}
where $\{\!\!\{\cdot\}\!\!\}$ denotes a multiset, $\sigma$ is a nonlinearity,   $\mW_1^\stepl,\mW_2^\stepl\in\sR^{K^\steplplus\times K^\stepl}$ are weight matrices, $K^\stepl$ is the dimensionality of node embeddings at layer $\ell$, $\gM(\sR^{K\times N})$ denote the space of multiset of elements in $\sR^{K\times N}$ and $\psi:\sR^{K\times N}\times\gM(\sR^{K\times N})\to\sR^{K\times N}$ is an \emph{aggregation} function that is invariant to permutations of the elements in its multiset argument. 

\section{Related Work}

\looseness=-1
\textbf{Graph foundation models for node classification.} Traditional graph neural networks (GNNs) \citep{chen2020simple,Kipf16,xu18, velic2018graph} assume fixed feature ordering and predefined feature sets. This mode of operation severely restricts their ability to generalize across graphs with unknown or dynamically varying feature distributions. Unlike domains such as natural language processing, where pre-defined embeddings allow for a shared vocabulary, this approach is unsuitable when dealing with entirely unknown or dynamic graph attributes. GraphAny \citep{zhao2025fullyinductivenodeclassificationarbitrary} addresses this limitation by using a Transformer-based mixer that dynamically combines predictions from multiple closed-form least-squares classifiers. 
Furthermore, \citep{zhao2025fullyinductivenodeclassificationarbitrary} is the first to introduce the necessary of node-equivariance, label-equivariance and feature-invariance in fully inductive node classification.
While empirically effective, GraphAny simply performs a learned average across the least-squares classifiers, which is very limiting for the model design space. In contrast, we offer a class of GFMs derived from first principles, offering theoretical guarantees. Existing message-passing neural networks can be seamlessly integrated into our recipe of GFMs.

\looseness=-1
\textbf{Knowledge graph foundation models.} There are recent graph foundation models dedicated for prediction with knowledge graphs (KGs), which can generalize to any KG, including novel entities and novel relation types (unseen during training), by learning transferrable relational invariants. Notable examples include RMPI \citep{geng2022relationalmessagepassingfully}, InGRAM \citep{lee2023ingraminductiveknowledgegraph}, ULTRA \citep{galkin2024foundationmodelsknowledgegraph}, TRIX \citep{zhang2025trixexpressivemodelzeroshot}, and MOTIF \citep{huang2025expressiveknowledgegraphfoundation}. The expressive power of these knowledge graph foundation models has been recently investigated by \citet{huang2025expressiveknowledgegraphfoundation}. Although these methods demonstrate promising generalization capabilities, they only apply to (knowledge) graphs without node features and lack any theoretical guarantees of universality. This limits their applicability to graphs with rich node features, which is critical for node-level prediction tasks.

\looseness=-1
\textbf{Invariant and equivariant networks.} Expressivity of equivariant and invariant neural networks is well-studied within the graph machine learning community. In their seminal work on DeepSets, \citet{zaheer2017deep} established universality of $S_N$-invariant networks over sets of size $N$. Subsequently, a universality result for permutation subgroups $G \leq S_N$ is presented by \citet{maron2019universalityinvariantnetworks}. This universality result is extended by \citet{segol2019universal} to show that DeepSets can approximate any $S_N$-equivariant functions. Further advancements by \citet{maron2020learningsetssymmetricelements} considered both $S_N \times H$-invariant and equivariant functions, where $H \leq S_D$ and $D$ denotes the number of input features. The proof techniques introduced in \citet{segol2019universal} were partially adopted in \citet{maron2020learningsetssymmetricelements} and also utilized to prove universality in other contexts. For example, universality for point-cloud-equivariant functions with symmetry group $S_N \times SO(3)$ and for higher-dimensional equivariant representations was established by \citet{dym2020universalityrotationequivariantpoint} and \cite{finkelshtein2022simpleuniversalrotationequivariant}, respectively. Our work utilizes the proof techniques in \citep{segol2019universal,maron2020learningsetssymmetricelements} to extend the known universality guarantees to our triple-symmetry GFM framework.

\textbf{Scaling Laws in Graph Learning.}
Scaling laws have played a central role in understanding and developing foundation models in domains such as natural language and computer vision, where performance is known to improve predictably with increased model and dataset size. \citet{kaplan2020scaling} introduced the original scaling laws for language models, demonstrating that performance improves with larger models and more data. \citet{hoffmann2022training} later refined this trend, showing that the models in \cite{kaplan2020scaling} were under-trained and that slightly better performance could be achieved for smaller models. Similar scaling behaviors have been observed in vision, notably in Vision Transformers \citep{dosovitskiy2021imageworth16x16words} and large-scale contrastive models such as CLIP \citep{radford2021learningtransferablevisualmodels}.

\looseness=-1
In graph learning, scaling laws remain relatively underexplored. A first investigation is \citet{liu2024towards}, which studies classical GNNs on graph-level classification tasks under varying model and dataset scales. However, this study is limited to traditional, task-specific architectures and does not address GFMs, which aim to generalize across diverse graphs, label spaces, or feature domains. It remains unclear whether scaling trends observed in narrowly-scoped GNNs also hold in this more challenging and general setting. Our framework is the first to demonstrate scaling-law behavior in GFMs, where performance improves consistently as the number of diverse training graphs increases (see \cref{subsec:train_size}).

\section{Universal Approximation of Functions With Triple-Symmetries}
\label{sec:theory}
\looseness=-1
To derive a GFM that generalizes across arbitrary graphs and across feature and label sets, we first define universal approximators of multiset functions that respect \emph{triple symmetries}: permutations of nodes, features, and labels. In \Cref{subsec:lin}, we characterize all linear maps that are equivariant to these permutations and. By composing the linear layers with nonlinearities, we obtain a symmetry-preserving networks  in \Cref{subsec:universality} with a universal approximation property. We later use these functions in \Cref{sec:recipe} on each node’s neighborhood to obtain a message-passing scheme for graph data.

\subsection{Characterizing Equivariant Linear Layers}
\label{subsec:lin}
To define multilayer neural networks that respect triple-symmetries, we first characterize the linear layers  that are equivariant to  node-, label- and feature- permutations (``Equivariance Everywhere All At Once''). Our result builds on the rich literature on learning on sets  \citep{zaheer2017deep,segol2019universal,maron2020learningsetssymmetricelements} and extends these techniques to our symmetry groups. To better locate our result, we first discuss existing  characterizations for DeepSets \citep{zaheer2017deep} and for Deep Sets for Symmetric elements.

\textbf{DeepSets.} A seminal result for learning on sets is presented by \citet{zaheer2017deep}, who characterize all linear maps of the form $T: \sR^{N\times F_1}  \to \sR^{N\times F_2}$ that are $S_N$-equivariant, as
\[
    T(\mX) = \oneVec_{N,N}\mX\mLambda_1 + \mX\mLambda_2, \text{ where } \mLambda_1,\mLambda_2 \in \sR^{F_1\times F_2}.
\]

\textbf{Deep sets for symmetric elements (DSS).} The characterization of $S_N$-equivariant functions introduced by \citet{zaheer2017deep} is extended by \citet{maron2020learningsetssymmetricelements} to include an additional symmetry across the feature dimension. Specifically, they characterize all $S_N \times H$-equivariant linear maps, where $H \subseteq S_F$ is a subgroup of the feature permutation group $S_F$. Our interests lies in the simpler and more common case in which $H=S_F$, corresponding to the full feature permutation group. In this setting, \citet{maron2020learningsetssymmetricelements} characterize all linear maps of the form $T: \sR^{N \times F\times K_1} \to \sR^{N \times F\times K_2}$ that are ${(S_N \times S_F)}$-equivariant; that is, for each $k_2\in[K_2]$, the corresponding output $T(\mX)_{:,:,k_2}\in\sR^{N\times F}$ is
\begin{equation}
    \label{eq:dss_lin}
    T(\mX)_{:,:,k_2} = \oneVec_{N,N} \mX^{(1)}_{:,:,k_2} \oneVec_{F,F}
    + \oneVec_{N,N} \mX_{:,:,k_2}^{(2)}
    + \mX_{k_2}^{(3)} \oneVec_{F,F}
    + \mX_{k_2}^{(4)},
\end{equation}
where $\forall i\in[4], \mX^{(i)}_{:,:,k_2}=\sum_{k_1}^{K_1}\mX_{:,:,k_1}\Lambda^{(i)}_{k_1,k_2}$ and $\mLambda^{(i)}\in \sR^{K_1\times K_2}$. Note that symmetry across both axes of the feature matrix requires lifting the 2D function inputs used in DeepSets to 3D tensors in DSS and in our framework.

\textbf{The limitations of semi-supervised node-classification.} Most existing GNNs and set-based architectures, such as DeepSets or DSS, operate within the classical semi-supervised learning framework: each node in the input graph is associated with a feature vector, but only a subset of nodes have known labels during training.  In this setup, the model learns to map node embeddings to the specific label set observed during training. Consequently, the model is inherently tied to this fixed label set and cannot generalize effectively to new or varying label sets encountered at test time.

\looseness=-1
\textbf{The label inpainting task.} To enable generalization across label sets, we move beyond the classical semi-supervised regime and its dependence on the fixed set of training labels and feature, and fixed graph. In \emph{label inpainting}, each datapoint is a graph $G = (\gV, \gE, \mX, \mY)$ with node features $\mX\in\sR^{N\times F}$, and labels in $[C]$. The full graph and node features are given as inputs, along with the labels of a partial set of nodes $L\subset[N]$. The goal is to inpaint (predict) the labels of the remaining masked nodes $R=[N]\setminus L$. We consider a supervised setting in which the dataset consists of many graphs, each with a different set of revealed label nodes. Each training graph has the ground-truth labels of the masked nodes  available for supervision during training, but masked labels are hidden in test graphs. Note that difference data points, i.e. graphs at training or inference, may differ in their node features, label sets, and topology. 

\textbf{Our framework for triple-symmetry.} To build a layered architecture consisting of equivariant linear layers and nonlinearities, we first find all linear alyers that respect triple-symmetries. 
The following proposition provides a complete characterization of the linear transformations $T = (T_1, T_2): \sR^{N \times (F+C)}  \to \sR^{N \times (F+C)}$ that are $(S_N \times S_F \times S_C)$-equivariant. See \Cref{app:lin} for more details.

\begin{proposition}
    \label{prop:lin}
    A linear function of the form $ T=(T_1,T_2)\colon \sR^{N \times F\times K_1}\times\sR^{N \times C\times K_1} \to \sR^{N \times F\times K_2}\times\sR^{N \times C\times K_2}$
    is $(S_N \times S_F \times S_C)$-equivariant if and only if there exist $\mLambda^{(1)},\ldots,\mLambda^{(12)}\in\sR^{K_1\times K_2}$ such that for every $\mX \in \sR^{N \times F\times K_1},\mY \in \sR^{N \times C\times K_1}$ and $k_2\in[K_2]$, the corresponding outputs $T_1(\mX,\mY)\in\sR^{N\times F\times K_2}$ and $T_2(\mX,\mY)\in\sR^{N\times C\times K_2}$ are given by
    \begin{equation}
        \label{eq:lin}
        \begin{split}
            T_1(\mX,\mY)_{:,:,k_2} &= \left(\oneVec_{N,N} \mX^{(1)}_{:,:,k_2}
                + \mX^{(2)}_{:,:,k_2}\right) \oneVec_{F, F}
                + \oneVec_{N,N} \mX^{(3)}_{:,:,k_2}
                + \mX^{(4)}_{:,:,k_2}\\
                &+ \left(\oneVec_{N,N} \mY^{(5)}_{:,:,k_2} 
                + \mY^{(6)}_{:,:,k_2}\right) \oneVec_{C, F},\\
            T_2(\mX,\mY)_{:,:,k_2} &= \left(\oneVec_{N,N} \mY^{(1)}_{:,:,k_2} 
                + \mY^{(2)}_{:,:,k_2}\right) \oneVec_{C, C} 
                + \oneVec_{N,N} \mY^{(3)}_{:,:,k_2}
                + \mY^{(4)}_{:,:,k_2}\\
                &+ \left(\oneVec_{N,N} \mX^{(5)}_{:,:,k_2} 
                + \mX^{(6)}_{:,:,k_2}\right) \oneVec_{F, C} ,
        \end{split}
    \end{equation}
    where for every $i\in[6]$, we have $\mX^{(i)}_{:,:,k_2} = \sum_{k_1=0}^{K_1} \mX_{:,:,k_1}\Lambda^{(i)}_{k_1,k_2}$ and $ \mY^{(i)}_{:,:,k_2} = \sum_{k_1=0}^{K_1} \mY_{:,:,k_1}\Lambda^{(i+6)}_{k_2,k_1}$.
\end{proposition}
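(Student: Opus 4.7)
The plan is to adapt the strategy of \citet{maron2020learningsetssymmetricelements} in three stages: reduce to single-channel maps, decompose the linear map into four blocks according to whether the source and target live in the $\mX$- or $\mY$-component, and characterize each block separately. Sufficiency is a routine verification that each candidate basis element is $(S_N\times S_F\times S_C)$-equivariant; the substance of the argument is necessity.

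Since none of $S_N$, $S_F$, $S_C$ acts on the channel indices, any equivariant linear map factors as a sum over $(k_1,k_2)$ of single-channel equivariant maps weighted by scalars, and these scalars collect into the twelve matrices $\Lambda^{(1)},\ldots,\Lambda^{(12)}\in\R^{K_2\times K_1}$. It therefore suffices to classify single-channel equivariant linear maps $T=(T_1,T_2)\colon \R^{N\times F}\times \R^{N\times C}\to \R^{N\times F}\times \R^{N\times C}$. By linearity, write $T_1 = T_{11} + T_{12}$ and $T_2 = T_{21} + T_{22}$, where $T_{ab}$ sends the $b$-th input block to the $a$-th output block (block~$1$ is $\mX$, block~$2$ is $\mY$).

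For the diagonal block $T_{11}$, the group $S_C$ acts trivially on both its input and its output, so the equivariance condition reduces to $(S_N\times S_F)$-equivariance, and the DSS characterization in~\eqref{eq:dss_lin} supplies exactly the four basis elements $\oneVec_{N,N}\mX\oneVec_{F,F}$, $\mX\oneVec_{F,F}$, $\oneVec_{N,N}\mX$, and $\mX$, accounting for the first four terms in the formula for $T_1$. An identical argument with the roles of $S_F$ and $S_C$ swapped handles $T_{22}$ and yields the first four terms in $T_2$. For the cross block $T_{12}\colon \R^{N\times C}\to \R^{N\times F}$, setting $\sigma_N=\sigma_C=\mathrm{id}$ in the overall equivariance relation gives $T_{12}(\mY)=(\mathrm{id},\sigma_F)\cdot T_{12}(\mY)$ for every $\sigma_F\in S_F$, which forces all columns of $T_{12}(\mY)$ to coincide and hence $T_{12}(\mY)=\vt(\mY)\,\oneVec_F^{\top}$ for some linear $\vt\colon \R^{N\times C}\to \R^N$. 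Setting $\sigma_N=\sigma_F=\mathrm{id}$ shows that $\vt$ is $S_C$-invariant, and averaging the invariance $\vt(\mY)=\vt(\sigma_C\cdot\mY)$ over $\sigma_C\in S_C$ gives $\vt(\mY)=\vt\bigl((1/C)(\mY\oneVec_C)\oneVec_C^{\top}\bigr)$, so $\vt$ factors through $\mY\oneVec_C$ as $\vt(\mY)=\vt'(\mY\oneVec_C)$ for some linear $\vt'\colon\R^N\to\R^N$. The residual $S_N$-equivariance of $\vt'$ forces $\vt'(\vy)=a\vy+b\,\oneVec_N\oneVec_N^{\top}\vy$ by the classical DeepSets characterization, producing the two additional basis elements $\mY\oneVec_{C,F}$ and $\oneVec_{N,N}\mY\oneVec_{C,F}$. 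A symmetric analysis of $T_{21}$ supplies the last two basis elements for $T_2$.

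The main obstacle is this cross-block argument: one must carefully track which subgroup of $S_N\times S_F\times S_C$ acts trivially on each side of $T_{12}$ and $T_{21}$, then exploit input-side $S_C$-invariance via group averaging to reduce to the classical $S_N$-equivariant DeepSets setting on $\R^N$, and simultaneously use output-side $S_F$-equivariance to collapse the output to a column-constant matrix. Everything else is a direct appeal to DSS or DeepSets, and assembling the six basis elements for each of $T_1$ and $T_2$ recovers the twelve-parameter family in the proposition.
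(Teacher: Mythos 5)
Your proof is correct, and it takes a genuinely different route from the paper's. The paper proceeds by brute-force representation theory: it decomposes $\sR^{K\times N\times(F+C)}$ into a direct sum of $G$-irreducibles of the form $V^N_a\otimes V^D_b\otimes E^k$ with $a,b\in\{0,1\}$ and $D\in\{F,C\}$, matches isomorphic irreducibles by dimension count plus an explicit intertwiner, and invokes Schur's lemma to read off that the space of equivariant maps is exactly twelve-dimensional per channel pair. You instead split the map into blocks $T_{11},T_{12},T_{21},T_{22}$ according to which input block feeds which output block, observe that on the diagonal blocks $S_C$ (resp.\ $S_F$) acts trivially and so the DSS four-term characterization applies directly, and for the cross blocks use a hands-on averaging argument: output-side $S_F$-equivariance with $\sigma_N=\sigma_C=\mathrm{id}$ collapses the output to a rank-one column-constant form $\vt(\mY)\oneVec_F^\top$, input-side $S_C$-invariance plus linearity and group-averaging shows $\vt$ factors through the row sum $\mY\oneVec_C$, and residual $S_N$-equivariance pins down $\vt'$ by the DeepSets characterization $\vt'(\vy)=a\vy+b\oneVec_{N,N}\vy$. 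This reconstructs the two cross-block basis elements $\mY\oneVec_{C,F}$ and $\oneVec_{N,N}\mY\oneVec_{C,F}$ (up to the paper's invertible change of basis $\mY\mapsto\mY-\oneVec_{N,N}\mY$). What your approach buys is that it is elementary and self-contained modulo DeepSets and DSS, and it makes the structure of the answer transparent (two four-parameter diagonal blocks plus two two-parameter cross blocks). What the paper's approach buys is systematicity: since Schur's lemma classifies all maps at once, it rules out missing terms with no case analysis and would extend mechanically if the symmetry group or the number of ``column types'' changed. One small remark, not a gap in your argument: in the paper's displayed formula for $T_1$ the final term $\mX^{(6)}_{k_2}\oneVec_{C,F}$ is dimensionally inconsistent and must be $\mY^{(6)}_{k_2}\oneVec_{C,F}$, as your derivation and the paper's own $L_6$ confirm.
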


\subsection{A Universal Approximation Theorem for Functions with Triple-Symmetries}
\label{subsec:universality}

In this section, we introduce \emph{equivariant triple symmetry networks (TSNets)}, based on the $(S_N\times S_F\times S_C)$-equivariant layers in \Cref{subsec:lin}. We prove universality for TSNet$_{\rm Eq}$, and then derive universality for the \emph{triple symmetry networks (TSNet)} -- an $(S_N\times S_C)$-equivariant and $S_F$-invariant architecture that serves as the core building block of our GFM.

\textbf{Triple-symmetry networks for multi-sets (TSNets).} We define an equivaraint triple symmetry network (TSNet$_{\rm Eq}$) $F_{\rm Eq}:\sR^{N\times (F+C)\times 1}\to\sR^{N\times (F+C)\times 1}$ as
\[
F_{\rm Eq} = T^{(L)}\circ \sigma\circ T^{(L-1)}\circ \sigma\circ \cdots \circ T^{(1)},
\]
where each $T^{(\ell)}: \sR^{N \times (F+C)\times K^\stepl} \to \sR^{N \times (F+C)\times K^\steplplus}$ is an $(S_N \times S_F \times S_C)$-equivariant linear layer, and $\sigma$ is a non-linearity. 

We then define the space of \emph{TSNet label inpainters}, TSNet$_{\rm LI}$ as follows. Define the \emph{label projection map} $\Pi_C$ by $\Pi_C(\mX,\mY)=\mY$ for all $(\mX,\mY)\in\sR^{N\times F\times K^{(L)}}\times\sR^{N\times C\times K^{(L)}}$. A triple symmetry network $F:\sR^{N\times (F+C)\times 1}\to\sR^{N\times C\times 1}$ in TSNet$_{\rm LI}$ if it is of the form $F=\Pi_C\circ F_{\rm Eq}$, where $F_{\rm Eq}:\sR^{N\times (F+C)\times 1}\to\sR^{N\times (F+C)\times 1}$ is a TSNet$_{\rm Eq}$.

Observe that $\Pi_C$ is $S_F$-invariant and each intermediate linear layer $T^{(\ell)}$ is $(S_N \times S_F \times S_C)$-equivariant by construction. Since equivariance and invariance are preserved under compositions, it follows directly that the resulting network $F$ is both $(S_N \times S_C)$-equivariant and $S_F$-invariant, as required. We set $K^{(0)} = K^{(L)} = 1$ so that the network can accept the initial feature and label matrices and outputs predictions with the same shape as the ground-truth labels.

We now state our main result: TSNets are not only symmetry-preserving but are also universal approximators, i.e., capable of approximating any continuous function that respects the triple-symmetries. More accurately, this guarantee holds over any symmetric compact domain which is disjoint from some small set, called the exclusion set $\gE$ (\cref{def:exclusion_set}). 

\begin{lemma}
    \label{lem:universality_equivariance}
    Let $\gK \subset \sR^{N\times (F+C)}$ be a compact domain such that $\gK=\cup_{g\in S_N\times S_F\times S_C}g\gK$ 
    and $\gK\cap \gE=\emptyset$, where $\mathcal{E}\subset \sR^{N\times (F+C)}$ is the exclusion set corresponding to $\sR^{N\times (F+C)}$ (\cref{def:exclusion_set}). Then, $\text{TSNet}_{\rm Eq}$ is a universal approximator (i.e. in $L_\infty$) of continuous $\mathcal{K}\to \sR^{N\times (F+C)}$ functions that are $(S_N\times S_F\times S_C)$-equivariant.
\end{lemma}

\begin{theorem}
    \label{thm:universality}
     Let $K \subset \sR^{N\times (F+C)}$ be a compact domain such that $\gK=\cup_{g\in S_N\times S_F\times S_C}gK$ and $\gK\cap \gE=\emptyset$, where $\mathcal{E}\subset \sR^{N\times (F+C)}$ is the exclusion set corresponding to $\sR^{N\times (F+C)}$ (\cref{def:exclusion_set}). Then, TSNet$_{\rm LI}$  is a universal approximator of continuous $\mathcal{K}\to \sR^{N\times C}$ functions that are $(S_N\times S_C)$-equivariant and $S_F$-invariant.
\end{theorem}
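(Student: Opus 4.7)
The plan is to follow the classical template of \citet{segol2019universal} and \citet{maron2020learningsetssymmetricelements}, adapted to the triple-symmetry setting. First, I would use Stone--Weierstrass on the compact domain $\gK$ to reduce the continuous approximation problem to polynomial approximation: since $\sigma$ can approximate polynomials uniformly on compacts (standard for non-polynomial activations), it suffices to realize every $(S_N\times S_C)$-equivariant and $S_F$-invariant polynomial $p:\gK\to\sR^{N\times C}$ as a TSNet with polynomial activation. I would then reduce equivariance to invariance by a standard tagging/polarization argument: the $(n,c)$-entry $p_{n,c}$ of an $(S_N\times S_C)$-equivariant polynomial is invariant under the stabilizer of $(n,c)$, so it suffices to approximate scalar $(S_{N-1}\times S_F\times S_{C-1})$-invariants and re-symmetrize; the label projection $\pi$ together with the final $(S_N\times S_F\times S_C)$-equivariant linear layer from \cref{prop:lin} absorbs this symmetrization.

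Next, I would exhibit a finite family of polynomial generators of the invariant ring of $S_N\times S_F\times S_C$ acting on $\sR^{N\times(F+C)}$. By elementary invariant theory for products of symmetric groups, these generators can be chosen as power sums of mixed multidegree, for instance $\sum_{n}\bigl(\sum_j X_{n,j}^{a}\bigr)^{b}\bigl(\sum_k Y_{n,k}^{a'}\bigr)^{b'}$ and their row-localized variants. The key construction is to realize such power sums with TSNets: the six linear operations in \cref{prop:lin} provide exactly the needed pooling primitives --- row averaging ($\oneVec_{N,N}\cdot$), feature averaging ($\cdot\,\oneVec_{F,F}$), label averaging ($\cdot\,\oneVec_{C,C}$) and the cross-block broadcasts via $\oneVec_{F,C}$ and $\oneVec_{C,F}$ --- all combined with arbitrary channel mixing through the $\mLambda^{(i)}$. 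Interleaving such layers with a polynomial activation creates entrywise products of entries with pooled/broadcast quantities, and an induction on degree (analogous to the argument in \citep{maron2020learningsetssymmetricelements}) shows that any monomial in these generators can be computed in the top channel.

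The hard part, and the reason for the exclusion set $\gE$, is orbit separation: the chosen finite family of generators must separate $(S_N\times S_F\times S_C)$-orbits on $\gK\setminus\gE$ so that any invariant polynomial can be written as a polynomial in them. The set $\gE$ of \cref{def:exclusion_set} is designed precisely to exclude the degenerate configurations on which the power-sum embedding fails to be injective on orbits --- typically configurations where two rows, or two feature columns, become indistinguishable under the available pooled polynomials. Once orbit separation holds on $\gK\setminus\gE$, combining the monomial-realization step with Stone--Weierstrass on the orbit space yields uniform approximation on $K$, and the hypothesis $\gK=\bigcup_{g\in S_N\times S_F\times S_C} gK$ then promotes the bound from $K$ to all of $\gK$, completing the argument.
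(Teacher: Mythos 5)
Your high-level template (Stone--Weierstrass reduction, restriction to scalar entries, generators of an invariant ring, realization by equivariant layers, orbit separation controlled by $\gE$) matches the paper's overall strategy, and your observation that $p_{n,c}$ is $(S_{N-1}\times S_F\times S_{C-1})$-invariant is exactly the paper's first reduction. But there is a genuine gap between that reduction and your next step: you then propose to generate the invariant ring of the \emph{full} group $S_N\times S_F\times S_C$, whereas what you actually need to express are \emph{stabilizer} invariants, which are not polynomials in full-group invariants (already $\mX_{1,F+1}$ itself is an $(S_{N-1}\times S_F\times S_{C-1})$-invariant that no $S_N\times S_F\times S_C$-invariant can produce). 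The paper's bridge is \cref{lem:S_n-1}, \cref{lem:S_f-1} and \cref{lem:S_c-1}, which write a stabilizer invariant as $\sum_{\vkappa}\mX_{:,1}^{\vkappa}\,q_\vkappa(\mX)$ with $q_\vkappa$ fully invariant, together with the DMP generation result of \cref{lem:double_symm_universal}. This is not a routine detail: it is precisely the step at which Theorem 3 of \citet{maron2020learningsetssymmetricelements} breaks for general subgroups $H\leq S_F$ (the ``fix $\mX_2,\dots,\mX_N$'' argument does not yield an $H$-equivariant polynomial), and the monomial decomposition only exists because one works with the whole of $S_F$. Your parenthetical ``row-localized variants'' might be gesturing at this, but as written the proposal has no bridge, and the plan does not explain how a TSNet would distinguish the pinned row and column while remaining equivariant.

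Second, the realization step (``interleave with a polynomial activation, induct on degree'') glosses over the actual mechanism. The layers of \cref{prop:lin} only provide global pools and broadcasts; the paper instead builds an $S_N$-equivariant map $Q^{(F)}$ by applying a fixed local $(S_{F-1}\times S_C)$-descriptor $q^{(F)}$ row by row after appending the global column sums $\sum_i\mX_{i,:}$ as an auxiliary channel, and then composes with a local $S_{N-1}$-descriptor. The exclusion set is not there to make ``the power-sum embedding injective on orbits'' in the abstract sense you describe: it is there specifically to make the appended sum vector have pairwise distinct entries, which is what forces the per-row permutations returned by $q^{(F)}$ to coincide, so the local descriptors glue into a global one. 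Finally, $p_{1,1}$ and $p_{1,F+1}$ are not expressed as polynomials in the generators; they are factored through the orbit quotient and the continuous factor is approximated by \cref{thm:UAT}. Both pieces would need to be made precise before your sketch closes the argument.
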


\textbf{Relations to the literature.}
Our result is closely related to Theorem 3 of \cite{maron2020learningsetssymmetricelements}, which establishes a universality result for $(S_N \times H)$-equivariant networks, where $H \subseteq S_F$. Specifically, the theorem states that for any subgroup $H \subseteq S_F$ and any compact domain $\gK \subset \sR^{N \times F}$ that is closed under the action of $(S_N \times H)$ and avoids the exception set $\gE'$ (defined in \cref{def:exclusion_set_dss}). If $H$-equivariant networks are universal for $H$-equivariant functions. Then, $(S_N \times H)$-equivariant networks are universal approximators (in the $L_\infty$ norm) for continuous $(S_N \times H)$-equivariant functions defined on $\gK$.
However, their proof contains a subtle but important flaw. They introduce an $H$-equivariant polynomial $p_1 : \sR^{N \times F} \to \sR^F$ and write ``If we fix $\mX_2, ..., \mX_N$, then $p_1(\mX_1, \ldots, \mX_N)$ is an $H$-equivariant polynomial in $\mX_1$'', which does not hold generally.
By definition, $H$-equivariance of $p_1$ requires the group action to be applied consistently across all arguments: applying the action to only one row while fixing the others violates equivariance.
We refer the reader to \cref{app:dss} for a detailed discussion on this issue, why it invalidates the general universality claim for arbitrary subgroups $H\leq S_F$ and how, by leveraging techniques from \cite{segol2019universal}, the claim can still be upheld in the special case $H=S_F$ -- a key insight that allows us to show universality for TSNets in \cref{thm:universality}.

\section{A Recipe for Graph Foundation Models for Node Property Prediction}
\label{sec:recipe}

In this section, we build on the  results from \cref{subsec:lin} to derive a practical architecture for node property inpainting (see \cref{fig:architecture}). Specifically, we construct a learnable layer that is equivariant to the symmetry group $(S_N\times S_F\times S_C)$ based on any aggregtion scheme. In the graph setting, this elevates the MPNN  to a symmetry-aware foundation model that generalizes across diverse feature sets and labels, unlike traditional GNNs. 

We start from \Cref{eq:lin}, which can be interpreted as passing messages by applying \emph{sum aggregation} along the feature, label, and node axes. We generalize these sums to arbitrary permutation-invariant operators (e.g., mean, max, or attention style aggregations). Setting the operators to sums recovers the linear structure of \Cref{eq:lin}.

\textbf{Triple-symmetry graph neural networks (TS-GNNs).}
Consider inputs of the form 
$(\mX,\mY,\mA)\in\sR^{N\times (F+C)\times 1}\times [0,1]^{N\times N}$,  
where $\mX \in \sR^{N \times F}$ is a feature matrix, $\mY \in \sR^{N \times C}$ is a label matrix, and $\mA \in [0,1]^{N \times N}$ is the graph adjacency matrix. 
Formally, we define a triple-symmetry graph neural network $F:\sR^{N\times (F+C)\times 1}\times [0,1]^{N\times N}\to\sR^{N\times C\times 1}$, as
\[
    F = \pi\circ T^{(L)}\circ \sigma\circ T^{(L-1)}\circ \sigma\circ \cdots \circ T^{(1)},
\]
\[
T^{(\ell)}: \sR^{N \times (F+C)\times K^\stepl}\times [0,1]^{N\times N} \to \sR^{N \times (F+C)\times K^\steplplus}\times [0,1]^{N\times N},
\]
where $\sigma$ is a non-linearity (e.g., ReLU) and each $T^\stepl$ is an $(S_N \times S_F \times S_C)$-equivariant layer with a general aggregation along the feature, label and node axes, that acts jointly on the node features and node labels, while also using the adjacency matrix.
Each $T^\stepl$ updates the feature $\mX_v^\stepl \in \sR^{K^\stepl\times F}$ and label $\mY_v^\stepl \in \sR^{K^\stepl\times C}$ representations of node $v \in \gV$ at layer $\ell$ as
\begin{align*}
    \mX_v^\steplplus &=\psi\big(\mW_1^\stepl\mX_v^\stepl, \{\!\!\{\mW_2^\stepl\mX_u^\stepl\mid u\in\mathcal{N}_v\}\!\!\}\big)\\
    &+\psi\big(\mW_3^\stepl\nu(\mX_v^\stepl)\oneMat_{1,F}, \{\!\!\{\mW_{4}^\stepl\nu(\mX_u^\stepl)\oneMat_{1,F}\mid u\in\mathcal{N}_v\}\!\!\}\big)\\
    &+\psi\big(\mW_{5}^\stepl\nu(\mY_v^\stepl)\oneMat_{1,F}, \{\!\!\{\mY_{6}^\stepl\nu(\mY_u^\stepl)\oneMat_{1,F}\mid u\in\mathcal{N}_v\}\!\!\}\big) + \Theta_F,\\
    \mY_v^\steplplus &=\psi\big(\mW_7^\stepl\mY_v^\stepl, \{\!\!\{\mW_8^\stepl\mY_u^\stepl\mid u\in\mathcal{N}_v\}\!\!\}\big)\\
    &+\psi\big(\mW_9^\stepl\nu(\mY_v^\stepl)\oneMat_{1,C}, \{\!\!\{\mW_{10}^\stepl\nu(\mY_u^\stepl)\oneMat_{1,C}\mid u\in\mathcal{N}_v\}\!\!\}\big)\\
    &+\psi\big(\mW_{11}^\stepl\nu(\mX_v^\stepl)\oneMat_{1,C}, \{\!\!\{\mW_{12}^\stepl\nu(\mX_u^\stepl)\oneMat_{1,C}\mid u\in\mathcal{N}_v\}\!\!\}\big) + \Theta_C,
\end{align*}
where $\psi$ is the node aggregation function, $\nu$ is the label/feature aggregation function, $\mW_i^\stepl\in\sR^{K^\steplplus\times K^\stepl}$, $i \in [12]$, are learnable weight matrices (replacing the coefficients $\mLambda_i$), and the terms $\Theta_F$ and $\Theta_C$ are $(S_N\times S_F\times S_C)$-equivariant feature-label mixing terms (detailed below). 
Finally, the label projection map $\Pi_C$ 
is applied to obtain node-wise predictions. Similarly to TSNets, we also set $K^{(0)} = K^{(L)} = 1$. It is easy to see that the architecture is $(S_N\times S_C)$-equivariant and $S_F$-invariant, as desired.
In this formulation, the aggregations can be based on any  MPNN, e.g., GAT, GraphSAGE, or GCNII \citep{velic2018graph,hamilton2018inductiverepresentationlearninglarge,chen2020simple}.

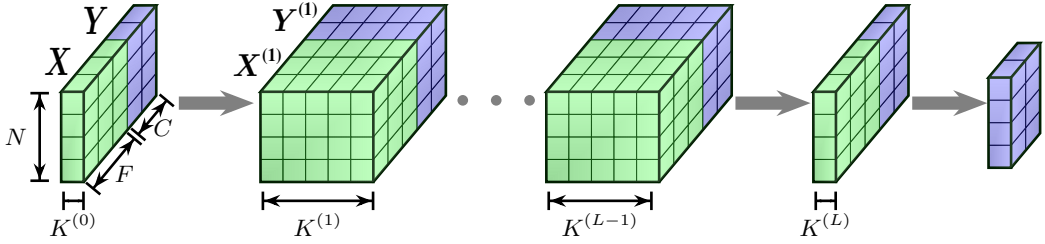
\begin{figure}[t]
    \centering
    \scalebox{1.15}{\begin{tikzpicture}[
    scale=0.8, xshift=-2cm]
    \tikzcuboid{%
    shiftx=-1.65cm,%
    shifty=0cm,%
    scale=0.3,%
    rotation=0,%
    densityx=1,%
    densityy=1,%
    densityz=1,%
    dimx=1,%
    dimy=4,%
    dimz=5,%
    scalex=1,%
    scaley=1,%
    scalez=1,%
    anglex=-00,%
    angley=90,%
    anglez=230,%
    front/.style={draw=green!30!black,fill=green!30!white},%
    top/.style={draw=green!30!black,fill=green!30!white},%
    right/.style={draw=green!30!black,fill=green!30!white},%
    emphedge,%
    emphstyle/.style={line width=1pt, green!12!black},
    shade,%
    shadesamples=64,%
    shadeopacity=0.15,%
    zcut=2,
    topcut/.style={draw=blue!30!black,fill=blue!30!white},%
    rightcut/.style={draw=blue!30!black,fill=blue!30!white},%
    }
    
    \node[draw=none, rotate=0, xscale=0.7] at (-2.65,0.45) {\scalebox{1.5}{$\mX$}};
    \node[draw=none, rotate=0, xscale=0.7] at (-2.15,1.0) {\scalebox{1.5}{$\mY$}};

    \draw[thick, line width=1pt, black, {Bar[reversed] Stealth[scale=0.2]}-{Stealth[scale=0.2] Bar}] (-2.6,-1.4) -- node[midway, below=1pt, font=\small] {$K^{(0)}$} (-2.3,-1.4);

    \draw[thick, line width=5pt, gray, arrows={-Stealth[scale=0.5]}] (-1.05,-0.1) -- (-0.05,-0.1);

    \tikzcuboid{%
    shiftx=1.0cm,%
    shifty=0cm,%
    scale=0.3,%
    rotation=0,%
    densityx=1,%
    densityy=1,%
    densityz=1,%
    dimx=5,%
    dimy=4,%
    dimz=5,%
    scalex=1,%
    scaley=1,%
    scalez=1,%
    anglex=-00,%
    angley=90,%
    anglez=230,%
    front/.style={draw=green!30!black,fill=green!30!white},%
    top/.style={draw=green!30!black,fill=green!30!white},%
    right/.style={draw=green!30!black,fill=green!30!white},%
    emphedge,%
    emphstyle/.style={line width=1pt, green!12!black},
    shade,%
    shadesamples=64,%
    shadeopacity=0.15,%
    zcut=2,
    topcut/.style={draw=blue!30!black,fill=blue!30!white},%
    rightcut/.style={draw=blue!30!black,fill=blue!30!white},%
    }

    \draw[thick, line width=1pt, black, {Bar[reversed] Stealth[scale=0.8]}-{Stealth[scale=0.8] Bar}] (-2.9,-1.15) -- node[midway, left=1pt, font=\small] {$N$} (-2.9,0.05);

    \draw[thick, line width=1pt, black, {Bar[reversed] Stealth[scale=0.8]}-{Stealth[scale=0.8] Bar}] (0.05,-1.4) -- node[midway, below=1pt, font=\small] {$K^{(1)}$} (1.55,-1.4);

    \draw[thick, line width=1pt, black, {Bar[reversed] Stealth[scale=0.8]}-{Stealth[scale=0.8] Bar}] (-2.2,-1.25) -- node[midway, xshift=4pt, yshift=-4pt, font=\small] {$F$} (-1.63,-0.56);
    
    \draw[thick, line width=1pt, black, {Bar[reversed] Stealth[scale=0.8]}-{Stealth[scale=0.8] Bar}] (-1.6,-0.53) -- node[midway, xshift=4pt, yshift=-4pt, font=\small] {$C$} (-1.2,-0.05);

    \node[draw=none, rotate=0, xscale=0.9] at (0.0,0.45) {$\mathop{\scalebox{1.2}{$\mX$}}^{\textbf{(1)}}$};
    \node[draw=none, rotate=0, xscale=0.9] at (0.5,1.05) {$\mathop{\scalebox{1.2}{$\mY$}}^{\textbf{(1)}}$};
    
    \node[draw=none, gray] at (3.17,-0.1) {\scalebox{3.8}{$\cdot\!\cdot\!\cdot$}};

    \tikzcuboid{%
    shiftx=4.8cm,%
    shifty=0cm,%
    scale=0.3,%
    rotation=0,%
    densityx=1,%
    densityy=1,%
    densityz=1,%
    dimx=5,%
    dimy=4,%
    dimz=5,%
    scalex=1,%
    scaley=1,%
    scalez=1,%
    anglex=-00,%
    angley=90,%
    anglez=230,%
    front/.style={draw=green!30!black,fill=green!30!white},%
    top/.style={draw=green!30!black,fill=green!30!white},%
    right/.style={draw=green!30!black,fill=green!30!white},%
    emphedge,%
    emphstyle/.style={line width=1pt, green!12!black},
    shade,%
    shadesamples=64,%
    shadeopacity=0.15,%
    zcut=2,
    topcut/.style={draw=blue!30!black,fill=blue!30!white},%
    rightcut/.style={draw=blue!30!black,fill=blue!30!white},%
    }

    \draw[thick, line width=1pt, black, {Bar[reversed] Stealth[scale=0.8]}-{Stealth[scale=0.8] Bar}] (3.85,-1.4) -- node[midway, below=1pt, font=\small] {$K^{(L-1)}$} (5.25,-1.4);

    \draw[thick, line width=5pt, gray, arrows={-Stealth[scale=0.5]}] (6.35,-0.1) -- (7.35,-0.1);
    
    \tikzcuboid{%
    shiftx=8.35cm,%
    shifty=0cm,%
    scale=0.3,%
    rotation=0,%
    densityx=1,%
    densityy=1,%
    densityz=1,%
    dimx=1,%
    dimy=4,%
    dimz=5,%
    scalex=1,%
    scaley=1,%
    scalez=1,%
    anglex=-00,%
    angley=90,%
    anglez=230,%
    front/.style={draw=green!30!black,fill=green!30!white},%
    top/.style={draw=green!30!black,fill=green!30!white},%
    right/.style={draw=green!30!black,fill=green!30!white},%
    emphedge,%
    emphstyle/.style={line width=1pt, green!12!black},
    shade,%
    shadesamples=64,%
    shadeopacity=0.15,%
    zcut=2,
    topcut/.style={draw=blue!30!black,fill=blue!30!white},%
    rightcut/.style={draw=blue!30!black,fill=blue!30!white},%
    }

    \draw[thick, line width=1pt, black, {Bar[reversed] Stealth[scale=0.2]}-{Stealth[scale=0.2] Bar}] (7.4,-1.4) -- node[midway, below=1pt, font=\small] {$K^{(L)}$} (7.7,-1.4);

    \draw[thick, line width=5pt, gray, arrows={-Stealth[scale=0.5]}] (8.7,-0.1) -- (9.7,-0.1);

    \tikzcuboid{%
    shiftx=10.1cm,%
    shifty=-0.5cm,%
    scale=0.3,%
    rotation=0,%
    densityx=1,%
    densityy=1,%
    densityz=1,%
    dimx=1,%
    dimy=4,%
    dimz=2,%
    scalex=1,%
    scaley=1,%
    scalez=1,%
    anglex=-00,%
    angley=90,%
    anglez=230,%
    front/.style={draw=blue!30!black,fill=blue!30!white},%
    top/.style={draw=blue!30!black,fill=blue!30!white},%
    right/.style={draw=blue!30!black,fill=blue!30!white},%
    emphedge,%
    emphstyle/.style={line width=1pt, green!12!black},
    shade,%
    shadesamples=64,%
    shadeopacity=0.15,%
    zcut=0,
    topcut/.style={draw=blue!30!black,fill=blue!30!white},%
    rightcut/.style={draw=blue!30!black,fill=blue!30!white},%
    }
    
\end{tikzpicture}}
    \caption{An illustration of the feature and label embeddings across the layers of our triple-symmetric graph neural network architecture. The architecture is composed of feature-, label- and node-equivariant aggregation layers and a final feature-invariant projection layer.}
    \label{fig:architecture}
\end{figure}

\textbf{How to mix features and labels?} 
A key consideration in designing graph foundation models is determining the degree of interaction between feature and label embeddings.
In the extreme, if we do not impose any such interactions then the model's predictions would be independent of the input features, leading to a degradation in model performance. Effective information flow between feature and label representations is thus essential for building strong graph foundation models.
This raises a central design question: \emph{To what extent should feature embeddings and label embeddings interact?}

In TS-GNNs, we initially perform mixing by applying permutation-invariant pooling operations (e.g., feature-wise or label-wise sum pooling) and appending the pooled representations to the opposing modality (feature-to-label and label-to-feature).
While such operations are theoretically sufficient for our universal approximation result (\cref{subsec:universality}), they often suffer from practical limitations, because all feature or label information is compressed into a single vector representation, creating a representational bottleneck .

To overcome this issue and enable richer feature-label mixing, we introduce an additional mixing operation based on least-squares solutions, which showed strong empirical results \citep{zhao2025fullyinductivenodeclassificationarbitrary}. Specifically, we solve the following optimization problems:
\[
    \mT_F=\argmin_{\mT_F\in\sR^{F\times C}} \norm{\mY-\mA\mX\mT_F}_2\quad \text{ and }\quad
    \mT_C=\argmin_{\mT_C\in\sR^{C\times F}} \norm{\mX-\mA\mY\mT_C}_2
\]
where $\mA$ is a random-walk normalized adjacency matrix. These projections transform features into the label space and labels into the feature space, respectively.
We incorporate these least-squares transformations into each layer as follows:
\begin{align*}
       \Theta_F&=\psi\left(\mW_{13}^\stepl\mY_v^\stepl\mT_C, \{\!\!\{\mW_{14}^\stepl\mY_u^\stepl\mT_C\mid u\in\mathcal{N}_v\}\!\!\}\right),\\
       \Theta_C&=\psi\left(\mW_{15}^\stepl\mX_v^\stepl\mT_F, \{\!\!\{\mW_{16}^\stepl\mX_u^\stepl\mT_F\mid u\in\mathcal{N}_v\}\!\!\}\right), 
\end{align*}
where $\mW_{13}^\stepl,\mW_{14}^\stepl,\mW_{15}^\stepl,\mW_{16}^\stepl\in\sR^{K^\steplplus\times K^\stepl}$ are learnable weight matrices.

These terms share the same objective as our original mixing terms, but crucially avoid collapsing over feature or label dimensions. Instead, they introduce structured, linear mappings across modalities via $\mT_F$ and $\mT_C$, enabling localized mixing. 
Importantly, these least-squares transformations are linear and are equivariant to node-, feature-, and label- permutations, allowing us to retain the triple-symmetry property at every layer.
This design also lays a foundation for future extensions involving more expressive mixing mechanisms.

\textbf{How to incorporate bias terms?} Another important design choice involves the integration of bias terms into a model that needs to generalize to varying feature scales. Even after row-wise or global $L_1/L_2$ normalization, node features can differ both within a graph and across graphs. A bias vector learned on the set of training graphs may be much larger or much smaller than the features which are going to be observed at test time.  This is clearly  problematic for generalization, as the bias term may for instance dominate the actual input features in magnitute, leading to them being largely ignored. To avoid this, one option is to completely avoid using bias terms in the layers, but this is too restrictive. In our architecture, we solve the least-squares problem including an added bias term, and the bias terms obtained, are injected only through the feature-to-label and label-to-feature mixers ($\Theta_F$ and $\Theta_C$). Such biases are learned relative to the current feature scale, and thus remain scale-aware and adapt naturally to new graphs, offering better generalization.

\section{Empirical Evaluation}
\label{sec:exp}

\begin{table}[t]
    \footnotesize
    \centering
    \caption{Test accuracy of the baselines MeanGNN and GAT and models GraphAny and our GFM variants, all trained on cora. Top three models are colored by \red{First}, \blue{Second}, \gray{Third}.}
    \label{tab:cora_results}
        \begin{tabular}{lccccc}
            \toprule
            & \multicolumn{2}{c}{End-to-end} & \multicolumn{3}{c}{Zero-shot} \\
            \cmidrule(lr){2-3} \cmidrule(lr){4-6}
            & MeanGNN & GAT & GraphAny & TS-Mean & TS-GAT \\
            \midrule
            actor & 32.03 \stdfont{$\pm$ 0.29} & 32.59 \stdfont{$\pm$ 0.83} & 27.54 \stdfont{$\pm$ 0.20} & 28.09 \stdfont{$\pm$ 0.93} & 28.80 \stdfont{$\pm$ 2.12} \\
            amazon-ratings & 40.74 \stdfont{$\pm$ 0.13} & 40.63 \stdfont{$\pm$ 0.66} & 42.80 \stdfont{$\pm$ 0.09} & 42.27 \stdfont{$\pm$ 1.40} & 41.92 \stdfont{$\pm$ 0.47} \\
            arxiv & 50.94 \stdfont{$\pm$ 0.38} & 57.93 \stdfont{$\pm$ 3.44} & 58.85 \stdfont{$\pm$ 0.03} & 56.33 \stdfont{$\pm$ 2.58} & 53.13 \stdfont{$\pm$ 2.04} \\
            blogcatalog & 84.48 \stdfont{$\pm$ 0.74} & 78.20 \stdfont{$\pm$ 7.23} & 71.54 \stdfont{$\pm$ 3.04} & 76.30 \stdfont{$\pm$ 2.92} & 78.44 \stdfont{$\pm$ 5.18} \\
            brazil & 32.31 \stdfont{$\pm$ 7.50} & 35.38 \stdfont{$\pm$ 4.21} & 33.84 \stdfont{$\pm$ 15.65} & 39.23 \stdfont{$\pm$ 5.70} & 36.92 \stdfont{$\pm$ 10.03} \\
            chameleon & 61.75 \stdfont{$\pm$ 0.94} & 58.46 \stdfont{$\pm$ 6.23} & 63.64 \stdfont{$\pm$ 1.48} & 60.83 \stdfont{$\pm$ 5.41} & 54.65 \stdfont{$\pm$ 4.58} \\
            citeseer & 65.06 \stdfont{$\pm$ 1.30} & 63.92 \stdfont{$\pm$ 0.84} & 68.88 \stdfont{$\pm$ 0.10} & 68.66 \stdfont{$\pm$ 0.19} & 68.70 \stdfont{$\pm$ 0.48} \\
            co-cs & 80.87 \stdfont{$\pm$ 0.69} & 82.28 \stdfont{$\pm$ 0.86} & 90.06 \stdfont{$\pm$ 0.80} & 90.92 \stdfont{$\pm$ 0.47} & 91.13 \stdfont{$\pm$ 0.43} \\
            co-physics & 79.05 \stdfont{$\pm$ 1.13} & 85.92 \stdfont{$\pm$ 1.10} & 91.85 \stdfont{$\pm$ 0.34} & 92.61 \stdfont{$\pm$ 0.61} & 92.23 \stdfont{$\pm$ 0.61} \\
            computers & 73.88 \stdfont{$\pm$ 0.88} & 70.94 \stdfont{$\pm$ 3.40} & 82.79 \stdfont{$\pm$ 1.13} & 81.37 \stdfont{$\pm$ 1.25} & 80.23 \stdfont{$\pm$ 2.44} \\
            cornell & 63.78 \stdfont{$\pm$ 1.48} & 69.73 \stdfont{$\pm$ 2.26} & 63.24 \stdfont{$\pm$ 1.32} & 68.65 \stdfont{$\pm$ 2.42} & 71.35 \stdfont{$\pm$ 4.52} \\
            deezer & 54.91 \stdfont{$\pm$ 1.81} & 55.22 \stdfont{$\pm$ 2.33} & 51.82 \stdfont{$\pm$ 2.49} & 52.31 \stdfont{$\pm$ 2.52} & 52.88 \stdfont{$\pm$ 2.86} \\
            europe & 39.12 \stdfont{$\pm$ 6.44} & 39.00 \stdfont{$\pm$ 4.30} & 41.25 \stdfont{$\pm$ 7.25} & 35.88 \stdfont{$\pm$ 6.91} & 38.38 \stdfont{$\pm$ 6.32} \\
            full-DBLP & 65.01 \stdfont{$\pm$ 2.40} & 67.34 \stdfont{$\pm$ 2.75} & 71.48 \stdfont{$\pm$ 1.44} & 66.42 \stdfont{$\pm$ 3.65} & 64.30 \stdfont{$\pm$ 3.75} \\
            full-cora & 55.85 \stdfont{$\pm$ 1.04} & 59.95 \stdfont{$\pm$ 0.88} & 51.18 \stdfont{$\pm$ 0.78} & 53.58 \stdfont{$\pm$ 0.73} & 52.82 \stdfont{$\pm$ 0.90} \\
            last-fm-asia & 77.23 \stdfont{$\pm$ 1.01} & 72.65 \stdfont{$\pm$ 0.48} & 81.14 \stdfont{$\pm$ 0.42} & 78.03 \stdfont{$\pm$ 1.02} & 76.60 \stdfont{$\pm$ 1.44} \\
            minesweeper & 84.06 \stdfont{$\pm$ 0.18} & 84.15 \stdfont{$\pm$ 0.24} & 80.46 \stdfont{$\pm$ 0.11} & 80.68 \stdfont{$\pm$ 0.38} & 80.72 \stdfont{$\pm$ 0.78} \\
            photo & 88.95 \stdfont{$\pm$ 1.08} & 80.78 \stdfont{$\pm$ 3.59} & 89.91 \stdfont{$\pm$ 0.88} & 90.18 \stdfont{$\pm$ 1.30} & 89.66 \stdfont{$\pm$ 1.23} \\
            pubmed & 74.56 \stdfont{$\pm$ 0.13} & 75.12 \stdfont{$\pm$ 0.89} & 76.46 \stdfont{$\pm$ 0.08} & 74.98 \stdfont{$\pm$ 0.56} & 75.46 \stdfont{$\pm$ 0.86} \\
            questions & 97.16 \stdfont{$\pm$ 0.06} & 97.13 \stdfont{$\pm$ 0.05} & 97.07 \stdfont{$\pm$ 0.03} & 97.02 \stdfont{$\pm$ 0.01} & 97.02 \stdfont{$\pm$ 0.01} \\
            roman-empire & 69.37 \stdfont{$\pm$ 0.66} & 69.80 \stdfont{$\pm$ 4.18} & 63.34 \stdfont{$\pm$ 0.58} & 66.36 \stdfont{$\pm$ 1.02} & 67.76 \stdfont{$\pm$ 0.60} \\
            squirrel & 43.32 \stdfont{$\pm$ 0.66} & 38.16 \stdfont{$\pm$ 1.04} & 49.74 \stdfont{$\pm$ 0.47} & 41.81 \stdfont{$\pm$ 0.80} & 38.29 \stdfont{$\pm$ 4.29} \\
            texas & 76.76 \stdfont{$\pm$ 1.48} & 81.62 \stdfont{$\pm$ 6.45} & 71.35 \stdfont{$\pm$ 2.16} & 73.51 \stdfont{$\pm$ 4.01} & 74.05 \stdfont{$\pm$ 4.10} \\
            tolokers & 78.59 \stdfont{$\pm$ 0.66} & 78.22 \stdfont{$\pm$ 0.37} & 78.20 \stdfont{$\pm$ 0.03} & 78.12 \stdfont{$\pm$ 0.09} & 78.01 \stdfont{$\pm$ 0.23} \\
            usa & 43.93 \stdfont{$\pm$ 1.16} & 43.03 \stdfont{$\pm$ 2.08} & 43.35 \stdfont{$\pm$ 1.62} & 42.34 \stdfont{$\pm$ 2.12} & 41.69 \stdfont{$\pm$ 2.59} \\
            wiki-attr & 74.23 \stdfont{$\pm$ 0.89} & 68.91 \stdfont{$\pm$ 9.50} & 60.27 \stdfont{$\pm$ 3.06} & 69.89 \stdfont{$\pm$ 1.31} & 72.53 \stdfont{$\pm$ 1.59} \\
            wiki-cs & 71.97 \stdfont{$\pm$ 1.70} & 74.99 \stdfont{$\pm$ 0.59} & 74.11 \stdfont{$\pm$ 0.60} & 74.16 \stdfont{$\pm$ 2.07} & 73.24 \stdfont{$\pm$ 2.49} \\
            wisconsin & 74.12 \stdfont{$\pm$ 12.20} & 73.33 \stdfont{$\pm$ 8.27} & 59.61 \stdfont{$\pm$ 5.77} & 61.18 \stdfont{$\pm$ 11.38} & 65.88 \stdfont{$\pm$ 9.86} \\
            \midrule
            Average (28 graphs) & 65.50 \stdfont{$\pm$ 1.75} & 65.55 \stdfont{$\pm$ 2.82} & \gray{65.56} \stdfont{$\pm$ 1.86} & \red{65.78} \stdfont{$\pm$ 2.28} & \blue{65.60} \stdfont{$\pm$ 2.74} \\
            \bottomrule
        \end{tabular}
\end{table}

We empirically validate the  capabilities of TS-GNNs, addressing the core research questions:
\begin{enumerate}
    \item[\textbf{Q1}] Can the proposed graph foundation models effectively generalize to unseen graphs with varying structures, feature configurations, and label semantics without retraining?
    \item[\textbf{Q2}] Does the zero-shot generalization ability of the proposed graph foundation models improve as the number of training graphs increases?
\end{enumerate}

In \cref{sec:add_exp}, we additionally investigate supplementary questions:  
\begin{enumerate}
    \item[\textbf{Q3}] How sensitive are TS-GNNs to the choice of pretraining dataset in low-data regimes?
    \item[\textbf{Q4}] What is the impact of least-squares mixing and the bias term on TS-GNNs’  performance?
    \item[\textbf{Q5}] Is triple-symmetry necessary for graph foundation models?
\end{enumerate}

To address these questions, we perform experiments across 29 real-world node classification datasets, covering diverse domains, graph sizes, and feature representations. 
Note that we focus on the node classification setting due to the lack of node-regression datasets, and refer the reader to the appropriate discussion in \cref{app:node_reg}.
 We train TS-GNNs by randomly masking the labels of 50\% of the labeled nodes at each epoch and optimizing the model to predict the masked labels from the node features and the remaining visible labels.
We optimize all models using the Adam optimizer, with detailed hyperparameter settings provided in \cref{app:hyperparameters}. Experiments are executed on a single NVIDIA L40 GPU, and publicly accessible at: \url{https://github.com/benfinkelshtein/EquivarianceEverywhere}.

\textbf{Datasets.} We use an ensemble of 29 node classification datasets and their respective official splits. Specifically, we use roman-empire, amazon-ratings, minesweeper, questions and tolokers from \cite{platonov2024criticallookevaluationgnns}, cora, citeseer and pubmed from \cite{yang2016revisitingsemisupervisedlearninggraph}, chameleon and squirrel from \cite{rozemberczki2021multiscaleattributednodeembedding}, cornell, wisconsin, texas and actor from \cite{pei2020geomgcngeometricgraphconvolutional}, full-dblp and full-cora from \cite{bojchevski2018deepgaussianembeddinggraphs}, wiki-attr and blogcatalog from \cite{Yang_2023}, wiki-cs from \cite{mernyei2022wikicswikipediabasedbenchmarkgraph}, co-cs, co-physics, computers and photo from \cite{shchur2019pitfallsgraphneuralnetwork}, brazil, usa and europe from \cite{Ribeiro_2017}, last-fm-asia and deezer from \cite{rozemberczki2020characteristicfunctionsgraphsbirds}, and arxiv from \cite{hu2021opengraphbenchmarkdatasets}. As full-Cora, co-cs, and co-physics, have an exceptionally large feature dimension, we first reduce their feature dimensionality with PCA to $2048$ components before training. Lastly, we apply $L_2$ normalization to each node's feature vector.

\subsection{Do TS-GNNs Generalize to Unseen Graphs?}
\label{subsec:generalization}

\textbf{Setup.}
To evaluate generalization (\textbf{Q1}), 
we design an experiment to assess how well each model performs after being trained on a single graph (cora) -- measuring how much transferable knowledge can be extracted from that graph. Our setup closely follows the setup introduced in GraphAny \citep{zhao2025fullyinductivenodeclassificationarbitrary}. 
We experiment with two widely-used baseline GNN architectures: MeanGNN and GAT \citep{velic2018graph}. Since these models are inherently non-transferable across datasets, we train and evaluate each baseline in an end-to-end fashion on every dataset. Further to this, we provide zero-shot experiments with TS-Mean and TS-GAT, which are TS-GNNs using MeanGNN and GAT, respectively (as their aggregation function). Additional GNN variants, such as TS-GCNII, are evaluated in \cref{subsec:variants}.
We also experiment with the only other node-level graph foundation model GraphAny \citep{zhao2025fullyinductivenodeclassificationarbitrary}, which provides a strong zero-shot prediction baseline. In the zero-shot setup, we train each model TS-Mean, TS-GAT, and GraphAny on the cora dataset and evaluate their performance on the remaining 28 datasets. All reported results reflect the mean accuracy and standard deviation over five random seeds.

\textbf{Results.} \cref{tab:cora_results} shows that both TS-Mean and TS-GAT consistently outperform their respective baselines, despite being trained in an end-to-end manner. 
This highlights the generalization capabilities of our proposed framework and its compatibility with varying GNN architectures.
Furthermore, both TS-Mean and TS-GAT outperform GraphAny on average across 28 datasets under its own evaluation protocol, further underscoring the strong generalization capabilities of our framework.
These results directly address our initial research question (\textbf{Q1}) -- our proposed framework, that is also theoretically grounded, successfully generalizes across graphs, features, and label distributions without retraining.

\newpage
\subsection{Does More Training Data Improve Generalization?}
\label{subsec:train_size}

\begin{wrapfigure}[17]{r}{0.4\linewidth}
    \centering
\vspace{-0.1cm}    \includegraphics[width=1.0\linewidth]{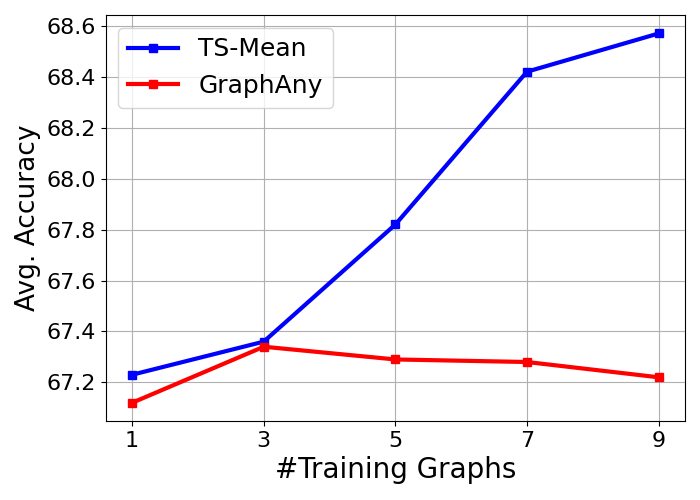}
    \caption{Average zero-shot accuracy of TS-Mean and GraphAny across 20 datasets as a function of the number of training graphs. }
    \label{fig:train_size}
\end{wrapfigure}

\textbf{Setup.} To assess the impact of the amount of training data on zero-shot generalization (\textbf{Q2}), we train TS-Mean and GraphAny on increasingly larger subsets of a held-out training pool. We reserve 9 representative graphs for training and construct training subsets of sizes 1, 3, 5, 7, and 9, where each larger subset includes all datasets from the smaller trainsets. The remaining 20 datasets are used for zero-shot evaluation. For each subset size, we report mean zero-shot accuracy, averaged over five random seeds. Detailed per-dataset results are provided in \cref{app:train_size_full}.

\looseness=-1
\textbf{Results.} \cref{fig:train_size} shows that the zero-shot accuracy of TS-Mean improves steadily as more training graphs are introduced. This behavior aligns with the expected characteristics of a graph foundation model -- the ability to benefit from increased data diversity, directly addressing \textbf{Q2}. In contrast, GraphAny’s performance remains unchanged as more training graphs are added. This counterintuitive result suggests that GraphAny is not well-suited for the foundation model setting, and may instead be better tailored to scenarios with limited training data. Interestingly, both models perform comparably when trained on three graphs or less, indicating that TS-Mean is competitive in the low-data regime. However, as the training set grows, TS-Mean increasingly outperforms GraphAny, highlighting the benefits of performance that scales with training data.

While we do not claim to establish a formal “scaling law,” this offers, to our knowledge, the first empirical indication that a GFM improves with more training data -- a key property of foundation models in language and vision. These findings position TS-GNN as a strong and, at present, the only viable candidate for node-level graph foundation modeling.

\section{Conclusion}

This work introduces a principled, theoretically-grounded framework for designing GFMs for node-level classification and regression tasks. At the heart of our approach lies the triple-symmetry criterion: a GFM must be equivariant to node and label permutations and invariant to permutations of input features. These symmetries capture the key inductive bias required for generalization across graphs with varying structures, feature spaces, and labels.

Building on this, we propose a triple-symmetry neural network which is proven to bea  universal approximator of triple-symmetry respecting functions. By applying TSNets to the multisets of features induced by the local graph neighborhoods, and by extending TSNet to general aggregation schemes, we derived triple-symmetry graph neural networks: a flexible framework that extends any standard GNN into a triple-symmetry-aware GFM capable of zero-shot generalization. 

\looseness=-1
Our empirical study validates the effectiveness of this recipe. We train the GFM variations of the well-used models MeanGNN and GAT on a single dataset, achieving robust zero-shot generalization across 29 diverse node classification benchmarks, outperforming not only their original counterparts, but also the prior state-of-the-art GraphAny. Furthermore, we present the first empirical evidence that a graph foundation model can systematically benefit from increasing training data: TS-Mean exhibits steadily improving zero-shot performance as more training graphs are introduced, while GraphAny fails to do so. These findings underscore the strong generalization capabilities of our principled recipe.

Altogether, our work provides a unified theoretical and practical foundation for constructing generalizable GNNs. Future work could explore extending this recipe to edge-level or graph-level tasks as our recipe is currently limited to the node-level case. As the landscape of graph learning moves toward more flexible and general architectures, our work offers a compelling blueprint for building robust, transferable and principled models.

\section*{Acknowledgments}

We thank Yonatan Sverdlov for discussions regarding related literature and its theoretical relevance to this work. This research was supported by a grant from the United States-Israel Binational Science Foundation (BSF), Jerusalem, Israel, and the United States National Science Foundation (NSF), (NSF-BSF, grant No. 2024660) , and  by the Israel Science Foundation (ISF grant No. 1937/23). BF is funded by the Clarendon scholarship. MB is supported by EPSRC Turing AI World-Leading Research Fellowship No. EP/X040062/1 and EPSRC AI Hub on Mathematical Foundations of Intelligence: An ”Erlangen Programme” for AI No. EP/Y028872/1.

\newpage
\bibliography{refs}
\bibliographystyle{plainnat}

\newpage
\appendix

\section{Groups, Permutations, and Symmetries}

\subsection{A Primer on Representation Theory}

We denote the identity element of any group by $e$. The notation $H\leq G$ indicates that $H$ is a subgroup of the group $G$. Denote by $\GL{V}$  the general linear group of invertible linear transformations on the vector space $V$. Denote by $\mathrm{id}_V$ the identity map on a vector space $V$.

Representation theory studies how abstract groups can be understood by expressing their elements as linear transformations on vector spaces. This is formalized by a group homomorphism from the group into the general linear group, called a representation.
We begin by recalling the notion of a group homomorphism, which is a structure-preserving map between groups. 
\begin{definition}[Group Homomorphism]
     Let $G$ and $H$ be groups. A map $\phi: G \to H$ is called a \emph{group homomorphism} if it satisfies
    \[
       \forall g_1,g_2\in G: \quad  \phi(g_1\cdot g_2) = \phi(g_1)\cdot \phi(g_2).
    \]
\end{definition}

\begin{definition}[Representations]
    Let $G$ be a group. A \emph{representation} $\rho:G\to\GL{V}$ of a group $G$ on a vector space $V$ over a field $F$ is a group homomorphism $\rho: G \to \GL{V}$. 
\end{definition}

Next we define \emph{group action}, which generalizes group representation. We often want to describe how the elements of a group can effect a set -- transforming its elements in a way that reflects the group's algebraic structure. This idea is formalized through the concept of a group action.
\begin{definition}[Group Action]
Let $G$ be a group and $X$ a set. A \emph{group action} of $G$ on $X$ is a mapping $G\times X\to X$, denoted by $\cdot$, 
such that
\begin{itemize}
    \item $g \cdot (h \cdot x) = (gh) \cdot x$,
    \item $e \cdot x = x$,
\end{itemize}
for all $g, h \in G$ and all $s \in S$.
\end{definition}
We say that the group element $g \in G$ \emph{acts on} the element $x \in X$ via $g \cdot x$.
A representation $\rho: G \to \GL{V}$ is a special case of a group action where the set $X$ is a vector space $V$ and the action of $G$ on $V$ is given by $g \cdot v := \rho(g)v$ for $v \in V$. 

Given a group action, it is natural to ask about the set of all elements that can be reached from a given one via the group action. This leads to the concept of an orbit.
\begin{definition}[Orbit]
\label{def:orbit}
    Let $G$ be a group  acting on a set $X$. The \emph{orbit} of $x\in X$  is defined to be
    \[
        \gO_G(x) := \{ g \cdot x \mid g \in G \}.
    \]
\end{definition}

Given a group representation, we often look for subspaces that are preserved by the action of all group elements. These subspaces are known as $G$-invariant subspaces and the restricted representation to this subspace is called a subrepresentation.
\begin{definition}[$G$-invariant Subspace]
    Let $\rho:G\to\GL{V}$ be a representation. A linear subspace $W \subseteq V$ is called $G$-\emph{invariant} if 
    \[
        \forall \ g\in G,\  w\in W: \quad \rho(g)w\in W.
    \]
\end{definition}

\begin{definition}[Subrepresentation]
    Let $\rho:G\to\GL{V}$ be a representation. Let $W \subseteq V$ be a $G$-invariant subspace. The map $\rho|_W: G \to \GL{W}$ defined to be $\rho(g)$ restricted to inputs from $W$, is called the \emph{subrepresentation} of $\rho$ on $W$.
\end{definition}

Some representations cannot be broken down into smaller, nontrivial subrepresentations. These are known as irreducible representations.
\begin{definition}[Irreducible Representation]
    A representation $\rho: G \to \GL{V}$ is called \emph{irreducible} if the only $G$-invariant subspaces of $V$ are the trivial subspaces $\{0\}$ and $V$.
\end{definition}

We next define symmetry preserving mappings. Such mappings are typically called \emph{intertwining operators} in representation theory, but a more common term for them in deep learning is \emph{equivariant mappings}.
\begin{definition}[Equivariant Mapping between Representations]
    Let $\rho_V: G\to\GL{V}$ and $\rho_W: G\to\GL{W}$ be representations of a group $G$. A linear mapping $f : V \to W$ is called \emph{$G$-equivariant} (with respect to $\rho_V$ and $\rho_W$) if it satisfies
    \[
        \forall \ g\in G,\  v\in V: \quad f(\rho_V(g)v)=\rho_W(g)f(v).
    \]
\end{definition}

Next, we define which representations are considered as equivalent.
\begin{definition}[Isomorphic Representations]
    Let $\rho_V: G \to \GL{V}$ and $\rho_W: G \to \GL{W}$ be two representations. The representations are called \emph{isomorphic} (denoted by $\rho_V\cong\rho_W$) if there exists a bijective linear map  $\varphi : V \to W$ (i.e., an isomorphism of linear spaces $V\cong W$)
    such that
    \[
        \forall \  v \in V,\ g\in G: \quad \varphi(\rho_V(g) v) = \rho_W(g) \varphi(v).
    \]
    Such a $\varphi$ is called a \emph{representation isomorphism}.
\end{definition}

A classical result characterizing linear equivariant mappings between irreducible representations is Schur's lemma. Given two irreducible representations, one may anticipate that there could potentially be many $G$-equivariant mappings between them. However, Schur's lemma states that is essentially either one or zero such mappings. The following is one variant of the lemma.
\begin{lemma}[Theorem 2 in \cite{woit_repthy2}, Schur's Lemma]
    \label{lemma:schur}
    Let $G$ be a finite group. Let $\rho_V:G\to\GL{V}$ and $\rho_W:G\to\GL{W}$ be irreducible representations. Let $\gL$ be the space of $G$-equivariant linear mappings from $V$ to $W$.
    \begin{enumerate}
        \item If $\rho_V$ and $\rho_W$ are not isomorphic, then $\gL$ is $0$-dimensional (only 0 intertwines $\rho_V$ and $\rho_W$). 
        \item If $\rho_V$ and $\rho_W$ are isomorphic, then $\gL$ is $1$-dimensional (up to multiplication by scalar, there is only one equivariant operator between $\rho_V$ and $\rho_W$, which is a representation isomorphism).
    \end{enumerate}
\end{lemma}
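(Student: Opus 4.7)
The plan is to exploit irreducibility of $V$ and $W$ together with the elementary observation that, for any $G$-equivariant linear map $f \colon V \to W$, both $\ker(f) \subseteq V$ and $\mathrm{im}(f) \subseteq W$ are $G$-invariant subspaces. Indeed, if $v \in \ker(f)$ then $f(\rho_V(g)v) = \rho_W(g)f(v) = 0$, so $\rho_V(g)v \in \ker(f)$, and an analogous argument handles the image.

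For Part 1, I would take an arbitrary $f \in \gL$ and apply irreducibility twice: $\ker(f)$ must equal $\{0\}$ or $V$, and $\mathrm{im}(f)$ must equal $\{0\}$ or $W$. The only way $f$ could be nonzero is if $\ker(f) = \{0\}$ and $\mathrm{im}(f) = W$, i.e.\ $f$ is a $G$-equivariant bijection, which would force $\rho_V \cong \rho_W$ and contradict the hypothesis. Hence $f = 0$ and $\gL = \{0\}$.

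For Part 2, after fixing an isomorphism $\varphi \colon V \to W$ of representations, the map $f \mapsto \varphi^{-1} \circ f$ is a linear bijection between $\gL$ and the space of $G$-equivariant endomorphisms of $V$, so it suffices to show that this endomorphism space is spanned by $\id_V$. Working over an algebraically closed field (implicit in the $1$-dimensional conclusion), every such $h \colon V \to V$ has an eigenvalue $\lambda$; the map $h - \lambda\,\id_V$ is again $G$-equivariant and has nontrivial kernel, so the dichotomy established for Part 1 forces $\ker(h - \lambda\,\id_V) = V$, i.e.\ $h = \lambda\,\id_V$, yielding $\dim \gL = 1$.

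The one genuine subtlety is the algebraic closedness hypothesis hidden inside Part 2: for real representations the space $\gL$ can in fact be $2$- or $4$-dimensional (corresponding to complex or quaternionic irreducibles), so the $1$-dimensional conclusion is specific to fields like $\mathbb{C}$. Since the excerpt writes ``a field $F$'' without further restriction, I would either assume $F$ is algebraically closed (matching the cited statement) or flag this assumption explicitly; beyond that the argument is purely structural, relying only on the two-line ``kernel and image are invariant'' observation plus existence of an eigenvalue, and requires no further computation.
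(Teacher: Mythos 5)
Your proof is the standard textbook argument for Schur's Lemma and is correct; the paper itself does not prove this lemma (it is cited from an external reference), so there is no in-paper proof to compare against, but your approach matches the canonical one that any such reference would give: kernel/image invariance plus irreducibility for Part~1, and the eigenvalue argument reducing $G$-equivariant endomorphisms to scalars for Part~2.

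Your flag about algebraic closure is the right thing to notice and is worth making explicit, because the lemma as stated in the paper (``over a field $F$'') does \emph{not} hold in general: for real irreducibles of quaternionic or complex type, $\gL$ can be $4$- or $2$-dimensional. For the paper's actual use case, however, there is no gap: the paper works over $\sR$ with irreducible representations of $S_N$, $S_F$, $S_C$ and their products, namely the trivial and standard representations (and their tensor products). These are \emph{absolutely} irreducible over $\sR$ (the symmetric group has Frobenius--Schur indicator $+1$ on every irreducible), so the real endomorphism algebra is $\sR$ and the $1$-dimensional conclusion does apply. If you want to be precise, you could either (a) add the hypothesis that $F$ is algebraically closed to the lemma and then separately note that the real representations used by the paper remain irreducible after complexification, or (b) replace the hypothesis in Part~2 by ``$\rho_V$ and $\rho_W$ are isomorphic and absolutely irreducible.'' Either way, your eigenvalue argument goes through verbatim once you have an eigenvalue, and the existence of a real eigenvalue for the specific endomorphism algebras arising here is guaranteed by absolute irreducibility.
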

Part 2 of Schur's lemma means that if there exists an isomorphism  $\varphi$ between the irreducible representations $\rho_V:G\to\GL{V}$ and $\rho_W:G\to\GL{W}$ then any other isomorphism of these representations is a scalar times $\varphi$. Moreover, any linear $G$-equivariant mapping must be an isomorphism or the zero mapping.

For two representations $\rho_V:G\rightarrow \GL{V}$ and $\rho_W:G\rightarrow \GL{W}$, we define the representation $\rho_V \oplus \rho_W:G\rightarrow \GL{V\oplus W}$, for $v\in V$ and $w\in W$, by
\[(\rho_V \oplus \rho_W)(g) (v,w) = (\rho_V(g)v,\rho_W(g)w).\]
The tensor product of $\rho_V\otimes \rho_W:G\rightarrow\GL{V\otimes W}$ is defined on simple tensors $v\otimes w$ (where $v\in V$ and $w\in W$) by
\[(\rho_V \otimes \rho_W)(g) (v,w) = (\rho_V(g)v)\otimes(\rho_W(g)w).\]
Since the linear closure of the simple tensors is the whole tensor product space $V\otimes W$, the above definition uniquely extends from simple tensors $w\otimes z$ to general elements in the tensor product space $Z\in V\otimes W$. For example, when $V=\sR^N$ and $W=\sR^M$, we  identify $V\otimes W$ with $\sR^{N\times M}$, and for matrices $\rho_V(g)\in\sR^{N\times N}$ and $\rho_W(g)\in\sR^{M\times M}$ we have for any $\mZ\in \sR^{N\times M}$ 
\[(\rho_V \otimes \rho_W)(g)\mZ = \rho_V(g) \mZ \rho_W(g)^{\top}.\]
In this example, the simple tensors are rank-1 matrices, and the space of all matrices is spanned by the rank-1 matrices.

From this point onward, when the representation of the group $G$ is clear from context, we use the shorthand notation $g \cdot \mX$ instead of $\rho_V(g)\mX$ to denote the representation $\rho_V(g)$ acting on $\mX\in V$.

\subsection{Representation Theory of the Symmetric Group}

Here, we present standard results on the standard permutation representation of the symmetric group. 
Recall that $S_N$ denotes the group of permutations, i.e. the \emph{symmetric group}. 
\begin{definition}[The Symmetric Group]
    The \emph{symmetric group} $S_N$ is the group of all permutations of the set $[N]=\{1,\ldots,N\}$. 
\end{definition}

We denote by $\sigma=(ij)\in S_N$  the permutation that swaps the $i$-th and $j$-th elements, leaving the remaining elements fixed.
\begin{definition}[The Standard Permutation Representation]
The \emph{standard permutation representation} $\rho_N:S_N\to\GL{\sR^N}$ is given by
\[
    \forall \sigma_N\in S_N,\vx\in\sR^N:\quad \rho_N(\sigma_N)\vx = \left(x_{\sigma_N^{-1}(1)},\ldots,x_{\sigma_N^{-1}(N)}\right).
\]
\end{definition}

The following two subspaces of $\sR^N$, denoted by $V_0^N$ and $V_1^N$, can be shown to be the only two irreducible subrepresentations of the standard permutation representation of $S_N$ on $\sR^N$.

\begin{definition}[The standard subspace and representation]
\label{Def:perm1}
The space 
    \[
        V_0^N = \{ \vx \in \sR^N \mid \oneVec_N^\top \vx = 0 \}
    \]
    is called the $(N-1)$-dimensional $S_N$-invariant \emph{standard subspace}.
    The  \emph{standard representation} of $S_N$, denoted by $\rho_{V_0^N}: S_N\to\GL{V_0^N}$, is defined as    \[\forall \  \sigma_N\in S_N,\ \vx\in V_0^N: \quad \rho_{V_0^N}(\sigma_N)\vx = \left(x_{\sigma_N^{-1}(1)},\ldots,x_{\sigma_N^{-1}(N)}\right).\]
\end{definition}

\begin{definition}[The trivial subspace and representation]
\label{Def:perm2}
The space
    \[
        V_1^N = \{ c  \oneVec_N \mid c \in \sR \}
    \]
    is called the $1$-dimensional $S_N$-invariant \emph{trivial subspace}. 
    The \emph{trivial representation} of $S_N$, denoted by $\rho_{V_1^N}: S_N\to\GL{V_1^N}$, is defined as 
    \[\forall \sigma_N\in S_N:\ \rho_{V_1^N}(\sigma_N) = I, \quad \text{(where $I$ is the identity operator)}.\] 
\end{definition}
We adopt the names of the above two spaces from Chapter 3 in \cite{smithGroupsReps}.

It can now be seen that $\sR^N$ decomposes to a direct sum of the two irreducible subspaces of $S_N$ defined above. Moreover, the standard permutation representation also decomposes to the above two representations.

\begin{lemma}
    \label{lem:reduce_s_n}
    The subspaces $V_1^N$ and $V_0^N$ of $\sR^N$ are orthogonal, and
        $\sR^N \cong V_1^N \oplus V_0^N$.
    Moreover, $\rho_N \cong \rho_{V_0^N} \oplus \rho_{V_1^N}$, and the representations $\rho_{V_0^N}$ and $\rho_{V_1^N}$ are the only two irreducible subrepresentations of  $\rho_N$. 
\end{lemma}

\subsection{The Triple-Symmetric Group}
\label{sub_app:intro_triple}

Let $K,N,F,C\in\sN$. In the setting of the triple-symmetric group, we consider three copies of  symmetric groups $S_N$, $S_F$, and $S_C$, each acting on one dimension of the input tensor.  Namely, each of the three groups acts on $   \mX\in\sR^{K\times N \times (F+C)}$ as follows.
\begin{itemize}
    \item $S_N$ acts by permuting the \emph{second axis}, i.e.
    \[
        (\sigma_N \cdot \mX)_{k,n,j} = \mX_{k,\sigma_N^{-1}(n),j}, \quad\forall \sigma_N \in S_N,k\in[K],n\in[N],j\in[F+C].
    \]
    \item $S_F$ acts by permuting the first $F$ entries of the \emph{third axis}, i.e.
    \[
        (\sigma_F \cdot \mX)_{k,n,f} = \mX_{k,n,\sigma_F^{-1}(f)}, \quad\forall \sigma_F \in S_F,k\in[K],n\in[N],f\in[F],
    \]
    \item $S_C$ acts by permuting the last $C$ entries of the \emph{third axis}, i.e.
    \[
        (\sigma_F \cdot \mX)_{k,n,F+c} = \mX_{k,n,F+\sigma_C^{-1}(c)}, \quad\forall \sigma_C \in S_F,k\in[K],n\in[N],c\in[C].
    \]
\end{itemize}
When $K=1$, we simply consider the space $\sR^{N\times (F+C)}$, where, here, $S_N$ acts of the first axis, and $S_F$ and $S_C$ act on the second axis as above.

\section{Proof of \cref{prop:lin}}

In this section we provide the proof for \cref{prop:lin}.

Let $\rho:G\rightarrow \GL{\sR^{N\times (F+C)}}$ be the representation defined by
    \begin{equation}
        \label{eq:def_s_nfc0}
        \left((\sigma_N, \sigma_F, \sigma_C)\cdot \mX\right)_{n,j} =
        \begin{cases}
            \mX_{\sigma_N^{-1}(n),\sigma_F^{-1}(j)} & \text{if } j\in[F], \\
            \mX_{\sigma_N^{-1}(n),F+\sigma_C^{-1}(j-F)} & \text{otherwise }. 
        \end{cases}        
    \end{equation}
    Denote
    \begin{align*}
        R^F   &= \{\vx\in\sR^{F+C}\mid \vx_{F+1:F+C}=0\},\quad
        & R^C   &= \{\vx\in\sR^{F+C}\mid \vx_{1:F}=0\},\\
        R^F_0 &= \{\vx\in R^F\mid \oneVec_{F+C}^\top \vx=0\},\quad
        & R^C_0 &= \{\vx\in R^C\mid \oneVec_{F+C}^\top \vx=0\},\\
        R^F_1 &= \{(c\,\oneVec_F,0)\mid c\in\sR\},\quad
        & R^C_1 &= \{(0,c\,\oneVec_C)\mid c\in\sR\}.
    \end{align*}
\begin{lemma}
\label{lem:invar}
   The only irreducible subrepresentaions of $\rho$ are on the following eight invariant subspaces
    \[W_{a,b,L}= V_a^N \otimes R^L_b\subset \sR^{N\times (F+C)}\]
    for $a,b\in\{0,1\}$ and $L\in\{F,C\}$,
    where $V_a^N$ are defined in Definitions \ref{Def:perm1} and \ref{Def:perm2}. 
\end{lemma}

\begin{proof}
    By construction
    \begin{equation}
    \label{eq:construct}
        \sR^{N\times (F+C)} \cong \left(\sR^N\otimes\sR^F\right)\oplus\left(\sR^N\otimes\sR^C\right) \quad\text{ and }\quad \rho \cong \left(\rho_{\sR^N}\otimes\rho_{\sR^F}\right)\oplus\left(\rho_{\sR^N}\otimes\rho_{\sR^C}\right),
    \end{equation}
    where $\rho_{\sR^D}:G\to\GL{\sR^D}$ are the representations of $G$ on $\sR^D$ for $D\in\{N,F,C\}$.

    By \cref{lem:reduce_s_n}, the subspaces $V_1^D$ and $V_0^D$ of $\sR^D \cong V_1^D \oplus V_0^D$ are orthogonal, and the representations $\rho_{V_0^D}$ and $\rho_{V_1^D}$ are the only two irreducible subrepresentations of  $\rho_D \cong \rho_{V_0^D} \oplus \rho_{V_1^D}$, where $D\in\{N,F,C\}$. Thus, by substituting this into \cref{eq:construct}, we get the eight invariant subspaces
    \begin{align*}
        \sR^{N\times (F+C)} \cong &\left(V^N_0\otimes\sR^F_0\right)\oplus\left(V^N_1\otimes\sR^F_0\right)\oplus\left(V^N_0\otimes\sR^F_1\right)\oplus\left(V^N_1\otimes\sR^F_1\right)\\
        &\oplus\left(V^N_0\otimes\sR^C_0\right)\oplus\left(V^N_1\otimes\sR^C_0\right)\oplus\left(V^N_0\otimes\sR^C_1\right)\oplus\left(V^N_1\otimes\sR^C_1\right)
    \end{align*}
    and the corrisponding eight irreducible representation
    \begin{align*}
        \rho \cong &\left(\rho_{V^N_0}\otimes\rho_{\sR^F_0}\right)\oplus\left(\rho_{V^N_1}\otimes\rho_{\sR^F_0}\right)\oplus\left(\rho_{V^N_0}\otimes\rho_{\sR^F_1}\right)\oplus\left(\rho_{V^N_1}\otimes\rho_{\sR^F_1}\right)\\
        &\oplus\left(\rho_{V^N_0}\otimes\rho_{\sR^C_0}\right)\oplus\left(\rho_{V^N_1}\otimes\rho_{\sR^C_0}\right)\oplus\left(\rho_{V^N_0}\otimes\rho_{\sR^C_1}\right)\oplus\left(\rho_{V^N_1}\otimes\rho_{\sR^C_1}\right).
    \end{align*}
\end{proof}

\begin{propositioncopy}{\ref{prop:lin}}
Let $F\neq C\in\sN$. 
    A linear function
    \[
        T=(T_1,T_2)\colon \sR^{K_1\times N \times F}\times\sR^{K_1\times N \times C} \to \sR^{K_2\times N \times F}\times\sR^{K_2\times N \times C}
    \]
    is $(S_N \times S_F \times S_C)$-equivariant if and only if there exist $\mLambda^{(1)},\ldots,\mLambda^{(12)}\in\sR^{K_1\times K_2}$ such that for every $\mX \in \sR^{N \times F\times K_1},\mY \in \sR^{N \times C\times K_1}$ and $k_2\in[K_2]$, the corresponding outputs $T_1(\mX,\mY)\in\sR^{N\times F\times K_2}$ and $T_2(\mX,\mY)\in\sR^{N\times C\times K_2}$ are given by
    \begin{equation*}
        \begin{split}
            T_1(\mX,\mY)_{:,:,k_2} &= \left(\oneVec_{N,N} \mX^{(1)}_{:,:,k_2}
                + \mX^{(2)}_{:,:,k_2}\right) \oneVec_{F, F}
                + \oneVec_{N,N} \mX^{(3)}_{:,:,k_2}
                + \mX^{(4)}_{:,:,k_2}\\
                &+ \left(\oneVec_{N,N} \mY^{(5)}_{:,:,k_2} 
                + \mY^{(6)}_{:,:,k_2}\right) \oneVec_{C, F},\\
            T_2(\mX,\mY)_{:,:,k_2} &= \left(\oneVec_{N,N} \mY^{(1)}_{:,:,k_2} 
                + \mY^{(2)}_{:,:,k_2}\right) \oneVec_{C, C} 
                + \oneVec_{N,N} \mY^{(3)}_{:,:,k_2}
                + \mY^{(4)}_{:,:,k_2}\\
                &+ \left(\oneVec_{N,N} \mX^{(5)}_{:,:,k_2} 
                + \mX^{(6)}_{:,:,k_2}\right) \oneVec_{F, C} ,
        \end{split}
    \end{equation*}
    for every $k_2\in[K_2]$, where for every $i\in[6]$, we define $\mX^{(i)}_{:,:,k_2} = \sum_{k_1=0}^{K_1} \mX_{:,:,k_1}\Lambda^{(i)}_{k_1,k_2}$ and $ \mY^{(i)}_{:,:,k_2} = \sum_{k_1=0}^{K_1} \mY_{:,:,k_1}\Lambda^{(i+6)}_{k_1,k_2}\ $.
\end{propositioncopy}
\begin{proof}
  For  $i\in\{1,2\}$, consider the representation $\rho_i:G\to\GL{\sR^{N \times (F+C)\times K_i}}$, where $G := S_N \times S_F \times S_C$ is defined in \cref{sub_app:intro_triple}  by 
    \begin{equation}
        \label{eq:def_s_nfc}
        \left((\sigma_N, \sigma_F, \sigma_C)\cdot \mX\right)_{:,n,j} =
        \begin{cases}
            \mX_{:,\sigma_N^{-1}(n),\sigma_F^{-1}(j)} & \text{if } j\in[F], \\
            \mX_{:,\sigma_N^{-1}(n),F+\sigma_C^{-1}(j-F)} & \text{otherwise } 
        \end{cases}        
    \end{equation}
    for all $(\sigma_N, \sigma_F, \sigma_C) \in S_N \times S_F\times S_C$ and $\mX \in \sR^{N\times (F+C)\times K_i }$.
    Similarly, denote by $\rho:G\rightarrow \GL{\sR^{N\times (F+C)}}$ the representation as in (\ref{eq:def_s_nfc}) with $K_i=1$.
    
Let $i\in\{1,2\}$. 
For each $k \in [K_i]$, let $V^i_k\subset \sR^{N\times (F+C)\times K_i}$ be the subspace of vectors $\mX$ such that $\mX_{n,j,k'}=0$ for every $k'\neq k$. Note that $V^i_K$ is an invariant subspace. The subrepresentation $\rho_i|_{V^i_k}$ is  isomorphic to $\sR^{N\times (F+C)}$ via the isomorphism $\pi^i_k:\sR^{N\times (F+C)}\rightarrow V^i_k$ defined by $\pi^i_k(\mX)=\mX_{:,:,k}$.

By Lemma \ref{lem:invar}, the only irreducible subspaces of $\rho_i$ are
\[\{V^i_{k,a,b,L}:=(\pi^i_k)^{-1}(V_a^N\otimes R_b^L)\}_{k\in[K_i], a,b\in\{0,1\}, L\in\{F,C\}}.\]

By dimensionality consideration, the only pairs of subrepresentations of $\rho_1$ and $\rho_2$ that can be isomorphic are $\rho_1|_{V^1_{k_1,a,b,L}}$ and $\rho_2|_{V^1_{k_2,a,b,L}}$ for any $k_1\in[K_1]$, $k_2\in[K_2]$ and $a,b\in\{0,1\}$, $L\in\{F,C\}$. Moreover, it is easy to verify that $\rho_1|_{V^1_{k_1,a,b,L}}\sim\rho_2|_{V^1_{k_2,a,b,L}}$ for any such $k_1,k_2,a,b,L$. Indeed, a representation isomorphism is given by $\pi^2_{k_2}\circ(\pi^1_{k_1})^{-1}$.

Note that we have the direct sum decomposition into orthogonal subspaces
\[\sR^{N\times (F+C)\times K_i} =\bigoplus_{k\in[K_i], a,b\in\{0,1\}, L\in\{F,C\}} V^i_{k,a,b,L}. \]
Let $P^i_{k,a,b,L}$ be the orthogonal projection upon $V^i_{k,a,b,L}$.
Let $T$ be an intertwining operator between $\rho_1$ and $\rho_2$. We can decompose $T$ into the direct sum
\begin{equation}
\label{eq:23fg}
  T=\sum_{k,k',a,a',b,b',L,L'} P^i_{k',a',b',L'} T P^i_{k,a,b,L}.  
\end{equation}
By Schur's lemma (Lemma \ref{lemma:schur}), the only nonzero components of (\ref{eq:23fg}) are those with $a=a',b=b',L=L'$. Moreover, by Schur's lemma, the operator $T_{k,k',a,b,L}:= P^i_{k,a,b,L} T P^i_{k,a,b,L}=T P^i_{k,a,b,L}$ can only come from a one dimensional space of operators. 

Next, we find these one dimensional spaces explicitly:
\begin{align*}
    T_1 &: V_{k,k',0,0,F}\to V_{k,k',0,0,F},
    & T_1(\mX) &= \mX - \oneVec_{N,N}\,\mX\,\oneVec_{F,F} \\[4pt]
    T_2 &: V_{k,k',0,1,F}\to V_{k,k',0,1,F},
    & T_2(\mX) &= \big(\mX - \oneVec_{N,N}\,\mX\big)\,\oneVec_{F,F} \\[4pt]
    T_3 &: V_{k,k',1,0,F}\to V_{k,k',1,0,F},
    & T_3(\mX) &= \oneVec_{N,N}\,\big(\mX - \mX\,\oneVec_{F,F}\big) \\[4pt]
    T_4 &: V_{k,k',1,1,F}\to V_{k,k',1,1,F},
    & T_4(\mX) &= \oneVec_{N,N}\,\mX\,\oneVec_{F,F} \\[8pt]
    T_5 &: V_{k,k',0,0,C}\to V_{k,k',0,0,C},
    & T_5(\mY) &= \mY - \oneVec_{N,N}\,\mY\,\oneVec_{C,C} \\[4pt]
    T_6 &: V_{k,k',0,1,C}\to V_{k,k',0,1,C},
    & T_6(\mY) &= \big(\mY - \oneVec_{N,N}\,\mY\big)\,\oneVec_{C,C} \\[4pt]
    T_7 &: V_{k,k',1,0,C}\to V_{k,k',1,0,C},
    & T_7(\mY) &= \oneVec_{N,N}\,\big(\mY - \mY\,\oneVec_{C,C}\big) \\[4pt]
    T_8 &: V_{k,k',1,1,C}\to V_{k,k',1,1,C},
    & T_8(\mY) &= \oneVec_{N,N}\,\mY\,\oneVec_{C,C} \\[8pt]
    T_9 &: V_{k,k',0,1,F}\to V_{k,k',0,1,C},
    & T_9(\mX) &= \mX\,\oneVec_{F,C} \\[4pt]
    T_{10} &: V_{k,k',0,1,C}\to V_{k,k',0,1,F},
    & T_{10}(\mY) &= \mY\,\oneVec_{C,F} \\[4pt]
    T_{11} &: V_{k,k',1,1,F}\to V_{k,k',1,1,C},
    & T_{11}(\mX) &= \oneVec_{N,N}\,\mX\,\oneVec_{F,C} \\[4pt]
    T_{12} &: V_{k,k',1,1,C}\to V_{k,k',1,1,F},
    & T_{12}(\mY) &= \oneVec_{N,N}\,\mY\,\oneVec_{C,F}
\end{align*}
It is easy to see that each of the above transformation is a representation isomorphism. Lastly, the set of transformations from \cref{prop:lin} is a reparameterization of the above isomorphisms.

\end{proof}

\section{Extended notations of TSNets}
\label{app:lin}

\begin{definition}
    Let ${\rm state}_1,{\rm state}_2\in\{{\rm self}, {\rm pool}\}$ and $D,L\in\{F,C\}$.
    For $k_1\leq K_1\in\sN$ and $k_2\leq K_2\in\sN$, denote by $[D_{{\rm state}_1}\xrightarrow{N_{{\rm state}_2}} L]^{(k_1,k_2)}:\sR^{N\times(F+C)\times K_1} \rightarrow \sR^{N\times(F+C)\times K_2}$ the operator, defined for  $(\mX,\mY)\in \sR^{N\times(F+C)\times K_1}$ by
    \[
        T\;=\;\sum_{i=1}^{12}\ \sum_{k_2=1}^{K_2}\ \sum_{k_1=1}^{K_1} \Lambda^{(i)}_{k_1,k_2}\,
        \big[D_{{\rm state}_1(i)}\xrightarrow{N_{{\rm state}_2(i)}} L(i)\big]^{(k_1,k_2)},
    \]
    and for any $(k'_1,k'_2)\neq (k_1,k_2)$
    \[[
    D_{{\rm state}_1}\xrightarrow{N_{{\rm state}_2}} L]^{(k_1,k_2)}(\mX,\mY)_{k'_2,:,:}=
    (0,0).
    \]
\end{definition}
This is exactly the form stated in \Cref{prop:lin}: in particular,
\begin{equation*}
        \begin{split}
            T_1(\mX,\mY)_{:,:,k_2} &= \left(\oneVec_{N,N} \mX^{(1)}_{:,:,k_2}
                + \mX^{(2)}_{:,:,k_2}\right) \oneVec_{F, F}
                + \oneVec_{N,N} \mX^{(3)}_{:,:,k_2}
                + \mX^{(4)}_{:,:,k_2}\\
                &+ \left(\oneVec_{N,N} \mY^{(5)}_{:,:,k_2} 
                + \mY^{(6)}_{:,:,k_2}\right) \oneVec_{C, F},\\
            T_2(\mX,\mY)_{:,:,k_2} &= \left(\oneVec_{N,N} \mY^{(1)}_{:,:,k_2} 
                + \mY^{(2)}_{:,:,k_2}\right) \oneVec_{C, C} 
                + \oneVec_{N,N} \mY^{(3)}_{:,:,k_2}
                + \mY^{(4)}_{:,:,k_2}\\
                &+ \left(\oneVec_{N,N} \mX^{(5)}_{:,:,k_2} 
                + \mX^{(6)}_{:,:,k_2}\right) \oneVec_{F, C} ,
        \end{split}
    \end{equation*}
where for each $i\in[6]$,
$\ \mX^{(i)}_{:,:,k_2}=\sum_{k_1=1}^{K_1}\mX_{:,:,k_1}\Lambda^{(i)}_{k_1,k_2}\ $ and $\ \mY^{(i)}_{:,:,k_2}=\sum_{k_1=1}^{K_1}\mY_{:,:,k_1}\Lambda^{(i+6)}_{k_1,k_2}$.

For $(\mX,\mY)\in\mathbb{R}^{N\times(F+C)}$ (i.e., $K_1=K_2=1$), the twelve building blocks read
\begin{align*}
&[F_{\rm self}\xrightarrow{N_{\rm self}} F](\mX,\mY)=(\mX,0),\quad
&[F_{\rm pool}\xrightarrow{N_{\rm self}} F](\mX,\mY)=(\mX\,\oneVec_{F,F},0),\\
&[F_{\rm self}\xrightarrow{N_{\rm pool}} F](\mX,\mY)=(\oneVec_{N,N}\mX,0),\quad
&[F_{\rm pool}\xrightarrow{N_{\rm pool}} F](\mX,\mY)=(\oneVec_{N,N}\mX\,\oneVec_{F,F},0),\\
&[C_{\rm pool}\xrightarrow{N_{\rm self}} F](\mX,\mY)=(\mY\,\oneVec_{C,F},0),\quad
&[C_{\rm pool}\xrightarrow{N_{\rm pool}} F](\mX,\mY)=(\oneVec_{N,N}\mY\,\oneVec_{C,F},0),\\
&[C_{\rm self}\xrightarrow{N_{\rm self}} C](\mX,\mY)=(0,\mY),\quad
&[C_{\rm pool}\xrightarrow{N_{\rm self}} C](\mX,\mY)=(0,\mY\,\oneVec_{C,C}),\\
&[C_{\rm self}\xrightarrow{N_{\rm pool}} C](\mX,\mY)=(0,\oneVec_{N,N}\mY),\quad 
&[C_{\rm pool}\xrightarrow{N_{\rm pool}} C](\mX,\mY)=(0,\oneVec_{N,N}\mY\,\oneVec_{C,C}),\\
&[F_{\rm pool}\xrightarrow{N_{\rm self}} C](\mX,\mY)=(0,\mX\,\oneVec_{F,C}),\quad 
&[F_{\rm pool}\xrightarrow{N_{\rm pool}} C](\mX,\mY)=(0,\oneVec_{N,N}\mX\,\oneVec_{F,C}).
\end{align*}

\noindent\textbf{Interpretation of \emph{self} and \emph{pool}.}
\begin{itemize}
\item \emph{Self} along an axis means “no aggregation” on that axis (identity action). E.g., $F_{\rm self}$ keeps per-feature structure; $N_{\rm self}$ keeps per-node structure.
\item \emph{Pool} along an axis means “sum-and-broadcast” on that axis via $\oneVec$: left multiplication by $\oneVec_{N,N}$ pools over nodes and copies the node-sum to every node; right multiplication by $\oneVec_{F,F}$ (resp. $\oneVec_{C,C}$) pools over features (resp. labels) and broadcasts to all features (resp. labels).
\item Cross–family maps ($F\!\to\!C$ or $C\!\to\!F$) necessarily use the corresponding all-ones bridge $\oneVec_{F,C}$ or $\oneVec_{C,F}$, hence require the source to be in the “pool” state.
\end{itemize}

\section{Proof of \cref{thm:universality}}
\label{app:univ}
This section presents the proof of \cref{thm:universality}. We describe relevant background and definitions in \cref{app_subsec:notations}. We then review  results on symmetric polynomials in \cref{app_subsec:symmetric} and extend them to our setting of triple symmetry in \cref{app_subsec:triple-symm}. The subsequent \cref{app_subsec:lemm}--\cref{Existence of descriptors that can} provides supporting lemmas and approximation properties that play a  role in \cref{thm:universality}. In \cref{Topological Spaces,Quotient Spaces} we discuss properties of topological spaces required for the proof of \cref{thm:universality}. Finally, we complete the proof of \cref{thm:universality} in \cref{app_subsec:univ}.

\textbf{Proof strategy.} We prove \cref{thm:universality} while avoiding the aforementioned caveat by closely utilizing the proof techniques introduced in \cite{segol2019universal} and \cite{maron2020learningsetssymmetricelements}. Specifically, our proof idea can be decomposed into four high-level steps: \textbf{(i) Characterization of $(S_F\times S_C)$-invariant polynomials.} It is well established that any $S_N$-invariant polynomial can be represented as a polynomials in the multisymmetric power-sum polynomials \cite{Briand04,segol2019universal,rydh2007minimal}. We provide a non-trivial extension to a richer symmetry setting by defining the doubly-symmetric polynomials (DMPs) (see \cref{def:dmps}) and showing that any $(S_F \times S_C)$-invariant polynomial, can be represented as a polynomials in the DMPS (see \cref{lem:double_symm_universal}). 
\textbf{(ii) Expressing $(S_{N-1}\times S_{F-1}\times S_C)$-invariant polynomials using DMPs.} We adapt the techniques from Theorem 3 of \cite{maron2020learningsetssymmetricelements} and Lemma 2 of \cite{segol2019universal} to use the DMPs introduced in (i) and the $(S_N \times S_F \times S_C)$-equivariant linear maps we derived in \cref{eq:lin} to express $(S_{N-1} \times S_{F-1} \times S_C)$- or $(S_{N-1}\times S_F \times S_{C-1})$-invariant polynomials.
\textbf{(iii) Composition of $(S_N\times S_F\times S_C)$-equivariant polynomials.} We extend Lemmas 1 from \cite{segol2019universal} to show that the outputs of an $(S_N\times S_F \times S_C)$-equivariant polynomial are $(S_{N-1}\times S_{F-1} \times S_C)$- or $(S_{N-1}\times S_F \times S_{C-1})$-invariant.
By leveraging the stronger symmetry structure provided by the permutation groups rather than their subgroups, we avoid the pitfall in Theorem 3 by \cite{maron2020learningsetssymmetricelements} -- enabling the composition of equivariant functions via invariant ones, which was also the goal of the flawed transition in \cite{maron2020learningsetssymmetricelements}, as discussed above.
\textbf{(iv) Approximating $(S_N\times S_C)$-equivariant and $S_F$-invariant functions using TSNets.} 
We replicate each intermediate step from (ii) and (iii), constructing approximators for the corresponding polynomial spaces using the $(S_N\times S_F \times S_C)$-equivariant linear maps derived in \cref{eq:lin}, and multi-layer perceptrons instantiated from the Universal Approximation Theorem (\ref{thm:UAT}).
Finally, by incorporating the label projection map $\pi$ as the final layer of our architecture, we enforce $S_F$-invariance and yield an approximator for the desired class of $(S_N \times S_C)$-equivariant and $S_F$-invariant functions, completing our construction for the triple-symmetry setting. A more detailed illustration, incorporating additional steps, is provided in \cref{fig:proof_diag} with the complete proof detailed in \cref{app:univ}.

Note that when labels are removed (i.e., $C = 0$), steps (ii) and (iii) recover a corrected but less general version of Theorem 3 in \cite{maron2020learningsetssymmetricelements} for $H = S_F$, using techniques from \cite{segol2019universal,maron2020learningsetssymmetricelements}.

\subsection{Background and Definitions}
\label{app_subsec:notations}
\textbf{Norms.} 
    The $L_\infty$ norm of a vector $\vx\in\sR^L$ is defined to be $\norm{\vx}_{\infty} = \max_{i\in[L]}\abs{\vx_i}$ . The $L_\infty(\mathcal{K})$ norm of a continuous function $f:\mathcal{K}\to\sR^{L}$ over a compact topological space $\mathcal{K}$ is defined to be
   \[
        \|f\|_{\infty} = \max_{\vx\in\mathcal{K}} \norm{f(\vx)}_\infty.
    \]
    We often denote in short $L_{\infty}$ instead of $L_{\infty}(\mathcal{K})$.
    Note that any continuous function over a compact domain $\mathcal{K}$ has a finite $L_{\infty}(\mathcal{K})$ norm. We denote the space of all continuous functions over $\mathcal{K}$ by $L_\infty(\mathcal{K})$.\footnote{In the literature the space of continuous functions over $\mathcal{K}$ with the infinity norm is typically denoted by $C(\mathcal{K})$.}

\textbf{Multivariate functions.}
Let $\mX\in\sR^{N\times (F+C)}$. We denote the first $F$ columns by $\mX^{(F)}\in\sR^{N\times F}$ and the last $C$ columns by $\mX^{(C)}\in\sR^{N\times C}$, and write $\mX=(\mX^{(F)},\mX^{(C)})$.  

Let $\mX\in\sR^{N\times D}$ and let $p:\sR^{N\times D}\to \sR^{L\times K}$.
For any $l\in[L],k\in[K]$, we denote by $p_{l,k}:\sR^{N\times D} \to \sR$ the  $(l,k)$-th element of $p(\mX)$.
For any $l\in[L]$, we denote by $p_l:\sR^{N\times D} \to \sR^K$  the $l$-th row of $p(\mX)$.
We call $p_l$ the $K$-dimensional feature vector corresponding to the $l$-th row of $p(\mX)$, and  write 
\[
    p_l(\mX) = \left( p(\mX)_{l,1}, p(\mX)_{l,2}, \ldots, p(\mX)_{l,K} \right) \in \sR^K.
\]

\subsection{Restrictions of the  triple-symmetric group representation} 
\label{Restrictions of the  triple-symmetric group representation}

Consider the subgroup $S_{N-1}\times S_{F-1}\times S_C$ of the triple-symmetric group.   
We define the action of $S_{N-1}\times S_{F-1}\times S_C$ on $\sR^{N\times (F+C)}$ by restricting the permutation $\sigma_{F-1} \in S_{F-1}$ to act only on the second to the $F$-th columns, and restricting the permutation $\sigma_{N-1} \in S_{N-1}$ to act only on last $(N-1)$-th rows, keeping element $(1,1)$ fixed. Namely,
\[
    \left((\sigma_{N-1},\sigma_{F-1}, \sigma_C)\cdot \mX\right)_{n,j} =
    \begin{cases}
        \mX_{1, 1} & \text{if } n=1 \text{ and } j=1, \\
        \mX_{1,1+\sigma_{F-1}^{-1}(j-1)} & \text{if } n=1 \text{ and } j\in[F]\setminus[1], \\
        \mX_{1,F+\sigma_C^{-1}(j-F)} & \text{if } n=1 \text{ and } j\in[F+C]\setminus[F], \\
        \mX_{1+\sigma_{N-1}^{-1}(n-1), 1} & \text{if } n\in[N]\setminus[1] \text{ and } j=1, \\
        \mX_{1+\sigma_{N-1}^{-1}(n-1), 1+ \sigma_{F-1}^{-1}(j-1)} & \text{if } n\in[N]\setminus[1] \text{ and } j\in[F]\setminus[1],\\
        \mX_{1+\sigma_{N-1}^{-1}(n-1),F+\sigma_C^{-1}(j-F)} & \text{otherwise },
    \end{cases}
\]
for all $(\sigma_{N-1},\sigma_{F-1}, \sigma_C) \in S_{N-1}\times S_{F-1}\times S_C$ and $\mX\in\sR^{N\times (F+C)}$.

Similarly, the action of $S_{N-1}\times S_F\times S_{C-1}$ on $\sR^{N\times (F+C)}$ is defined as follows. The permutation $\sigma_{C-1} \in S_{C-1}$ acts only on the last $C-1$ label columns and the permutation $\sigma_{N-1} \in S_{N-1}$ acts act only on last $N-1$-th rows, keeping element $(1,F+1)$ fixed. Namely,
\begin{align*}
    &\left((\sigma_{N-1},\sigma_F, \sigma_{C-1})\cdot \mX\right)_{n,j}= \\
    &\qquad\qquad=\begin{cases}
        \mX_{1,F+1} & \text{if } n=1\text{ and }j=F+1, \\
        \mX_{1,F+1+\sigma_{C-1}^{-1}(j-F-1)} & \text{if } n=1 \text{ and } j\in[F+C]\setminus[F+1], \\
        \mX_{1,\sigma_F^{-1}(j)} & \text{if } n=1 \text{ and } j\in[F], \\
        \mX_{1+\sigma_{N-1}^{-1}(n-1), F+1} & \text{if } n\in[N]\setminus[1] \text{ and } j=F+1, \\
        \mX_{1+\sigma_{N-1}^{-1}(n-1), F+1+\sigma_{C-1}^{-1}(j-F-1)} & \text{if } n\in[N]\setminus[1] \text{ and } j\in[F+C]\setminus[F+1],\\
        \mX_{1+\sigma_{N-1}^{-1}(n-1), \sigma_F^{-1}(j)} & \text{otherwise },
    \end{cases}
\end{align*}
for all $(\sigma_{N-1},\sigma_F, \sigma_{C-1}) \in S_{N-1}\times S_F\times S_{C-1}$ and $\mX\in\sR^{N\times (F+C)}$.

By abuse of notation, for $(\sigma_{N},e,e)\in S_N\times S_F\times S_C$, we often denote in short $\sigma_{N}=(\sigma_{N},e,e)$, and similarly denote $\sigma_F=(e,\sigma_F,e),$ and $\sigma_C=(e,e,\sigma_C)$.

\subsection{Symmetric Polynomials}
\label{app_subsec:symmetric}
This subsection introduces key definitions and lemmas from symmetric polynomials and invariant theory that will be extended to our setting of triple-symmetry in \cref{app_subsec:triple-symm}.

\paragraph{Polynomials.} 
 A \emph{multi-index} $\valpha$ with $D\in\sN$ components is an element of $\sN_0^D$, where $\sN_0$ is the set of non-negative integers. Given a multi-index $\valpha\in \sN_0^D$ and $\vx=(\vx_1, \cdots, \vx_D)\in\sR^D$, we define the \emph{monomial} $\vx^\valpha := \vx_1^{\alpha_1} \cdots \vx_D^{\alpha_D}$ and call $\norm{\valpha}_1:=\sum_d \alpha_d$   the \emph{degree} of the monomial. 

\begin{definition}[Polynomial Ring]
    A \emph{polynomial ring} is a set $R$ of polynomials from $\sR^L$ to $\sR$  invariant to the following operations:  for all $p,q\in R$ and $c\in \sR$ 
    \begin{itemize}
        \item Addition. $p+q\in R$,
        \item Multiplication. $p q\in R$,
        \item Scalar multiplication. $c p\in R$.
    \end{itemize}
\end{definition}
Equivalently, a set $R$ of polynomials $\sR^L\rightarrow\sR$ is a ring if for every $D\in\sN$, every $p_1,\ldots,p_D\in R$, and every polynomial  $q:\sR^{D}\rightarrow\sR$, we have $q(p_1,\ldots,p_D)\in R$.

Let $R$ be a ring of polynomials. A finite set of polynomials ${p_1, \ldots, p_L} \subset R$ is called a \emph{generating set} for $R$ if every polynomial in $R$ can be expressed as a polynomial in $p_1, \ldots, p_L$ with real coefficients.

\paragraph{Symmetry preserving polynomials.}

\begin{definition}[Invariant functions]
Let $G$ be a group that acts on $\sR^L$. A function $p:\sR^L\to\sR^D$ is called $G$-\emph{invariant} if
    \[
        p(g \cdot \vx) = p(\vx), \quad\forall g \in G,\vx\in\sR^L.
    \]
\end{definition}
\begin{definition}[Equivariant functions]
Let $G$ be a group that acts on $\sR^L$. A function $p:\sR^L\to\sR^L$ is called $G$-\emph{equivariant} if
    \[
        p(g \cdot \vx) = g \cdot p(\vx), \quad\forall g \in G,\vx\in\sR^L.
    \]
\end{definition}

We now define the power-sum multi-symmetric polynomials, which form a generating set for the ring of invariant polynomials to permutations.
\begin{definition}[Power-sum Multi-symmetric Polynomials]
    \label{def:pmps}
    Let $M,D\in\sN$. Let $T = \binom{M + D}{D}$, and let $\{\valpha^{(t)}\}_{t=1}^T$ be an enumeration of all multi-indices $\valpha^{(t)} \in \sN_0^D$ such that $\norm{\valpha^{(t)}}_1 \le M$.\footnote{We count the number of possible values of $\valpha$ by introducing a slack variable $\alpha_{D+1}\coloneqq M - \sum_{i=1}^{D} \alpha_i$, so that $\sum_{i=1}^{D+1} \alpha_i = M$. The number of such non-negative integer solutions is equivalent to distributing $M$ indistinguishable balls into $D+1$ distinguishable bins, which yields $\binom{M+D}{D}$ possibilities.} The \emph{power-sum multi-symmetric polynomials (PMP)} $\{s_t\}_{t=1}^T:\sR^{N\times D}\to\sR$ are defined by
    \[
    s_t(\mX) = \sum_{n=1}^N \mX_{n,:}^{\valpha^{(t)}}.
    \]
\end{definition}

The following well-known result from invariant theory shows that the PMPs are a generating set for the ring of  $S_N$ invariant polynomials.  
\begin{lemma}[Corollary 8.4 in \citep{rydh2007minimal}]
\label{lem:multi_symm_universal}
    Let $p: \sR^{N \times D} \to \sR$ be an $S_N$-invariant polynomial with degree $M$. Then, it can be written as a polynomial in the PMPs
    \[
    p(\mX) = q(s_1(\mX), \ldots, s_T(\mX)),
    \]
    for some polynomial $q: \sR^T \to \sR$, where $T = \binom{M + D}{D}$.
\end{lemma}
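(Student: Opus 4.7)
The plan is to reduce the claim to a recursion on the number of distinct ``parts'' appearing in a multi-symmetric monomial. First, since the $S_N$-action preserves total degree, I would decompose $p = \sum_{d=0}^{M} p_d$ into homogeneous components; each $p_d$ is itself $S_N$-invariant, so it suffices to prove the claim for each fixed degree $d \le M$. The space of homogeneous $S_N$-invariant polynomials of degree $d$ on $\sR^{N\times D}$ is linearly spanned by the \emph{monomial multi-symmetric polynomials} $m_\Lambda(\mX)$, defined as the sum of $\mX_{n_1,:}^{\valpha^{(1)}} \cdots \mX_{n_k,:}^{\valpha^{(k)}}$ over injections $(n_1,\ldots,n_k) \hookrightarrow [N]$ modulo the stabilizer of $\Lambda$, indexed by unordered multisets $\Lambda = \{\!\!\{\valpha^{(1)},\ldots,\valpha^{(k)}\}\!\!\}$ of nonzero multi-indices with $\sum_i \|\valpha^{(i)}\|_1 = d$ and $k \le N$. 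It therefore suffices to show every such $m_\Lambda$ is a polynomial in the PMPs.

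The key identity is the expansion
\[
\prod_{i=1}^{k} s_{\valpha^{(i)}}(\mX) \;=\; c_\Lambda \, m_\Lambda(\mX) \;+\; \sum_{\Lambda'} c_{\Lambda'} \, m_{\Lambda'}(\mX),
\]
obtained by writing $s_{\valpha^{(i)}} = \sum_{n_i \in [N]} \mX_{n_i,:}^{\valpha^{(i)}}$, expanding the product, and partitioning the resulting multi-sum over $(n_1, \ldots, n_k) \in [N]^k$ according to the set partition of $\{1,\ldots,k\}$ induced by the coincidences $n_i = n_j$. The block where all $n_i$ are distinct contributes $c_\Lambda \, m_\Lambda$ with $c_\Lambda = N(N-1)\cdots(N-k+1)/|\mathrm{Stab}(\Lambda)| > 0$; the remaining ``collision'' terms contribute monomials $m_{\Lambda'}$ for multisets $\Lambda'$ obtained by summing some of the $\valpha^{(i)}$ together, each having strictly fewer distinct parts than $\Lambda$. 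Solving for $m_\Lambda$ and inducting on $k$ (with base case $m_\emptyset = 1$) expresses every $m_\Lambda$ as a polynomial in the $s_{\valpha^{(i)}}$. Since $\|\valpha^{(i)}\|_1 \le d \le M$ for every part appearing, the relevant PMPs are exactly those enumerated by the $T = \binom{M+D}{D}$ multi-indices in the statement.

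The main obstacle is the combinatorial bookkeeping in this expansion -- specifically, enumerating the set of collision multisets $\Lambda'$, tracking their coefficients, and confirming that each such $\Lambda'$ indeed has strictly fewer distinct parts than $\Lambda$ so the induction is well-founded. These facts follow by sorting the expansion over $[N]^k$ by the set-partition of coincident indices and collecting like terms, but writing it out carefully is where sign or multiplicity errors most easily enter. A cleaner alternative would be to invoke Weyl's polarization theorem, reducing the multi-variable statement to the classical single-variable fundamental theorem plus Newton's identities; however, that route obscures the degree bound $\|\valpha\|_1 \le M$, which is essential for the enumeration $T = \binom{M+D}{D}$. I would therefore prefer the direct recursive approach, where the bound drops out transparently from the homogeneity of each $p_d$.
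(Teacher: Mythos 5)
The paper does not prove this lemma at all: it is stated as a known result and attributed to Corollary~8.4 of Rydh (2007), so there is no internal proof to compare against. Your proposal is a self-contained proof, and it follows the standard classical route — expanding products of PMPs in the basis of monomial multi-symmetric polynomials $m_\Lambda$ and observing that the change of basis is triangular with respect to the number of parts of $\Lambda$. The structure (homogeneous decomposition, spanning by $m_\Lambda$ with $k\le N$ nonzero parts, induction on $k$, and the degree count giving $T=\binom{M+D}{D}$) is all sound.

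One local slip in the bookkeeping you flagged as the risky spot: the coefficient of $m_\Lambda$ in $\prod_{i=1}^k s_{\valpha^{(i)}}$ is $|\mathrm{Stab}(\Lambda)|$, not $N(N-1)\cdots(N-k+1)/|\mathrm{Stab}(\Lambda)|$. The quantity you wrote is the \emph{number of distinct monomial summands} of $m_\Lambda$. To see this: the sum over injective tuples $(n_1,\ldots,n_k)\in[N]^k$ has $N(N-1)\cdots(N-k+1)$ terms; each distinct monomial appearing in $m_\Lambda$ is produced by exactly $|\mathrm{Stab}(\Lambda)|$ of those tuples (the stabilizer acts freely on injective tuples), so the injective block equals $|\mathrm{Stab}(\Lambda)|\,m_\Lambda$. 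The error is harmless here because all you use is that $c_\Lambda$ is a positive integer, which holds for either expression once you restrict to $k\le N$ — but it is exactly the kind of multiplicity bug you anticipated. The rest (collision blocks indexed by set partitions, resulting in multisets with strictly fewer parts, so the recursion terminates) is correct, and your remark about preferring the direct recursion over Weyl polarization because it makes the degree bound transparent is a fair observation.
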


The following lemma enables the decomposition of polynomials that are $S_{N-1}$-invariant into a combination of polynomials that are $S_N$-invariant. 
 The lemma, originally stated as Lemma 2 in \cite{segol2019universal}, had a minor error in the summation index bounds, which we correct.
\begin{lemma}[Lemma 2 in \cite{segol2019universal}]
\label{lem:S_n-1}
    Let $p:\sR^{N\times D}\to \sR$ be an $S_{N-1}$-invariant polynomial of degree $M$. Then, there exist $S_N$-invariant polynomials  $\{q_\valpha:\sR^{N\times D}\to\sR\}_{ \norm{\valpha}_1\leq M}$ such that
    \[
        p(\mX)=\sum_{\norm{\valpha}_1\leq M} \mX_{1,:}^\valpha q_\valpha(\mX).
    \]    
    Here, $\valpha$ runs over all multi-indices in $\sN_0^D$ of degree less or equal to $M$.
\end{lemma}	

\subsection{Triple-symmetric Polynomials}
\label{app_subsec:triple-symm}
In this subsection, we extend the results on symmetric polynomials to our triple-symmetry setting.
This extension provides us with the mathematical tools needed to show that: (1) any symmetry preserving
continuous function over symmetric compact domains can be approximated by triple-symmetry preserving polynomials. 
(2) any symmetry preserving
polynomial over symmetric compact domains can be approximated by TSNets.
Together, these results establish the universality of TSNets under triple symmetry.

We now define the doubly power-sum multi-symmetric polynomials, which form a generating set for the ring of $(S_F\times S_C)$-invariant polynomials $\sR^{N\times (F+C)}\rightarrow \sR$.

\begin{definition}[Doubly power-sum Multi-symmetric Polynomials]
    \label{def:dmps}
    Let $M,N\in\sN$. Let $T = \binom{M + N}{N}$, and let $\{\valpha^{(t)}\}_{t=1}^T$ be an enumeration of all multi-indices $\valpha^{(t)} \in \sN_0^N$ such that $\norm{\valpha^{(t)}}_1 \le M$. The \emph{doubly power-sum multi-symmetric polynomials (DMPs)} $\{s^{(F)}_t:\sR^{N\times (F+C)}\to\sR\}_{t=1}^T\cup\{s^{(C)}_t:\sR^{N\times (F+C)}\to\sR\}_{t=1}^T$ are the polynomials 
    \[
        s^{(F)}_t(\mX) = \sum_{f=1}^F \mX_{:,f}^{\valpha^{(t)}}\quad\text{and}\quad s^{(C)}_t(\mX) = \sum_{c=1}^C \mX_{:,F+c}^{\valpha^{(t)}}.
    \]
\end{definition}

We next show that the DMPs form a generating set for the ring of $(S_F\times S_C)$-invariant polynomials $\sR^{N\times (F+C)}\rightarrow \sR$.
This is later employed in the construction of the approximating neural networks in \cref{thm:universality}. 

\begin{lemma}
\label{lem:double_symm_universal}
    Let $p: \sR^{N \times (F+C)} \to \sR$ be an $S_F\times S_C$-invariant polynomial of degree $M$. Then, $p$ can be written as a polynomial in the DMPs
    \[
    p(\mX) = q(s^{(F)}_1(\mX), \ldots, s^{(F)}_T(\mX),
    s_1^{(C)}(\mX), \ldots, s^{(C)}_T(\mX)
    ),
    \]
    for some polynomial $q: \sR^{2T} \to \sR$, where $T = \binom{M + N}{N}$.
\end{lemma}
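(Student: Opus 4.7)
The plan is to exploit the fact that $S_F$ and $S_C$ act on disjoint blocks of variables (the first $F$ columns $\mX^{(F)}$ and last $C$ columns $\mX^{(C)}$ of $\mX$), so the ring of $(S_F\times S_C)$-invariants factors as a sum of products of an $S_F$-invariant polynomial in $\mX^{(F)}$ and an $S_C$-invariant polynomial in $\mX^{(C)}$. I will then apply \cref{lem:multi_symm_universal} separately to each factor.

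First, I expand $p$ in the $\mX^{(C)}$-monomial basis with coefficients polynomial in $\mX^{(F)}$:
\[
    p(\mX) = \sum_{\mu} M_\mu(\mX^{(F)}) \cdot (\mX^{(C)})^\mu,
\]
where $\mu\in\sN_0^{N\times C}$ ranges over exponent matrices on the $\mX^{(C)}$-variables and each $M_\mu$ has total degree at most $M$. Since distinct $\mX^{(C)}$-monomials are linearly independent over $\sR[\mX^{(F)}]$, applying any $\sigma_F\in S_F$ (which fixes $\mX^{(C)}$) to both sides forces each $M_\mu$ to be $S_F$-invariant. An analogous argument using $\sigma_C\in S_C$, which permutes the columns of $\mu$, shows $M_\mu=M_{\sigma_C\cdot\mu}$, so $M_\mu$ depends only on the $S_C$-orbit of $\mu$. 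Writing $M_O$ for the common value on an orbit $O$, I regroup to obtain
\[
    p(\mX)=\sum_O M_O(\mX^{(F)})\cdot\Bigl(\sum_{\mu\in O}(\mX^{(C)})^\mu\Bigr).
\]

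I then apply \cref{lem:multi_symm_universal} twice. Each orbit sum $\sum_{\mu\in O}(\mX^{(C)})^\mu$ is an $S_C$-invariant polynomial of degree at most $M$; viewing $\mX^{(C)}$ as a matrix in $\sR^{C\times N}$ via transposition (so that $S_C$ permutes its $C$ rows, each of length $N$) identifies the PMPs of \cref{lem:multi_symm_universal} with the $s_t^{(C)}$, hence the orbit sum is a polynomial in $s_1^{(C)},\ldots,s_T^{(C)}$. Similarly, each $M_O$ is $S_F$-invariant of degree at most $M$ in $\mX^{(F)}\in\sR^{F\times N}$, so it is a polynomial in $s_1^{(F)},\ldots,s_T^{(F)}$. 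Substituting both representations gives $p$ as a polynomial $q$ in the DMPs. The degree bookkeeping is the only point requiring care: because $\deg M_O + |\mu|\le M$ for any $\mu\in O$, both applications only invoke PMPs indexed by $\valpha^{(t)}$ with $\|\valpha^{(t)}\|_1\le M$, matching the DMP collection of size $T=\binom{M+N}{N}$. There is no substantive obstacle beyond this bookkeeping; the argument is essentially unpacking the tensor-product structure $\sR[\mX^{(F)}]^{S_F}\otimes\sR[\mX^{(C)}]^{S_C}$ of the doubly-invariant ring on disjoint variables.
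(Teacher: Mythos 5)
Your proposal is correct and takes essentially the same route as the paper: both proofs exploit the disjoint-block structure by decomposing $p$ into a sum of products of an $\mX^{(F)}$-dependent factor and an $\mX^{(C)}$-dependent factor, verify that each factor inherits the appropriate $S_F$- or $S_C$-invariance from the invariance of $p$ and the uniqueness of the expansion, and then apply \cref{lem:multi_symm_universal} twice. The only cosmetic difference is that you expand in the $\mX^{(C)}$-monomial basis and regroup into $S_C$-orbit sums, whereas the paper expands in an abstract basis of $S_F$-invariant polynomials with $\mX^{(C)}$-dependent coefficients; the two are interchangeable.
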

\begin{proof}
    Let $p$ be an invariant polynomial of degree $M$ as stated in the lemma. Using  the notation $\mX=(\mX^{({ F})},\mX^{({ C})})\in\sR^{N\times (F+C)}$ for a point in the domain of $p$, we freeze $\mX^{(C)}$ as a fixed parameter, and treat $p$ as a function only of $\mX^{(F)}$, denoted by $p^{\mX^{(C)}}:\mX^{N\times F}\to\sR$, i.e.,
    \begin{equation}
        \label{eq:fix_c}
        p(\mX) = p^{\mX^{(C)}}(\mX^{(F)}).
    \end{equation}
Note that $p^{\mX^{(C)}}$ is an $S_F$-invariant polynomial $\sR^{N\times F}\rightarrow\sR$ for every $\mX^{(C)}\in\sR^{N\times C}$.

    The space of $S_F$-invariant polynomials over $\sR^{N\times F}$ of bounded degree is a finite dimensional linear space, and therefore has a basis $\{b_l\}_{l=1}^L$  with degree for each $b_l$ at most $M$. Hence, we can write 
    \begin{equation}
        \label{eq:basis}
        p^{\mX^{(C)}}(\mX^{(F)}) = \sum_{l=1}^L c_l(\mX^{(C)}) b_l(\mX^{(F)}),
    \end{equation}
    where the coefficients $\{c_l(\mX^{(C)})\}_{l=1}^L\in\sR$  depend on $\mX^{(C)}$.   
    By \cref{eq:fix_c,eq:basis}, we get
     \begin{equation}
        \label{eq:p_basis}
        p(\mX)= \sum_{l=1}^L c_l(\mX^{(C)}) b_l(\mX^{(F)}),
    \end{equation}
    where the coefficients $\{c_l\}_{l=1}^L:\sR^{N\times C}\to\sR$ must be polynomials of degree at most $M$ in $\mX^{(C)}$.
    
    Let $\sigma_C\in S_C$. Since $p$ is $S_C$-invariant, we get
    \[
        p(\mX)= p(\sigma_C\cdot\mX).
    \]
    Namely,
    \[
        \sum_{l=1}^L c_l(\mX^{(C)}) b_l(\mX^{(F)}) = \sum_{l=1}^L c_l(\sigma_C\cdot\mX^{(C)}) b_l(\mX^{(F)}).
    \]
    Now, due to the fact that basis expansions are unique,  we  must have
    \[
        \forall l\in[L],\quad c_l(\mX^{(C)}) = c_l(\sigma_C\cdot\mX^{(C)}).
    \]
    Hence, each $\{c_l\}_{l=1}^L$ is an $S_C$-invariant polynomial.
    
As a result, by \cref{lem:multi_symm_universal}, each $\{c_l\}_{l=1}^L$ can be written as a polynomial $v_l:\sR^T\to\sR$ in the multi-symmetric power-sum polynomials (PMPs) $\{s_i\}_{i=1}^T:\sR^{N\times C}\to\sR$, i.e.,
    \begin{equation}
        \label{eq:c_inv}
        c_l(\mX^{(C)}) = v_l(s_1(\mX^{(C)}),\ldots,s_T(\mX^{(C)})) \quad\forall l\in[L],
    \end{equation}
    where $T = \binom{M + N}{N}$.
    Moreover, since $b_l$ are $S_F$-invariant, by \cref{lem:multi_symm_universal}  each $\{b_l\}_{l=1}^L$ can also be written as a polynomial $u_l:\sR^T\to\sR$ in the same PMPs $\{s_i\}_{i=1}^T:\sR^{N\times F}\to\sR$, i.e.,
    \begin{equation}
        \label{eq:b_inv}
        b_l(\mX^{(F)}) = u_l(s_1(\mX^{(F)}),\ldots,s_T(\mX^{(F)})) \quad\forall l\in[L].
    \end{equation}
    
    By substituting \cref{eq:b_inv,eq:c_inv} into \cref{eq:p_basis}, we deduce that $p$ is a doubly power-sum multi-symmetric polynomial. 
\end{proof}

Similarly to \cref{lem:S_n-1}, which enables the decomposition of $S_{N-1}$-invariant polynomials into a combination of polynomials that are $(S_F\times S_C)$-invariant, the following lemma enables the decomposition of $(S_{F-1}\times S_C)$-invariant polynomials into a combination of polynomials that are $(S_F\times S_C)$-invariant. 

\begin{lemma}
\label{lem:S_f-1}
    Let $p:\sR^{N\times (F+C)}\to \sR$ be an $(S_{F-1}\times S_C)$-invariant polynomial of degree $M$.
    Then, there exist $S_F\times S_C$-invariant polynomials $\{q_\vkappa:\sR^{N\times (F+C)}\to\sR\}_{\norm{\vkappa}_1\leq M}$ such that 
    \[
        p(\mX)=\sum_{\norm{\vkappa}_1\leq M} \mX_{:,1}^\vkappa q_\vkappa(\mX).
    \]   
    Here, $\vkappa$ are multi-indeces in $\sN_0^{N}$.
\end{lemma}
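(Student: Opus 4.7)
The plan is to mirror the proof strategy of \cref{lem:S_n-1}, but with the doubly-symmetric DMPs from \cref{lem:double_symm_universal} playing the role of the multi-symmetric PMPs. First, I would expand $p$ as a polynomial in the $N$ entries of the first column, writing
\[
p(\mX) \;=\; \sum_{\norm{\vkappa}_1 \leq M} \mX_{:,1}^{\vkappa}\, r_{\vkappa}(\tilde\mX),
\]
where $\tilde\mX \in \sR^{N\times((F-1)+C)}$ collects the remaining columns and $\vkappa \in \sN_0^N$ indexes the monomial $\prod_{n=1}^N \mX_{n,1}^{\kappa_n}$. Because $S_{F-1}\times S_C$ fixes the first column and acts on $\tilde\mX$ as the full symmetric group on its first $F-1$ columns together with the full symmetric group on its last $C$ columns, the uniqueness of the expansion in monomials of $\mX_{:,1}$ forces each coefficient $r_{\vkappa}$ to be an $(S_{F-1}\times S_C)$-invariant polynomial on $\sR^{N\times((F-1)+C)}$, of degree at most $M-\norm{\vkappa}_1$.

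Next, I would apply \cref{lem:double_symm_universal} with $F$ replaced by $F-1$, writing each $r_{\vkappa}$ as a polynomial $w_{\vkappa}$ in the DMPs of $\tilde\mX$, which unpack as
\[
\tilde s^{(F-1)}_t(\tilde\mX)=\textstyle\sum_{f=2}^{F}\mX_{:,f}^{\valpha^{(t)}},
\qquad
\tilde s^{(C)}_t(\tilde\mX)=\textstyle\sum_{c=1}^{C}\mX_{:,F+c}^{\valpha^{(t)}}.
\]
The key identity linking these to the DMPs of the full matrix $\mX$ is
\[
s^{(F)}_t(\mX) \;=\; \mX_{:,1}^{\valpha^{(t)}} + \tilde s^{(F-1)}_t(\tilde\mX),
\qquad
s^{(C)}_t(\mX) \;=\; \tilde s^{(C)}_t(\tilde\mX),
\]
which lets me substitute $\tilde s^{(F-1)}_t = s^{(F)}_t(\mX) - \mX_{:,1}^{\valpha^{(t)}}$ inside $w_{\vkappa}$, while leaving $\tilde s^{(C)}_t$ untouched.

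Expanding the substituted polynomial produces a finite sum of products of the form $\prod_j \mX_{:,1}^{\valpha^{(t_j)}}$ multiplied by monomials in $\{s^{(F)}_t(\mX), s^{(C)}_t(\mX)\}_t$, and the latter monomials are $(S_F\times S_C)$-invariant. Using $\prod_j \mX_{:,1}^{\valpha^{(t_j)}}=\mX_{:,1}^{\sum_j\valpha^{(t_j)}}$ and combining with the outer factor $\mX_{:,1}^{\vkappa}$, I would collect like monomials in the first column and define $q_{\vkappa'}(\mX)$ to be the $(S_F\times S_C)$-invariant coefficient of $\mX_{:,1}^{\vkappa'}$ in the resulting expansion. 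Because the total degree of $p$ is $M$ and none of the above operations increase degree, every exponent vector $\vkappa'$ that appears satisfies $\norm{\vkappa'}_1 \leq M$, yielding the claimed decomposition.

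The main obstacle I foresee is purely organizational bookkeeping: the substitution $\tilde s^{(F-1)}_t \mapsto s^{(F)}_t - \mX_{:,1}^{\valpha^{(t)}}$ generates many cross-terms whose contributions to each $\mX_{:,1}^{\vkappa'}$-coefficient must be aggregated correctly in order to certify both the $(S_F\times S_C)$-invariance of $q_{\vkappa'}$ and the degree bound $\norm{\vkappa'}_1\leq M$. Once the identity relating the DMPs of $\mX$ and of $\tilde\mX$ is in place, however, the argument is a direct adaptation of the analogous step in \cref{lem:S_n-1} and no new conceptual ingredient is required.
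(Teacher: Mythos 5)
Your proposal is correct and follows essentially the same route as the paper's proof: expand $p$ in monomials of $\mX_{:,1}$, show the coefficients are $(S_{F-1}\times S_C)$-invariant, write them in the DMPs of the reduced matrix via \cref{lem:double_symm_universal}, substitute $\hat s^{(F-1)}_t = s^{(F)}_t(\mX) - \mX_{:,1}^{\valpha^{(t)}}$ (with $s^{(C)}_t$ unchanged), then re-expand in $\mX_{:,1}$ and collect. The only superficial difference is that you argue invariance of the coefficients via uniqueness of the monomial expansion, whereas the paper differentiates with respect to $\mX_{:,1}$ and evaluates at zero; both are valid.
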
	
\begin{proof}
    Following \cref{def:dmps} with respect to the the domain $\sR^{N\times ((F-1)+C)}$, we denote the doubly multi-symmetric power-sum polynomials (DMPs) $\sR^{N\times ((F-1)+C)}\rightarrow \sR$, taking the the variable $\mZ\in\sR^{N\times ( (F-1)+C)}$, by 
     \[
        \hat  s^{(F)}_t(\mZ) = \sum_{f=1}^{F-1} \mZ_{:,f}^{\valpha^{(t)}} \quad \text{ and }\quad  \hat  s^{(C)}_t(\mZ) = \sum_{c=1}^C \mZ_{:,F-1+c}^{\valpha^{(t)}},
     \]
    where these polynomials are parameterized by $t\in [T]$ for $T = \binom{M + N}{N}$. 
    Similarly, following \cref{def:dmps} with respect to the the domain $\sR^{N\times (F+C)}$, we denote the DMPs  which take the variable $\mY\in\sR^{N\times (F+C)}$ by
    \[
        s^{(F)}_t(\mY) = \sum_{f=1}^F \mY_{:,f}^{\valpha^{(t)}}\quad \text{ and }\quad  s^{(C)}_t(\mY) = \sum_{c=1}^C \mY_{:,F+c}^{\valpha^{(t)}},
    \]
    where these polynomials are parameterized by  the same enumeration $\valpha^{(t)} \in \sN_0^N$, $t\in [T]$, as the $\sR^{N\times ((F-1)+C)}\rightarrow \sR$ DMPs.  
    Now, note that for each $t\in[T]$ we have 
    \begin{equation}
        \label{eq:dmps_match}
        \hat s^{(F-1)}_t(\mX_{:,2},\ldots,\mX_{:,F+C}) = s^{(F)}_t(\mX) - \mX_{:,1}^{\vbeta^{(t)}} \text{ and }
        \hat s^{(C)}_t(\mX_{:,2},\ldots,\mX_{:,F+C}) = s^{(C)}_t(\mX).
    \end{equation}

    Observe that we can expand the polynomial $p(\mX)$ in $\mX_{:,1}$ as
    \begin{equation}
        \label{eq:expand}
        p(\mX) = \sum_{\norm{\vgamma}_1\leq M} \mX_{:,1}^\vgamma p_\vgamma(\mX_{:,2},\ldots,\mX_{:,F+C}),
    \end{equation}
    where each $p_\vgamma: \sR^{N\times (F-1+C)}\to \sR$ is a polynomial of degree at most $M-\norm{\vgamma}_1$. 
    Since derivatives commute with permutations, and specifically with  the representation of $S_{F-1}\times S_C$, any partial derivative of an $(S_{F-1}\times S_C)$-invariant polynomial is also an $(S_{F-1}\times S_C)$-invariant polynomial. Moreover, plugging $\mX_{:,1}=0$ into an $(S_{F-1}\times S_C)$-invariant polynomial $u:\sR^{F \times C}\rightarrow\sR$ gives an $(S_{F-1}\times S_C)$-invariant polynomial $u|_{\mX_{:,1}=0}:\sR^{N\times (F-1+C)}\rightarrow\sR$. Hence, the polynomials
    \[   p_\vgamma(\mX_{:,2},\ldots,\mX_{:,F+C})= \frac{1}{\prod_{i=1}^N\gamma_i !}\Big(\frac{\partial^{\norm{\vgamma}_1}}{\partial \mX_{:,1}^\vgamma}p\Big)(0, \mX_{:,2},\ldots,\mX_{:,F+C}),\quad \forall \norm{\vgamma}_1\leq M
    \]
    are $(S_{F-1}\times S_C)$-invariant. Therefore, by \cref{lem:double_symm_universal}, each $p_\vgamma$ can be expressed as a polynomial $\hat q_\vgamma: \sR^{2T} \to \sR$ in the DMPs $\sR^{N\times ((F-1)+C)}\rightarrow \sR$, i.e.,
    \begin{align*}
        p_\vgamma(\mX_{:,2},\ldots,\mX_{:,F+C}) 
        = \hat q_\vgamma\Big(&\hat s^{(F-1)}_1(\mX_{:,2},\ldots,\mX_{:,F+C}), \ldots, \hat s^{(F-1)}_T(\mX_{:,2},\ldots,\mX_{:,F+C}), \\
        &\hat s_1^{(C)}(\mX_{:,2},\ldots,\mX_{:,F+C}), \ldots, \hat s^{(C)}_T(\mX_{:,2},\ldots,\mX_{:,F+C})\Big).
    \end{align*}
    
By \cref{eq:dmps_match}, $p_\vgamma$  can be written as
    \begin{align}
        \label{eq:q_hat}
        p_\vgamma(\mX_{:,2},\ldots,\mX_{:,F+C}) 
        = \hat q_\vgamma\Big(&s^{(F)}_1(\mX) - \mX_{:,1}^{\vbeta^{(1)}}, \ldots, s^{(F)}_T(\mX)-\mX_{:,1}^{\vbeta^{(T)}}, \\
        &s_1^{(C)}(\mX^{(C)}), \ldots, s^{(C)}_T(\mX^{(C)})\Big).\nonumber
    \end{align}  
    We can now expand $\hat q_\vgamma$ in the right-hand-side of \cref{eq:q_hat} in monomials of $\mX_{:,1}$, 
    and write
    \begin{align}
        \label{eq:delta}
        p_\vgamma(\mX_{:,2},\ldots,\mX_{:,F+C}) 
        = \sum_{\norm{\vdelta}_1 \leq M-\norm{\vgamma}_1} \mX_{:,1}^{\vdelta} \widetilde q_{\vgamma,\vdelta}\Big(
            & s^{(F)}_1(\mX), \ldots, s^{(F)}_T(\mX),  \\
            & s^{(C)}_1(\mX), \ldots, s^{(C)}_T(\mX)\nonumber
        \Big)
    \end{align}
    where $\widetilde q_{\vgamma,\vdelta}$ is an $S_F\times S_C$-invariant polynomial as it is a polynomial in the $S_F\times S_C$-invariant DMPs. 
    
    When substituting \cref{eq:delta} into \cref{eq:expand}, we obtain
    \begin{equation*}
        p(\mX) 
        = \sum_{\norm{\vgamma}_1\leq M}\sum_{\norm{\vdelta}_1 \leq M-\norm{\vgamma}_1}\mX_{:,1}^{\vgamma + \vdelta} \widetilde q_{\vgamma,\vdelta}\Big(
            s^{(F)}_1(\mX), \ldots, s^{(F)}_T(\mX), s^{(C)}_1(\mX), \ldots, s^{(C)}_T(\mX)
        \Big).
    \end{equation*}
    Now, by performing the change of variables $\vdelta = \vkappa - \vgamma$, we have that
    \begin{equation}
        \label{eq:kappa}
        p(\mX) 
        = \sum_{\norm{\vkappa}_1\leq M}\mX_{:,1}^\vkappa \sum_{\substack{\norm{\vgamma}_1 \leq M \\ \norm{\vkappa - \vgamma}_1 \leq M - \norm{\vgamma}_1}} \widetilde q_{\vgamma,\vkappa-\vgamma}\Big(
            s^{(F)}_1(\mX), \ldots, s^{(F)}_T(\mX), s^{(C)}_1(\mX), \ldots, s^{(C)}_T(\mX)
        \Big).
    \end{equation}
    
    Lastly, by denoting
    \[
        q_\kappa(\mX) = \sum_{\substack{\norm{\vgamma}_1 \leq M \\ \norm{\vkappa - \vgamma}_1 \leq M - \norm{\vgamma}_1}} \widetilde q_{\vgamma,\vkappa-\vgamma}\Big(
            s^{(F)}_1(\mX), \ldots, s^{(F)}_T(\mX), s^{(C)}_1(\mX), \ldots, s^{(C)}_T(\mX)
        \Big),
    \]
     \cref{eq:kappa} can be written as
    \[
        p(\mX)=\sum_{\norm{\kappa}_1\leq M} \mX_{:,1}^\kappa q_\kappa(\mX).
    \]
    Here, note that as $q_\kappa(\mX)$ are defined as linear combinations of the $(S_F \times S_C)$-invariant polynomials $\widetilde q_{\vgamma,\vkappa-\vgamma}$, so   $q_\kappa(\mX)$ are also $(S_F \times S_C)$-invariant, concluding the proof.
\end{proof}

Analogously, a similar lemma holds for $(S_F\times S_{C-1})$-invariant polynomials.
\begin{lemma}
\label{lem:S_c-1}
    Let $p:\sR^{N\times (F+C)}\to \sR$ be an $(S_F\times S_{C-1})$-invariant polynomial with degree $M$.
    Then, there exist $S_F\times S_C$-invariant polynomials $\{q_\vkappa:\sR^{N\times (F+C)}\to\sR\}_{\norm{\vkappa}_1\leq M}$ such that 
    \[
        p(\mX)=\sum_{\norm{\vkappa}_1\leq M} \mX_{:,F+1}^\vkappa q_\vkappa(\mX).
    \]
\end{lemma}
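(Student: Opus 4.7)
The plan is to mirror the argument of \cref{lem:S_f-1} with the roles of the feature and label blocks swapped: here the distinguished column is $\mX_{:,F+1}$ (the first label column) rather than $\mX_{:,1}$, and the reduced space we pass to has $C-1$ label columns rather than $F-1$ feature columns. First, following \cref{def:dmps} applied to the domain $\sR^{N\times (F+(C-1))}$, introduce DMPs $\hat s^{(F)}_t(\mZ) = \sum_{f=1}^{F}\mZ_{:,f}^{\valpha^{(t)}}$ and $\hat s^{(C-1)}_t(\mZ) = \sum_{c=1}^{C-1}\mZ_{:,F+c}^{\valpha^{(t)}}$ for $t\in[T]$, with $T = \binom{M+N}{N}$ and the same enumeration $\{\valpha^{(t)}\}_{t=1}^T$. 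Noting that the operation of deleting the $(F+1)$-th column from $\mX$ preserves the first $F$ feature columns and removes exactly the summand $\mX_{:,F+1}^{\valpha^{(t)}}$ from $s^{(C)}_t$, obtain the key identity
\[
\hat s^{(F)}_t(\mX_{:,1},\dots,\mX_{:,F},\mX_{:,F+2},\dots,\mX_{:,F+C}) = s^{(F)}_t(\mX),
\]
\[
\hat s^{(C-1)}_t(\mX_{:,1},\dots,\mX_{:,F},\mX_{:,F+2},\dots,\mX_{:,F+C}) = s^{(C)}_t(\mX) - \mX_{:,F+1}^{\valpha^{(t)}}.
\]

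Next, expand $p$ in monomials in the distinguished column:
\[
p(\mX) = \sum_{\norm{\vgamma}_1 \le M} \mX_{:,F+1}^{\vgamma}\, p_{\vgamma}(\mX_{:,1},\dots,\mX_{:,F},\mX_{:,F+2},\dots,\mX_{:,F+C}),
\]
where each coefficient polynomial $p_{\vgamma}$ has degree at most $M-\norm{\vgamma}_1$. Since partial derivatives commute with the $(S_F\times S_{C-1})$-action (which fixes the $(F+1)$-th column) and evaluation at $\mX_{:,F+1}=0$ preserves this invariance, each $p_{\vgamma}$ is $(S_F\times S_{C-1})$-invariant on $\sR^{N\times(F+(C-1))}$. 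Apply \cref{lem:double_symm_universal} on this reduced space to write each $p_{\vgamma}$ as a polynomial $\hat q_{\vgamma}$ in the $\hat s^{(F)}_t$ and $\hat s^{(C-1)}_t$.

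Finally, substitute the identity above to replace every $\hat s^{(F)}_t$ by $s^{(F)}_t(\mX)$ and every $\hat s^{(C-1)}_t$ by $s^{(C)}_t(\mX) - \mX_{:,F+1}^{\valpha^{(t)}}$, and expand in powers of $\mX_{:,F+1}$ to obtain
\[
p_{\vgamma}(\cdots) = \sum_{\norm{\vdelta}_1 \le M-\norm{\vgamma}_1} \mX_{:,F+1}^{\vdelta}\, \widetilde q_{\vgamma,\vdelta}\bigl(s^{(F)}_1(\mX),\dots,s^{(F)}_T(\mX),s^{(C)}_1(\mX),\dots,s^{(C)}_T(\mX)\bigr),
\]
where each $\widetilde q_{\vgamma,\vdelta}$ is $(S_F\times S_C)$-invariant since it is a polynomial in the DMPs $s^{(F)}_t, s^{(C)}_t$. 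Performing the change of variables $\vkappa = \vgamma+\vdelta$ and collecting terms yields the claim with
\[
q_{\vkappa}(\mX) = \sum_{\substack{\norm{\vgamma}_1 \le M \\ \norm{\vkappa-\vgamma}_1 \le M-\norm{\vgamma}_1}} \widetilde q_{\vgamma,\vkappa-\vgamma}\bigl(s^{(F)}_1(\mX),\dots,s^{(F)}_T(\mX),s^{(C)}_1(\mX),\dots,s^{(C)}_T(\mX)\bigr),
\]
which is $(S_F\times S_C)$-invariant as a linear combination of $(S_F\times S_C)$-invariant polynomials. The only subtle step is verifying that the substitution $\hat s^{(C-1)}_t \mapsto s^{(C)}_t(\mX) - \mX_{:,F+1}^{\valpha^{(t)}}$ does not break overall invariance after regrouping, but this is handled automatically because we are just re-expressing the same polynomial identity in new generators; the asymmetry between the $F$- and $C$-sides is purely notational.
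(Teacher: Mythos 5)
Your proposal is correct and is exactly the mirror-image of the paper's proof of \cref{lem:S_f-1}, with the distinguished column $\mX_{:,1}$ replaced by $\mX_{:,F+1}$ and the reduced domain $\sR^{N\times((F-1)+C)}$ replaced by $\sR^{N\times(F+(C-1))}$. The paper does not spell out a proof for \cref{lem:S_c-1} (it only remarks the lemma holds "analogously"), so this is precisely the intended argument, carried out correctly.
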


\subsection{Supporting Universal Approximation Properties}
\label{app_subsec:approx}
In this subsection, we present universal approximation properties that support the proof of the universal approximation result \cref{thm:universality}. 

First, we extend Lemma 4 from \cite{segol2019universal}, which shows that continuous equivariant functions can be uniformly approximated by equivariant polynomials. Specifically, our extension generalizes the result from functions defined on the cube to functions defined on arbitrary compact $G$-invariant subsets.
\begin{lemma}\label{lem:equi_poly_dense}
    Let $G \leq S_L$.  Let $\gK \subset \sR^L$  be a compact domain such that $ \gK=\cup_{g\in G}g\gK$. Then the space of $G$-equivariant polynomials $p : \gK \to \sR^L$ is dense in the space of continuous $G$-equivariant functions $f : \gK \to \sR^L$ in $L_{\infty}(\gK)$.
\end{lemma}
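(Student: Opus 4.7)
The plan is to extend Segol and Lipman's argument from the hypercube to an arbitrary compact $G$-invariant domain via a two-step recipe: first approximate by an arbitrary (non-equivariant) polynomial using Stone--Weierstrass, then symmetrize using the Reynolds averaging operator to obtain an equivariant polynomial whose error bound is preserved.

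More concretely, fix $\epsilon > 0$ and a continuous $G$-equivariant function $f : \gK \to \sR^L$. First, I would apply the classical Stone--Weierstrass theorem componentwise: since $\gK \subset \sR^L$ is compact, each continuous scalar component $f_i : \gK \to \sR$ can be uniformly approximated to within $\epsilon$ by a real polynomial $p_i : \sR^L \to \sR$. Assembling these into a vector-valued polynomial $p = (p_1, \ldots, p_L) : \sR^L \to \sR^L$ yields $\|p - f\|_{\infty,\gK} < \epsilon$, where the supremum is over $\mX \in \gK$ and the inner norm is the $L_\infty$ norm on $\sR^L$.

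Second, I would apply the Reynolds symmetrization operator, defining
\[
\tilde p(\mX) \;=\; \frac{1}{|G|}\sum_{g \in G} g^{-1} \cdot p(g \cdot \mX).
\]
Since each $g \in G \leq S_L$ acts on $\sR^L$ by a permutation matrix, the map $\mX \mapsto g \cdot \mX$ is linear, so $\tilde p$ is again a polynomial in the entries of $\mX$. A direct check shows $\tilde p$ is $G$-equivariant: for any $h \in G$,
\[
\tilde p(h \cdot \mX) \;=\; \frac{1}{|G|}\sum_{g \in G} g^{-1} \cdot p(gh \cdot \mX) \;=\; h \cdot \frac{1}{|G|}\sum_{g' \in G} (g')^{-1} \cdot p(g' \cdot \mX) \;=\; h \cdot \tilde p(\mX),
\]
after the substitution $g' = gh$ and a relabeling of the sum.

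Third, I would show that symmetrization does not increase the approximation error. Using the equivariance of $f$, namely $f(\mX) = g^{-1} \cdot f(g \cdot \mX)$ for all $g \in G$, I can rewrite
\[
\tilde p(\mX) - f(\mX) \;=\; \frac{1}{|G|}\sum_{g \in G} g^{-1} \cdot \bigl(p(g \cdot \mX) - f(g \cdot \mX)\bigr).
\]
Two facts close the argument: (i) because $\gK = \cup_{g \in G} g\gK$ is $G$-invariant, $g \cdot \mX \in \gK$ whenever $\mX \in \gK$, so each pointwise difference $p(g \cdot \mX) - f(g \cdot \mX)$ is bounded in $L_\infty$-norm by $\|p - f\|_{\infty,\gK} < \epsilon$; and (ii) the action of $g^{-1}$ on $\sR^L$ is a coordinate permutation, which preserves the $L_\infty$-norm. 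Combining these with the triangle inequality yields $\|\tilde p - f\|_{\infty,\gK} < \epsilon$, completing the proof.

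The only delicate point is ensuring that the symmetrization procedure remains well defined and norm-preserving in our setting. This reduces to two observations already noted: the $G$-invariance of $\gK$ ensures we never evaluate $p$ outside the compact set where Stone--Weierstrass gave control of the error, and the fact that $G$ acts by permutations on $\sR^L$ means the supremum norm is isometric under the action. Both are immediate here, so no serious obstacle arises in extending Lemma 4 of \cite{segol2019universal} from the cube to general compact $G$-invariant domains.
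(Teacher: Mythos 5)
Your proof is correct and follows essentially the same route as the paper's: Stone--Weierstrass componentwise on the compact set, then Reynolds symmetrization $\tilde p = \frac{1}{|G|}\sum_{g\in G} g^{-1}\cdot p(g\cdot\,\cdot\,)$, and finally the triangle inequality combined with the $G$-invariance of $\gK$ and the fact that coordinate permutations are $L_\infty$-isometries. The only cosmetic difference is the indexing convention in the Reynolds average; the paper writes $\frac{1}{|G|}\sum_{g\in G} g\cdot p(g^{-1}\cdot\mX)$, which is the same operator after relabeling.
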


\begin{proof}
    Let $\epsilon > 0$, and let $f : \gK \to \sR^L$ be a continuous $G$-equivariant function.
    For each output coordinate $f_l : \gK \to \sR$, the classical Stone–Weierstrass theorem guarantees the existence of a polynomial $p_l : \sR^L \to \sR$ such that
    \[
    |f_l(\mX) - p_l(\mX)| \leq \epsilon,
    \quad \text{for all } \mX \in \gK.
    \]
    Define the polynomial $p : \sR^L \to \sR^L$ by
    \[
        p(\mX) = (p_1(\mX),\ldots,p_L(\mX))\in\sR^L, \quad \forall \mX\in\sR^L.
    \]
     To construct a  $G$-equivariant approximating polynomial, we  now introduce a symmetrization of $p$. Define $\widetilde{p} : \sR^L \to \sR^L$ by
    \[
    \widetilde{p}(\mX) = \frac{1}{|G|} \sum_{g \in G} g \cdot p(g^{-1} \cdot \mX).
    \]
    Let $\mX\in\sR^L$. Now, since $f$ is $G$-equivariant, we have
    \begin{align*}
        \| f(\mX) - \widetilde{p}(\mX) \|_\infty &= \left\| \frac{1}{|G|} \sum_{g \in G} g \cdot \left( f(g^{-1} \cdot \mX) - p(g^{-1} \cdot \mX) \right) \right\|_\infty\\
        &\leq \frac{1}{|G|} \sum_{g \in G} \left\| f(g^{-1} \cdot \mX) - p(g^{-1} \cdot \mX) \right\|_\infty \leq \epsilon.
    \end{align*}
    Thus, $\widetilde{p}$ is a $G$-equivariant polynomial that uniformly approximates $f$, concluding the proof.
\end{proof}

We mention the classical Universal Approximation Theorem (UAT) as it is later employed to show universality in \cref{lem:universality_equivariance}, which has a central role in the proof of \cref{thm:approx}.

\begin{theorem}[Universal Approximation Theorem (UAT) \cite{LESHNO1993861}]
    \label{thm:UAT}
    Let $\sigma:\sR\to\sR$ be a continuous, non-polynomial function. 
    Then, for every $M,L \in \sN$, compact $\gK \subseteq \sR^M$, continuous function $f:\gK\to\sR^L$ , and $\varepsilon > 0$, there exist $D \in \sN, \mW_1\in\sR^{D\times M}, \vb_1\in\sR^D$ and $\mW_2\in\sR^{L\times D}$ 
    such that
    \begin{equation}
        \label{eq:Pinkus}
        \sup_{\vx \in \mathcal{K}} \| f(\vx) - \mW_2\sigma\left(\mW_1\vx+\vb_1\right) \| \leq \varepsilon.
    \end{equation}
\end{theorem}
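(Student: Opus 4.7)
\textbf{Proof plan for Theorem \ref{thm:UAT}.} The plan is to follow the classical approach of \cite{pinkus1999approximation}: reduce to the univariate smooth scalar-output case, handle that case via derivatives at a point, and then lift both restrictions. For the scalar reduction, I would approximate each coordinate $f_l:\gK\to\sR$ by a one-hidden-layer network $\sum_i c_{l,i}\sigma(\vw_i\cdot\vx+b_i)$ and then merge the hidden units into a common layer (taking the union of neurons across $l\in[L]$ and padding output weights with zeros); this produces a single $\mW_2\sigma(\mW_1\vx+\vb_1)$ whose componentwise error is at most $\varepsilon/\sqrt{L}$, controlling any chosen norm on $\sR^L$ up to a constant.

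Next, I would reduce to univariate approximation. By the Stone--Weierstrass theorem, $\sR[x_1,\ldots,x_M]$ is uniformly dense in $C(\gK)$, so it suffices to uniformly approximate each multivariate monomial $\vx^\valpha$ on $\gK$. The identity
\[
(\vw\cdot\vx)^k=\sum_{|\valpha|=k}\binom{k}{\valpha}\vw^\valpha\,\vx^\valpha,
\]
together with a Vandermonde-type choice of direction vectors $\vw$, recovers every degree-$k$ multivariate monomial from linear combinations of the ridge polynomials $(\vw\cdot\vx)^k$. Hence it is enough to approximate each univariate monomial $t\mapsto t^k$ by $\sum_i c_i\sigma(\lambda_i t+b_i)$ uniformly on any compact interval containing the image of $\gK$ under every relevant linear functional $\vw\cdot(\cdot)$.

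For the univariate case with smooth $\sigma$, I would exploit that differentiating $\lambda\mapsto\sigma(\lambda t+b)$ at $\lambda=0$ yields $\sigma^{(k)}(b)\,t^k$, and that symmetric finite differences approximate this derivative uniformly on compacts, placing every $\sigma^{(k)}(b)\,t^k$ in the uniform closure of $\mathrm{span}\{\sigma(\lambda t+b):\lambda,b\in\sR\}$. Because $\sigma$ is non-polynomial, for each $k$ some $b_k$ satisfies $\sigma^{(k)}(b_k)\neq 0$, so every monomial --- and hence, by Weierstrass, every univariate polynomial --- lies in that closure. To drop smoothness, I would mollify against a bump $\phi_\eta\in C_c^\infty(\sR)$: the convolution $\sigma*\phi_\eta$ is smooth, and Riemann sums of the defining integral approximate it uniformly on compacts by linear combinations of translates of $\sigma$.

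The main obstacle is the last step: one must show that some mollification $\sigma*\phi_\eta$ is \emph{not} itself a polynomial. If every $\sigma*\phi_\eta$ were a polynomial, rigidity of polynomial degree under uniform convergence on compacts would force them all to share a common maximum degree $N$; letting $\eta\to 0$ and using $\sigma*\phi_\eta\to\sigma$ uniformly on compacts would then make $\sigma$ itself a polynomial of degree at most $N$, contradicting the hypothesis. Once this non-polynomiality of some $\sigma*\phi_\eta$ is secured, applying the smooth case to $\sigma*\phi_\eta$ transfers density back to affine transformations of $\sigma$ (the affine factors $\lambda t+b$ compose with translates of $\sigma$ inside a sufficiently large compact interval fixed in advance), and running the two earlier reductions in reverse yields the stated bound.
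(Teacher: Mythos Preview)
Your proof plan is a correct outline of the classical Leshno--Lin--Pinkus--Schocken/Pinkus argument, but note that the paper does not actually prove this theorem: it is stated as a cited classical result from \cite{pinkus1999approximation} and used as a black box in the proof of \cref{lem:universality_equivariance}. There is therefore nothing in the paper to compare your approach against; your sketch simply reconstructs the proof from the reference the paper cites.
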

Note that since all finite dimensional norms are equivalent, one can take any norm in (\ref{eq:Pinkus}).

We next study universality with respect to composition of functions. For that, we first define universal approximation rigorously.

\begin{definition}[Universal Approximator]
    Let $D_1,D_2\in \sN$. A space of continuous functions $\gN$ from a domain $Q_1\subset \sR^{D_1}$ to a domain $Q_2\subset\sR^{D_2}$ is said to be a \emph{universal approximator} of another space of continuous functions $\gB$  from $Q_1$ to $Q_2$, if for every $f\in \gB$, every compact domain $C\subset \sR^{D_1}$ such that $C\subset Q_1$, and every $\epsilon>0$, there is a function $q\in \gN$ such that for every $x\in C$
    \[\|f(x)-q(x)\|_{\infty}<\epsilon.\]
\end{definition}

The following theorem is a simple approximation theoretic result. 

\begin{theorem}
    \label{thm:approx}
    Let $L\in\sN$, and $D_1,\ldots D_{L+1}\in \sN$. Let $\gN_l$ be a \emph{universal approximator} of  $\gB_l$ for $l\in[L]$. Let the domain of the functions $\gB_l$ be  $Q_l\subset\sR^{D_l}$ and the range be $Q_{l+1}\subset\sR^{D_{l+1}}$, for $l\in[L]$. Consider the function spaces 
    \[\gN:=\{\theta_L\circ \ldots\circ \theta_1\ |\ \theta_l\in\gN_l,\ l\in[L]\},\]
    and 
    \[\gB:=\{f_L\circ \ldots \circ f_1\ |\ f_l\in\gB_l,\ l\in[L]\}.\]
    Then $\gN$ is a universal approximator of $\gB$.
\end{theorem}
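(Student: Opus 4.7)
The strategy is the standard telescoping-and-uniform-continuity argument. Fix $f = f_L\circ\cdots\circ f_1\in\gB$, a compact $C\subset Q_1$, and $\varepsilon>0$. I will select $\theta_l\in\gN_l$ one layer at a time so that $\theta := \theta_L\circ\cdots\circ\theta_1$ satisfies $\|f-\theta\|_{L^\infty(C)}<\varepsilon$.

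First, introduce the hybrid maps $a_l := f_L\circ\cdots\circ f_{l+1}\circ\theta_l\circ\cdots\circ\theta_1$ for $l\in\{0,\dots,L\}$, so $a_0=f$ and $a_L=\theta$. Write $g_l := f_L\circ\cdots\circ f_{l+1}$ (with $g_L=\id$) and $h_{l-1} := \theta_{l-1}\circ\cdots\circ\theta_1$ (with $h_0=\id$). The telescoping identity
\[
f-\theta \;=\; \sum_{l=1}^L (a_{l-1}-a_l) \;=\; \sum_{l=1}^L\bigl(\, g_l\circ f_l\circ h_{l-1} \;-\; g_l\circ\theta_l\circ h_{l-1}\bigr)
\]
reduces the problem to bounding each summand in $L^\infty(C)$.

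Second, I would construct compact sets $K_1\subset Q_1,\dots,K_{L+1}\subset Q_{L+1}$ using only the true $f_l$: set $K_1=C$, and inductively choose a compact $K_{l+1}\subset Q_{l+1}$ containing $f_l(K_l)$ in its interior. This is straightforward in the settings considered in this paper; in full generality one may pre-extend each $f_l$ continuously to a small open neighborhood of $K_l$. Each $g_l$ is then uniformly continuous on the compact $K_{l+1}$, with some modulus $\omega_l$; let $\rho_l>0$ be such that the closed $\rho_l$-neighborhood of $f_l(K_l)$ lies inside $K_{l+1}$.

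Third, I would exploit the universality of $\gN_l$ layer by layer. For each $l$, pick $\delta_l\in(0,\rho_l]$ with $\omega_l(\delta_l)<\varepsilon/L$, and choose $\theta_l\in\gN_l$ with $\sup_{K_l}\|f_l-\theta_l\|<\delta_l$; such $\theta_l$ exists because $\gN_l$ universally approximates $\gB_l$ and $K_l\subset Q_l$ is compact. A short induction on $l$ then gives $h_{l-1}(C)\subset K_l$ and $\theta_l(h_{l-1}(x))\in K_{l+1}$ for every $x\in C$, so uniform continuity of $g_l$ on $K_{l+1}$ bounds each telescoping summand by $\omega_l(\delta_l)<\varepsilon/L$. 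Summing over $l$ via the triangle inequality yields $\|f-\theta\|_{L^\infty(C)}<\varepsilon$.

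The main obstacle I anticipate is the apparent circularity: $\omega_l$ is the modulus of continuity of $g_l$ on a compact set $K_{l+1}$ that must contain the unknown $\theta_l(K_l)$, while $\theta_l$ itself is selected using $\omega_l$. My plan resolves this by fixing the entire chain $\{K_l\}$ from the true $\{f_l\}$ \emph{before} any $\theta_l$ is selected, and then enforcing $\delta_l\le\rho_l$, which ensures that every $\theta_l$-image automatically lands inside the predetermined $K_{l+1}$ and cuts the circular dependency.
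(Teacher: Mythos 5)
Your argument is correct, and it is the standard telescoping-plus-uniform-continuity proof; the paper asserts \cref{thm:approx} without giving a proof (calling it ``a simple approximation theoretic result''), so there is nothing to compare against. The one technical point you flag — building a compact $K_{l+1}\subset Q_{l+1}$ containing $f_l(K_l)$ together with a closed $\rho_l$-tube around it — is indeed the only place the statement needs an implicit regularity assumption on the domains $Q_{l+1}$; for the paper's applications the $Q_l$ are open subsets of Euclidean space (or all of them), so this is unproblematic, and your observation that $\theta_l$ automatically lands in $Q_{l+1}$ (since $\gN_l$ is by definition a space of maps into $Q_{l+1}$) closes the remaining loophole. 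Fixing the chain $K_1,\dots,K_{L+1}$ and the moduli $\omega_l$ from the true $f_l$ before selecting any $\theta_l$, and then enforcing $\delta_l\le\rho_l$, correctly resolves the apparent circularity you anticipated.
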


\begin{proof}
    We prove this by induction on $L$.

    \textbf{Base case $L=1$.} In this case $\gN=\gN_1$ and $\gB=\gB_1$, so the claim follows directly from the assumption that $\gN_1$ is a universal approximator of $\gB_1$.

    \textbf{Inductive Step.} Assume the result holds for $L-1$ and consider a function
    \[
    f = f_L \circ f_{L-1} \circ \cdots \circ f_1 \in \gB,
    \]
    where $f_l \in \gB_l$. Let $C_1 \subset Q_1$ be compact and let $\varepsilon>0$. Define the compact set
    \[
    C_L := (f_{L-1} \circ \cdots \circ f_1)(C_1) \subset Q_L.
    \]
    By continuity of $f_L$ on $Q_L$ and compactness of $C_L$, $f_L$ is uniformly continuous on $C_L$. Thus, there exists $\delta > 0$ such that for all $y,y' \in C_L$ with $\|y - y'\| \le \delta$, we have 
    \begin{equation}
    \label{eq:uni_contin}
        \|f_L(y) - f_L(y')\| \le \frac{\varepsilon}{2}.
    \end{equation}
    By the induction hypothesis, there exist $\theta_1 \in \gN_1, \dots, \theta_{L-1} \in \gN_{L-1}$ such that
    \[
        \sup_{x \in C_1} 
        \left\|
            (f_{L-1} \circ \cdots \circ f_1)(x) 
            - (\theta_{L-1} \circ \cdots \circ \theta_1)(x)
        \right\| \le \delta,
    \]
    and by \cref{eq:uni_contin}, we get
    \begin{equation}
        \label{eq:approx_part1}
        \sup_{x \in C_1} 
        \left\|
            f(x) 
            - (f_L\circ\theta_{L-1} \circ \cdots \circ \theta_1)(x)
        \right\| \le \frac{\varepsilon}{2}.
    \end{equation}

    Define the compact set $\tilde{C}_L := (\theta_{L-1} \circ \cdots \circ \theta_1)(C_1)$. Define the compact set $C := C_L \cup \tilde{C}_L \subset Q_L$. Since $f_L \in \gB_L$ and $\gN_L$ is a universal approximator of $\gB_L$, there exists $\theta_L \in \gN_L$ such that
    \[
        \sup_{y \in C} \| f_L(y) - \theta_L(y) \| \le \frac{\varepsilon}{2},
    \]
    which implies
    \begin{equation}
        \label{eq:approx_part2}
        \sup_{x \in C_1} \| (f_L\circ\theta_{L-1} \circ \cdots \circ \theta_1)(x) - (\theta_L\circ\theta_{L-1} \circ \cdots \circ \theta_1)(x) \| \le \frac{\varepsilon}{2}.
    \end{equation}
    
    Define $\theta=\theta_L\circ\theta_{L-1} \circ \cdots \circ \theta_1$. By the triangle inequality 
    \begin{align*}
        \sup_{x \in C_1}& \|f(x)-\theta(x)\| =\\
        &=\sup_{x \in C_1} \|f(x)-(f_L\circ\theta_{L-1} \circ \cdots \circ \theta_1)(x) + (f_L\circ\theta_{L-1} \circ \cdots \circ \theta_1)(x)-\theta(x)\|\\
        &\leq \sup_{x \in C_1} \|f(x)-(f_L\circ\theta_{L-1} \circ \cdots \circ \theta_1)(x)\|+ \sup_{x \in C_1} \|(f_L\circ\theta_{L-1} \circ \cdots \circ \theta_1)(x) - \theta(x)\|
    \end{align*}
    and by \cref{eq:approx_part1,eq:approx_part2}, we get
    \[
        \sup_{x \in C_1} \|f(x)-\theta(x)\|\leq \frac{\varepsilon}{2}+\frac{\varepsilon}{2} = \varepsilon.
    \]
    This completes the induction step and the proof.
\end{proof}

\subsection{Supporting Lemmas for $G$-descriptors}
\label{app_subsec:lemm}
In this subsection, we introduce a special class of $G$-invariant polynomials, which are called \emph{$G$-descriptors}. These polynomials are designed to distinguish between different orbits under the action of a subgroup $G \leq S_N$ of the symmetric group. We establish the existence of $(S_{F-1}\times S_C)$-descriptors and $S_{N-1}$-descriptors which we later use to express $(S_{N-1}\times S_{F-1}\times S_C)$-invariant polynomials defined over a symmetric compact domain.

Given a subgroup  $G$ of the symmetric  group, we next define the notion of a 
$G$-descriptor. Roughly speaking,  a $G$-descriptor is a  polynomial that maps different orbits under the representation of $G$ to different values.

\begin{definition}[$G$-descriptor]
    \label{def:descriptor}
    Let $G\leq S_M$ act on $\sR^{M\times D}$ be changing the order of the rows, and let $\gK\subseteq \sR^{M\times D}$ such that $\gK=\cup_{g\in G}g\gK$. We call a $G$-invariant function  $q:\gK\to\sR^L$ a \emph{$G$-descriptor}, if it satisfies
    \[
        \forall \mX,\mY\in \gK,\quad q(\mX)=q(\mY)
        \;\Longleftrightarrow\;
        \exists g\in G:\mY=g\cdot \mX.
    \]
\end{definition}

Next, we cite Lemma 3 from \cite{maron2020learningsetssymmetricelements}, which shows the existence of a descriptor given a subgroup of the symmetric group.
\begin{lemma}[Lemma 3 in \cite{maron2020learningsetssymmetricelements}]\label{lem:multi_channel_encoding}
    Let $G \leq S_M$ act on $\sR^{M\times D}$ by changing the order of the rows. Then, there exists $L\in\sN$ and a polynomial $G$-descriptor $u : \sR^{M\times D} \to \sR^L$. 
\end{lemma}

\subsection{Expressing $G$-Polynomial Descriptors Using Symmetric Polynomials}
\label{Expressing $G$-Polynomial}

In the proof of \cref{lem:triple-descriptor}, we define polynomial descriptors. We then show that these descriptors can be written using symmetric polynomials, which then allows approximating them by TSNets.  In this subsection, we show how certain polynomials descriptors can be written using symmetric polynomials.

We first define basic operations related to the constriction of symmetric polynomials.

\begin{definition}
\label{def:all_PMP_DMP}
    Let $K,M,D\in\sN$, let $T=T_{M,D}=\binom{M + D}{D}$ (i.e., the number of monomials in $\sR^D$ up to degree $M$), and let $\{\valpha^{(t)}\}_{t=1}^T$ be an enumeration of all multi-indices $\valpha^{(t)} \in \sN_0^D$ such that $\norm{\valpha^{(t)}}_1 \le M$.  
    \begin{itemize}
        \item 
        The mapping $c^{(M,D)}:\sR^D\to \sR^{T}$ is the function that calculates all monomials up to degree $T$, namely, for $\mD\in\sR^{D}$
    \begin{equation}
        \label{eq:def_c0}
        c^{(M,D)}(\mD) := \left(\mD^{\valpha^{(1)}},\ldots,\mD^{\valpha^{(T)}}\right).
        \end{equation}
    \item 
    The mapping $b^{(K,M,D)}:\sR^{K\times D}\rightarrow\sR^{K\times T}$ is defined by 
    \begin{equation}
        \label{eq:def_b0}
        \forall j\in[K],\ \mV\in \sR^{K\times D}:\quad b^{(K,M,D)}(\mV)_{j,:}=c^{(M,D)}(\mV_{j,:}).
    \end{equation} 
    \item 
The mapping $t_{\rm PMP}^{(K,M,D)}:\sR^{K\times T}\rightarrow \sR^T$ is defined, for $\mE\in\sR^{K\times T}$, by
    \begin{equation}
        \label{eq:t_b_concat_n0}
        t_{\rm PMP}^{(K,M,D)}(\mE) = \sum_{k=1}^K\mE_{k,:}.
    \end{equation}
    \item
    For $F,C\in\sN$ such that $F+C=K$, the mapping $t_{\rm DMP}^{(F+C,M,D)}:\sR^{(F+C)\times T}\rightarrow \sR^{2T}$ is defined, for $\mH\in\sR^{(F+C)\times T}$, by
     \begin{equation}
        \label{eq:t_b_concat_f0}
        t_{\rm DMP}^{(F+C,M,D)}(\mH) = (\sum_{f=1}^F\mH_{f,:}, \sum_{c=1}^C\mH_{F+c,:}).
    \end{equation}
    \end{itemize}
\end{definition}

\begin{remark}
\label{rem:tb}
   Note that $t_{\rm PMP}^{(K,M,D)}\circ b^{(K,M,D)}$ is the mapping that computes the sequence of all PMPs up to degree $M$. Similarly, $t_{\rm DMP}^{(F+C,M,D)}\circ b^{(K,M,D)}$ computes the sequence of all DMPs up to degree $M$. 
\end{remark}

\begin{lemma}
\label{lem:q_via_DMP}
Let $K,M,D\in\sN$ and $T=T_{M,D}=\binom{M + D}{D}$. 
    \begin{enumerate}
    \item 
   Let $q: \sR^{K\times D} \to \sR^{L}$ be an $S_{K-1}$-polynomial descriptor of degree $M$, namely,
    \[
        \forall \mZ,\mY\in\sR^{K\times D}: \quad \Big(q(\mZ) = q(\mY)\Big) \Longleftrightarrow \Big(\exists \sigma_{K-1} \in S_{K-1}: \mZ = \sigma_{K-1} \cdot \mY\Big).
    \]
    Then, there exist polynomials $U_\vbeta:\sR^{T}\to\sR^{L}$, parameterized by $\vbeta\in\sZ_0^{D}$, such that for $\mV\in \sR^{K\times D}$ 
     \[
     q(\mV)=\sum_{\norm{\vbeta}_1\leq M} \mV_{1,:}^\vbeta W_\vbeta(\mV),
    \]
    where
 \[
       W_\vbeta = U_\vbeta \circ t^{(K,M,D)}_{\rm PMP}\circ b^{(K,M,D)}.
    \]
        \item Let $F,C\in\sN$ such that $F+C=K$. Let $q: \sR^{(F+C)\times D} \to \sR^{L}$ be a $(S_{F-1}\times S_C)$-polynomial descriptor of degree $M$, namely, $\forall \mZ,\mY\in\sR^{(F+C)\times D}:$      
       \[\Big(q(\mZ) = q(\mY)\Big)  
        \]
       \begin{equation*}\Updownarrow
        \end{equation*}
        \[\Big(\exists (\sigma_{F-1},\sigma_C) \in S_{F-1}\times S_C: \mZ = (\sigma_{F-1},\sigma_C) \cdot \mY\Big).\]
    Then, there exist polynomials $U_\vbeta:\sR^{2T}\to\sR^{L}$, parameterized by $\vbeta\in\sZ_0^{D}$, such that for $\mV\in \sR^{(F+C)\times D}$ 
     \[
     q(\mV)=\sum_{\norm{\vbeta}_1\leq M} \mV_{1,:}^\vbeta W_\vbeta(\mV),
    \]
    where
 \[
       W_\vbeta = U_\vbeta \circ t^{(F+C,M,D)}_{\rm DMP}\circ b^{(F+C,M,D)}.
    \]
    \end{enumerate}
\end{lemma}

The result is also immediately true when transposing the order of $F$ and $C$.

\begin{proof}
    We prove 1, and note that 2 is shown analogously.

   Recall the property defining the polynomial $q: \sR^{K\times D} \to \sR^{L}$
    \[
        \forall \mZ,\mY\in\sR^{K\times D}: \quad \Big(q(\mZ) = q(\mY)\Big) \Longleftrightarrow \Big(\exists \sigma_{K-1} \in S_{K-1}: \mZ = \sigma_{K-1} \cdot \mY\Big).
    \]

    Fix $l\in[L]$, and consider the $l$-th output dimension of $q$, namely $q_l:\sR^{K\times D}\rightarrow \sR$.
  By \cref{lem:S_n-1}, the $S_{K-1}$-invariant polynomial $q_l$ can be expressed as
    \begin{equation}
        \label{eq:S_n-1}
        q_l(\mV)=\sum_{\norm{\vbeta}_1\leq M_N} \mV_{1,:}^\vbeta w_{l,\vbeta}(\mV),
    \end{equation}
    where $w_{l,\vbeta}:\sR^{K\times D}\to\sR$ are $S_K$-invariant polynomials with degree at most $M$, and $\vbeta\in\sN_0^{D}$ are multi-indices  with $\norm{\vbeta}_1\leq M$.   
Next, we express (\ref{eq:S_n-1}) using PMPs (Definition \ref{def:pmps}). 
    The power-sum multi-symmetric polynomials (PMPs) $s_t:\sR^{ K\times D}\rightarrow \sR $ map the variable $\mV\in\sR^{K\times D}$ to
    \[
        s_t(\mV) = \sum_{k=1}^K \mV_{k,:}^{\valpha^{(t)}},
    \]
    where these polynomials are indexed by $t\in [T]$, in order to enumerate all multi-indices $\valpha^{(t)} \in \sN_0^D$ 
     such that $\norm{\valpha^{(t)}}_1 \le M$.  
 Fix $\vbeta\in\sN_0^D$ with $\norm{\vbeta}_1 \le M$. 
 By \cref{lem:multi_symm_universal}, the $S_K$-invariant polynomial $w_{l,\vbeta}$  of (\ref{eq:S_n-1}) can be expressed as a polynomial $u_{l,\vbeta}: \sR^{T} \to \sR$ 
 in the PMPs $\{s_t\}_{t=1}^{T}$ by
    \begin{equation}
        \label{eq:u_def_n}
        w_{l,\vbeta}(\mV) = u_{l,\vbeta}\left(s_1(\mV), \ldots, s_{T}(\mV)\right).
    \end{equation}
By Remark \ref{rem:tb}, we can write  $w_{l,\vbeta}$ as a composition of the three mappings
    \begin{equation}
        \label{eq:comp_n}
        w_{l,\vbeta} = u_{l,\vbeta} \circ t_{\rm PMP}^{(K,M,D)}\circ b^{(K,M,D)},  
    \end{equation}
where $t_{\rm PMP}^{(K,M,D)}\circ b^{(K,M,D)}$ is the mapping that computes the sequence of all PMPs (see Definition \ref{def:all_PMP_DMP}).

    We next express $q$ via the above construction. Define $U_\vbeta:\sR^{T}\to\sR^{L}$, by 
    \begin{equation}
        \label{eq:def_U_beta_n}
        \forall l\in[L]:\quad \left(U_\vbeta\right)_l = u_{l,\vbeta}
    \end{equation}
    and similarly define $W_\vbeta:\sR^{K\times D}\to\sR^{L}$ by 
    \begin{equation}
        \label{eq:def_W_beta_n}
        \forall l\in[L]:\quad \left(W_\vbeta\right)_l = w_{l,\vbeta}.
    \end{equation}
    Now, using \cref{eq:def_U_beta_n,eq:def_W_beta_n,eq:comp_n}, we see that $W_\vbeta$ can be written as the composition 
    \[
       W_\vbeta = U_\vbeta \circ t_{\rm PMP}^{(K,M,D)}\circ b^{(K,M,D)},
    \]
    and by \cref{eq:S_n-1}, we have
    \[
   q(\mV)=\sum_{\norm{\vbeta}_1\leq M} \mV_{1,:}^\vbeta W_\vbeta(\mV),
    \]
    which shows 1.

    To prove 2, we use Lemma \ref{lem:S_f-1} instead of Lemma \ref{lem:S_n-1} and follow an analogous construction.
\end{proof}

\subsection{The Exclusion Set}
\label{app_subsec:exclusion}

As part of the proof that TSNets are universal approximators of continuous functions over  symmetric compact domains (see \cref{thm:universality}), we need to define a descriptor that is injective on group orbits -- the descriptor should assign the same value to inputs in the same orbit and different values otherwise. However, this injectivity can fail on a small subset of the domain.  
 To still guarantee injectivity, we exclude this small subset from the domain of definition and refer to it as the \emph{exclusion set}. This exclusion set has a specific form, as defined next.

\begin{definition}[The exclusion set]
    \label{def:exclusion_set}
    The set 
    \begin{align*}
        \gE = &\bigcup_{1 \leq f_1 < f_2 \leq F} \left\{ \mX \in \sR^{N \times (F+C)} \;\middle|\; \sum_{n=1}^N \mX_{n,f_1} = \sum_{n=1}^N \mX_{n,f_2}
        \right\}\bigcup\\
        &\bigcup_{1 \leq c_1 < c_2 \leq C} \left\{ \mX \in \sR^{N \times (F+C)} \;\middle|\; \sum_{n=1}^N \mX_{n,F+c_1} = \sum_{n=1}^N \mX_{n,F+c_2}
        \right\}\bigcup\\
        &\bigcup_{1 \leq n_1 < n_2 \leq N} \left\{ \mX \in \sR^{N \times (F+C)} \;\middle|\; \sum_{j=1}^{F+C} \mX_{n_1,j} = \sum_{j=1}^{F+C} \mX_{n_2,j}
        \right\}
    \end{align*}
    is called the \emph{exclusion set corresponding to} $\sR^{N\times (F+C)}$.
\end{definition}

The exclusion set $\gE$ is a finite union of linear subspaces of co-dimension one. We consider input matrices $\mX\in\sR^{N\times (F+C)}$, where $\mX^{(F)}$ correspond to continuous real-valued features and $\mX^{(C)}$ encode one-hot vectors with entries in $\{0, 1\}$, assigning probability one to exactly one class and zero to the others in each row.
The real-valued features $\mX^{(F)}$ vary continuously in $\sR^F$. The set of features that lie inside the exclusion set $\gE$ is meagre with respect to the standard topology of the feature space (more specifically, it is nowhere dense). Hence, being outside  $\gE$ is the generic case in the topological sense. From a probabilistic perspective, if the features are sampled from a joint distribution  that is continuous with respect to the Lebesgue measure (in the Radon–Nikodym sense), then the features belong to $\{\mX^{(F)}\mid\mX\in\gE\}$ in probability zero. Hence, for the features to belong to $\gE$ is an event that almost surely does not occur.
With regard to the labels, the exclusion set $\gE$ rules out the special scenario in which multiple classes have exactly the same number of examples. If labels are samples from a reasonable joint probability space, where labels are not constrained to be exactly balanced, such an even typically occurs in low probability. To summarize, from both the feature and label perspectives, the data typically lies outside the exclusion set $\gE$, and hence considering a domain $\mathcal{K}$ disjoint from $\gE$ is reasonable. A similar philosophy regarding an exclusion set is adopted by \cite{maron2020learningsetssymmetricelements}.

\subsection{TSNets With Output Projected to $(1,1)$ and $(1,F+1)$}
To prove that our invariant neural networks TSNet$_{\rm LI}$ are universal approximators of symmetry preserving continuous functions,  we first prove that TSNet$_{\rm Eq}$ is a universal approximator of equivariant continuous function.
For that, we first define spaces of neural network which are TSNet$_{\rm Eq}$ with outputs restricted to the entries $(1,1)$ and $(1,F+1)$.

\begin{definition}[The $\text{TSNet}^{(F)}$, $\text{TSNet}^{(C)}$, $\text{TSNet}^{(NF)}$ and $\text{TSNet}^{(NC)}$ Architectures]
\label{def:TSNet*} $ $
\begin{itemize}
    \item 
    A $\text{TSNet}^{(F)}$ is any MLP $\theta:\sR^{K_1\times N\times (F+C)}\rightarrow \sR^{N\times K_2}$ of the following form. There exists a TSNet$_{\rm Eq}$ $\phi:\sR^{K_1\times N\times (F+C)}\rightarrow \sR^{K_2\times N\times (F+C)}$  such that for every $(\mX,\mY)\in \sR^{K_1\times N\times (F+C)}$
    \[\theta(\mX,\mY)=\big(\phi(\mX,\mY)\big)_{:,:,1}.\]
    \item 
    A $\text{TSNet}^{(C)}$ is any MLP $\theta:\sR^{K_1\times N\times (F+C)}\rightarrow \sR^{N\times K_2}$ such that there exists a TSNet$_{\rm Eq}$ $\phi:\sR^{K_1\times N\times (F+C)}\rightarrow \sR^{K_2\times N\times (F+C)}$  such that 
    \[\theta(\mX,\mY)=\big(\phi(\mX,\mY)\big)_{:,:,F+1}.\]
    \item 
 Similarly, a $\text{TSNet}^{(NF)}$ is any MLP $\theta:\sR^{K_1\times N\times (F+C)}\rightarrow \sR^{K_2}$ such that
    \[\theta(\mX,\mY)=\big(\phi(\mX,\mY)\big)_{:,1,1},\]
    and
     a $\text{TSNet}^{(NC)}$ is any MLP $\theta:\sR^{K_1\times N\times (F+C)}\rightarrow \sR^{K_2}$ such that 
    \[\theta(\mX,\mY)=\big(\phi(\mX,\mY)\big)_{:,1,F+1}.\] 
\end{itemize}

\end{definition}

Note that TSNet$^{(F)}$ are invariant to  $(\sigma_{N}\times\sigma_{F-1}\times \sigma_{C})$, TSNet$^{(F)}$  to  $(\sigma_{N}\times\sigma_{F}\times \sigma_{C-1})$, TSNet$^{(NF)}$  to  $(\sigma_{N-1}\times\sigma_{F-1}\times \sigma_{C})$, and TSNet$^{(NC)}$  to  $(\sigma_{N-1}\times\sigma_{F}\times \sigma_{C-1})$.
\begin{remark}
\label{rem:self_break}
  The entries $[:,1,1]$ are special for TSNet$^{(NF)}$, and break the  $S_N$ and $S_F$ symmetries. Indeed, all ``self'' transformations in the last linear layer of a TSNet$^{(NF)}$ can be written in terms of $[:,1,1]$, e.g.,
\[[F_{\rm self}\xrightarrow{N_{\rm self}} F](\mX,\mY)_{1,1}=
    \mX_{1,1}.\]
    Similarly, the entires $[:,1,F+1]$ are special for TSNet$^{(NC)}$, $[:,:,1]$ for TSNet$^{(F)}$, and $[:,:,F+1]$ for TSNet$^{(C)}$.  
\end{remark}

Given two networks, $\theta^{(NF)}\in\ $TSNet$^{(NF)}$  and $\theta^{(NC)}\in\ $TSNet$^{(NC)}$, one can recover a corresponding TSNet$_{\rm Eq}$ as follows.
\begin{lemma}
\label{lem:TSNet_from(11)(1F_1)}
$ $
\begin{itemize}
\item 
 The MLP $\theta$ is a TSNet$_{\rm Eq}$ if and only if there is a  TSNet$^{(F)}$ $\theta^{(F)}$ and a  TSNet$^{(C)}$ $\theta^{(C)}$ such that 
    \[
       \forall f\in[F]: \quad   \theta(\mX,\mY)_{:,:,f} = \theta^{(NF)}\big(e,(1f),e\big)\cdot(\mX,\mY)\big)
   \]
   and
\[
       \forall c\in[C]: \quad  \theta(\mX,\mY)_{:,:,F+c} = \theta^{(NC)}\big(e,e,(1c)\big)\cdot(\mX,\mY)\big),
\]
for $(1f)\in S_F$ (the permutation that swaps only $1$ with $f$) and $(1c)\in S_C$.
    \item 
     The MLP $\theta$ is a TSNet$_{\rm Eq}$ if and only if there is a  TSNet$^{(NF)}$ $\theta^{(NF)}$ and a  TSNet$^{(NC)}$ $\theta^{(NC)}$ such that 
    \[
       \forall n \in[N],f\in[F]: \quad   \theta(\mX,\mY)_{:,n,f} = \theta^{(NF)}\big(\big((1n),(1f),e\big)\cdot(\mX,\mY)\big)
   \]
   and
\[
       \forall n \in[N],c\in[C]: \quad  \theta(\mX,\mY)_{:,n,F+c} = \theta^{(NC)}\big(\big((1n),e,(1c)\big)\cdot(\mX,\mY)\big),
\]
for $(1n)\in S_N$, $(1f)\in S_F$, and $(1c)\in S_C$.
\end{itemize}
   
\end{lemma}

\begin{proof}
We prove 2, and note that 1 is analogous.
Suppose that $\theta$ is defined via the above formulas given $\theta^{(NF)}=\big(\phi(\cdot)\big)_{:1,1}$ in TSNet$^{(NF)}$ and $\theta^{(NC)}=\big(\psi(\cdot)\big)_{:1,F+1}$ in TSNet$^{(NC)}$. 
For any $n\in[N]$ and $f\in[F]$ we have  
\[\theta(\mX,\mY)_{:,n,f} = \theta^{(NF)}\big(\big((1n),(1f),e\big)\cdot(\mX,\mY)\big)=\Big(\phi\big(\big((1n),(1f),e\big)\cdot(\mX,\mY)\big)\Big)_{:,1,1}\]
\[=\Big(\big((1n),(1f),e\big)\cdot\phi\big((\mX,\mY)\big)\Big)_{:,1,1}=\phi(\mX,\mY)_{:,n,f}.\]
Hence,  $\theta(\cdot)_{:,:,1:F}=\phi(\cdot)_{:,:,1:F}:\sR^{N\times(F+C)}\rightarrow\sR^{N\times F}$, so $\theta(\cdot)_{:,:,1:F}$ is equivariant to $\sigma_N\times\sigma_F\times\{e\}$ and invariant to $\{e\}\times\{e\}\times\sigma_C$. 
Similarly, $\theta(\cdot)_{:,:,F+1:F+C}=\psi(\cdot)_{:,:,F+1:F+C}:\sR^{N\times(F+C)}\rightarrow\sR^{N\times F}$ is equivariant to $\sigma_N\times\{e\}\times\sigma_F$ and invariant to $\{e\}\times S_F\times \{e\}$. Together, $\theta$ is equivariant to $S_N\times S_F\times S_C$.

The other direction is trivial by taking $\theta^{(NF)}=\big(\theta(\cdot)\big)_{:1,1}\in\ $TSNet$^{(NF)}$ and $\theta^{(NC)}=\big(\theta(\cdot)\big)_{:1,F+1}\in\ $TSNet$^{(NC)}$.
\end{proof}

\subsection{Existence of $\sR^{N\times (F+C)}\setminus\gE$ descriptors that can be approximated by TSNets$^{(NF)}$ and TSNets$^{(NC)}$}
\label{Existence of descriptors that can}

The key step in the proof of the universal approximation theorem  \cref{lem:universality_equivariance} is constructing a polynomial descriptor that can be approximated by TSNets. The following lemma established this.

\begin{lemma}
\label{lem:triple-descriptor}
Let $N,F,C\in\sN$ and consider the triple-symmetric group $S_{N}\times S_{F}\times S_C$ and its standard representation in $\sR^{N\times(F+C)}$ and restrictions to $S_{N-1}\times S_{F-1}\times S_C$ and $S_{N-1}\times S_{F}\times S_{C-1}$ (see Subsecction \ref{Restrictions of the  triple-symmetric group representation}). Let $\gE$ be the exclusion set corresponding to $\sR^{N\times(F+C)}$ (Definition \ref{def:exclusion_set}). 
Then, there exist $L\in\sN$ and two spaces $\gB^{(NF)}$ and $\gB^{(NC)}$ of polynomials  from $\sR^{N\times (F+C)}\setminus\gE$ to $\sR^{L}$ such that TSNet$^{(NF)}$ is a universal approximator of $\gB^{(NF)}$ and TSNet$^{(NC)}$ is a universal approximator of $\gB^{(NC)}$, with the following  two properties.
\begin{enumerate}
    \item 
    There exists  an $S_{N-1}\times S_{F-1}\times S_C$-polynomial descriptor   $Q^{(NF)}:\big(\sR^{N\times (F+C)}\setminus\gE\big)\rightarrow\sR^{L}$ in $\gB^{(NF)}$. 
        \item 
        There exists an $S_{N-1}\times S_{F}\times S_{C-1}$-polynomial descriptor $Q^{(NC)}:\big(\sR^{N\times (F+C)}\setminus\gE\big)\rightarrow\sR^{L}$  in $\gB^{(NC)}$.
\end{enumerate}
\end{lemma}
Note that, for example,  1. of Lemma \ref{lem:triple-descriptor} means that
        $\ \ \forall \mZ,\mY\in\sR^{N\times (F+C)}\setminus\gE:$      
       \[Q^{(NF)}(\mZ) = Q^{(NF)}(\mY)  
        \]
       \begin{equation}
        \label{eq:Q_NF_prop}\Updownarrow
        \end{equation}
        \[\exists (\sigma_{N-1},\sigma_{F-1},\sigma_C) \in S_{N-1}\times S_{F-1}\times S_C: \quad \mY = (\sigma_{N-1},\sigma_{F-1},\sigma_C) \cdot \mZ.\]

\begin{proof}[Proof of of Lemma \ref{lem:triple-descriptor}]
    We prove 1. of Lemma \ref{lem:triple-descriptor}, and note that the proof of 2. is analogous.

    \textbf{(1) A $(S_{F-1}\times S_C)$-descriptor $Q^{(F)}$.} 
    By \cref{lem:multi_channel_encoding}, there exists an $S_{F-1}\times S_C$-polynomial descriptor (see \cref{def:descriptor}) $q^{(F)}: \sR^{(F+C)\times 2} \to \sR^{L_F}$ with degree $M_F$, namely, a polynomial such that
    \begin{align}
        \label{eq:q_f_descriptor}
         & \forall \mZ,\mY\in\sR^{(F+C)\times 2}: \\
         & \quad \Big( q^{(F)}(\mZ) = q^{(F)}(\mY)\Big) 
         \ \Longleftrightarrow\ 
        \Big(\exists (\sigma_{F-1},\sigma_C) \in S_{F-1}\times S_C: \ \mZ = (\sigma_{F-1},\sigma_C) \cdot \mY\Big) .\nonumber
    \end{align}
    We define a polynomial $Q^{(F)}: \sR^{N \times (F+C)}\setminus\gE \to \sR^{N \times L_F}$  as follows.
    Given $\mX \in \sR^{N \times (F+C)}\setminus\gE$, the polynomial is defined by taking the row vector $\sum_{i=1}^N \mX_{i,:} \in \sR^{F+C}$  and appending it to each row of $\mX$ forming a new dimension and creating the intermediate tensor 
    \begin{equation}
        \label{eq:f_sum}
        \hat\mX = [\mX,\oneVec_N\sum_{i=1}^N \mX_{i,:}]\in \sR^{N \times (F+C)\times 2}.
    \end{equation}
    To define $Q^{(F)}$, the function  $q^{(F)}$ is then applied on $\hat{\mX}$ via
  \begin{equation}
        \label{eq:Q_F_def}
        \forall n\in[N], \quad  Q^{(F)}(\mX)_{n,:}:=q^{(F)}(\hat\mX_{n,:,:}) = q^{(F)}([\mX_{n,:},\sum_{i=1}^N \mX_{i,:}]).
    \end{equation}
    By construction, this makes $Q^{(F)}$ an $S_N$-equivariant polynomial. Note that given a permutation $(\sigma_{F-1},\sigma_C) \in S_{F-1}\times S_C$, for every $\mZ,\mY\in\sR^{N\times (F+C)}$, we have
    \begin{alignat*}{1}
        &\Big(\forall n: \mZ_{n,:}=(\sigma_{F-1},\sigma_C)\cdot\mY_{n,:}\Big)\\
        &\Longleftrightarrow \Big(\forall n:\left[\mZ,\oneVec_N\sum_{i=1}^N \mZ_{i,:}\right]_{n,:,:}
            =\left[(\sigma_{F-1},\sigma_C)\cdot\mY,(\sigma_{F-1},\sigma_C)\cdot\oneVec_N\sum_{i=1}^N \mY_{i,:}\right]_{n,:,:}\Big)\\
        &\Longleftrightarrow \Big(\forall n:\hat\mZ_{n,:,:}=(\sigma_{F-1},\sigma_C)\cdot\hat\mY_{n,:,:}\Big).
    \end{alignat*}
    Thus, we obtained that
    \begin{equation}
        \label{eq:intertwine}
        \Big(\forall n: \mZ_{n,:}=(\sigma_{F-1},\sigma_C)\cdot\mY_{n,:}\Big) \Longleftrightarrow \Big(\forall n:\hat\mZ_{n,:,:}=(\sigma_{F-1},\sigma_C)\cdot\hat\mY_{n,:,:}\Big).
    \end{equation}
  We now prove that $Q^{(F)}$ is an $S_N$-equivariant polynomial that satisfies
    \begin{align}
        \label{eq:Q_F_prop}
        & \forall \mZ,\mY\in\sR^{N\times (F+C)}\setminus\gE: \quad \\ 
        & \quad \Big(Q^{(F)}(\mZ) = Q^{(F)}(\mY)\Big) \; \Longleftrightarrow 
 \; \Big(\exists (\sigma_{F-1},\sigma_C) \in S_{F-1}\times S_C: \ \mY = (\sigma_{F-1},\sigma_C) \cdot \mZ\Big).\nonumber
    \end{align}

  \textbf{Direction $\Leftarrow$ of \cref{eq:Q_F_prop}.} Assume there exists $(\sigma_{F-1},\sigma_C) \in S_{F-1}\times S_C$, such that $\mY= (\sigma_{F-1},\sigma_C)\cdot \mZ\in\sR^{N\times (F+C)}$.
  By definition, the action of $(\sigma_{F-1},\sigma_C)$ permutes the columns of $\mZ$ identically across all rows, namely,
    \[
        \forall n\in[N]:\quad \mY_{n,:} = (\sigma_{F-1},\sigma_C)\cdot\mZ_{n,:}
    \]
    and by \cref{eq:intertwine}
    \begin{equation}
        \label{eq:hat_yz}
        \forall n\in[N]:\quad \hat\mY_{n,:,:} = (\sigma_{F-1},\sigma_C)\cdot\hat\mZ_{n,:,:}.
    \end{equation}
    By \cref{eq:q_f_descriptor,eq:hat_yz}, we obtain
    \[
        \forall n\in[N]:\quad q^{(F)}(\hat\mZ_{n,:,:}) = q^{(F)}(\hat\mY_{n,:,:}),
    \]
    and by the definition of $Q^{(F)}$ in \cref{eq:Q_F_def}, we have that $Q^{(F)}(\mY) = Q^{(F)}(\mZ)$.
    
    \textbf{Direction $\Rightarrow$ of \cref{eq:Q_F_prop}.} We now show the opposite direction. Assume that $Q^{(F)}(\mY) = Q^{(F)}(\mZ)$ for some $\mY,\mZ\in\sR^{N\times (F+C)}\setminus\gE$. By the definition of $Q^{(F)}$ in \cref{eq:Q_F_def}, we have that
    \[
        \forall n\in[N]:\quad q^{(F)}(\hat\mZ_{n,:,:}) = q^{(F)}(\hat\mY_{n,:,:}),
    \]
    and by \cref{eq:q_f_descriptor},  the equality implies the existence of a permutation $(\sigma_{F-1}^{(n)},\sigma_C^{(n)})\in S_{F-1}\times S_C$ for each $n\in[N]$, such that
    \begin{equation}
        \label{eq:perm_f}
        \hat\mY_{n,:,:}=(\sigma_{F-1}^{(n)},\sigma_C^{(n)})\cdot\hat\mZ_{n,:,:}\ .
    \end{equation}
    Next, we would like to show that 
    \[
        \forall n_1,n_2 \in [N] : \quad (\sigma_{F-1}^{(n_1)},\sigma_C^{(n_1)}) = (\sigma_{F-1}^{(n_2)},\sigma_C^{(n_2)}).
    \]
    For that, consider two indices $n_1\neq n_2\in[N]$. By \cref{eq:perm_f}  and the definition of $\hat\mZ$ and $\hat\mY$ (\cref{eq:f_sum}), we get
    \begin{equation}
        \label{eq:perm_f_mat}
        \oneVec_N\sum_{i=1}^N \mY_{i,:} = (\sigma_{F-1}^{(n_1)},\sigma_C^{(n_1)})\cdot \oneVec_N\sum_{i=1}^N \mZ_{i,:} \;\text{ and }\; \oneVec_N\sum_{i=1}^N \mY_{i,:} =(\sigma_{F-1}^{(n_2)},\sigma_C^{(n_2)})\cdot \oneVec_N\sum_{i=1}^N \mZ_{i,:}\ .
    \end{equation}
    Since $\oneVec_N\sum_{i=1}^N \mZ_{i,:}$ has equal rows (and so does $\oneVec_N\sum_{i=1}^N \mY_{i,:}$ ) and $S_{F-1}\times S_C$ permutes all columns simultaneously, \cref{eq:perm_f_mat} can be written as
    \begin{equation}
        \label{eq:perm_f_vec}
        (\sigma_{F-1}^{(n_1)},\sigma_C^{(n_1)})\cdot \sum_{i=1}^N \mZ_{i,:} =(\sigma_{F-1}^{(n_2)},\sigma_C^{(n_2)})\cdot \sum_{i=1}^N \mZ_{i,:}\  .
    \end{equation}
    Recall that $\mY,\mZ\in\sR^{N\times (F+C)}\setminus\gE$.  Hence the first $F$ entries of $\sum_{i=1}^N \mY_{i,:}$ and $\sum_{i=1}^N \mZ_{i,:}$, each have $F$ distinct elements. Moreover, the last $C$ entries of $\sum_{i=1}^N \mY_{i,:}$ and $\sum_{i=1}^N \mZ_{i,:}$, each have $C$ distinct elements. Therefore, the only way that \cref{eq:perm_f_vec} holds is when $\sigma_{F-1}^{(n_1)}=\sigma_{F-1}^{(n_2)}$ and $\sigma_C^{(n_1)}=\sigma_C^{(n_2)}$. Thus, all row–wise permutations coincide, i.e.
    \[
        \exists(\sigma_{F-1},\sigma_C)\in S_{F-1}\times S_C, \  \forall i\in [N]: \quad (\sigma_{F-1}^{(i)},\sigma_C^{(i)})=(\sigma_{F-1},\sigma_C),
    \]
    and consequently by \cref{eq:perm_f,eq:intertwine},
    \[
        \forall i\in[N]:\quad \mY_{i,:} = (\sigma_{F-1},\sigma_C)\cdot\mZ_{i,:}\ .
    \]
    Thus, $Q^{(F)}: \sR^{N \times (F+C)} \to \sR^{N \times L_F}$ is an $S_N$-equivariant polynomial that satisfies \cref{eq:Q_F_prop}.

    Similarly, by \cref{lem:multi_channel_encoding}, there exists an $S_{F}\times S_{C-1}$-polynomial descriptor $q^{(C)}: \sR^{(F+C)\times 2} \to \sR^{L_C}$ with degree $M_C$, namely, a polynomial such that 
    \begin{align*}
         & \forall \mZ,\mY\in\sR^{(F+C)\times 2}: \\
         & \quad \Big( q^{(C)}(\mZ) = q^{(C)}(\mY)\Big) 
         \ \Longleftrightarrow\ 
        \Big(\exists (\sigma_F,\sigma_{C-1}) \in S_F\times S_{C-1}: \ \mZ = (\sigma_F,\sigma_{C-1}) \cdot \mY\Big).
    \end{align*}
    We define a polynomial $Q^{(C)}: \sR^{N \times (F+C)}\setminus\gE \to \sR^{N \times L_C}$  similarly to $Q^{(C)}$.
    Given $\mX \in \sR^{N \times (F+C)}\setminus\gE$, we form the intermediate tensor, as in \cref{eq:f_sum}  
    \[
    \hat\mX = [\mX, \oneVec_N\sum_{i=1}^N \mX_{i,:}] \in \sR^{N\times(F+C)\times 2}.
    \]
    To define $Q^{(C)}$, the function  $q^{(C)}$ is then applied on $\hat{\mX}$ via
    \begin{equation}
        \label{eq:Q_C_def}
        \forall n\in[N], \quad Q^{(C)}(\mX)_{n,:} := q^{(C)}\big(\hat\mX_{n,:,:}\big) = q^{(C)}\big([\mX_{n,:}, \sum_{i=1}^N \mX_{i,:}]\big).
    \end{equation}
    Then $Q^{(C)}$ is an $S_N$-equivariant polynomial that satisfies the analogue of \cref{eq:Q_F_prop}
    \begin{align*}
        & \forall \mZ,\mY\in\sR^{N\times (F+C)}\setminus\gE: \quad \\ 
        & \quad \Big(Q^{(C)}(\mZ) = Q^{(C)}(\mY)\Big) \; \Longleftrightarrow 
        \; \Big(\exists (\sigma_F,\sigma_{C-1}) \in S_F\times S_{C-1}: \ \mY = (\sigma_F,\sigma_{C-1}) \cdot \mZ\Big).
    \end{align*}

    \textbf{(2) A $(S_{N-1}\times S_{F-1}\times S_C)$-descriptor $Q^{(NF)}$.} 
    By \cref{lem:multi_channel_encoding}, there exists an $S_{N-1}$-invariant polynomial $q^{(N)}: \sR^{N \times 3} \to \sR^{L_N}$ of degree $M_N$, such that
    \begin{equation}
        \label{eq:q_n_descriptor_f}
        \forall \mZ,\mY\in\sR^{N\times 3}: \quad \Big(q^{(N)}(\mZ) = q^{(N)}(\mY)\Big) \  \Longleftrightarrow \ \Big(\exists \sigma_{N-1} \in S_{N-1}: \ \mZ = \sigma_{N-1} \cdot \mY\Big).
    \end{equation}
    
    We define a polynomial $Q^{(NF)}: \sR^{N \times (F+C)}\setminus\gE \to \sR^{L_N \times L_F}$,
    where $L_F$ is the output dimension of the polynomial descriptor $q^{(F)}$ given in \cref{eq:q_f_descriptor}, as follows. Given an input $\mX \in \sR^{N \times (F+C)}\setminus\gE$, the output $Q^{(NF)}(\mX)$ is defined via the following composition of computations.
    \begin{itemize}
        \item[(i)] Compute $Q^{(F)}(\mX)\in\sR^{N\times L_F}$, as defined in \cref{eq:Q_F_def}.
        \item[(ii)] Consider the column vectors $\sum_{f=1}^F \mX_{:,f} \in \sR^N$ and $\sum_{c=1}^C \mX_{:,F+c} \in \sR^N$. Append these vectors to $Q_F(\mX)$ forming a new dimension. Namely, construct the intermediate tensor
        \begin{equation}
            \label{eq:fc_sum}
            \widetilde\mX = [Q^{(F)}(\mX),
            \sum_{f=1}^F \mX_{:,f}\oneVec_{L_F}^\top, \sum_{c=1}^C \mX_{:,F+c}\oneVec_{L_F}^\top
            ]\in \sR^{N \times L_F\times 3}.
        \end{equation}
        \item[(iii)] Apply the function $q^{(N)}$ on $\widetilde{\mX}$ to define $Q^{(NF)}(\mX)$ via
        \begin{align}
            \label{eq:Q_N_def_f}
            \forall l\in[L_F]: \quad &Q^{(NF)}(\mX)_{:,l}:=q^{(N)}(\widetilde\mX_{:,l,:}) \\
            &=q^{(N)}([Q^{(F)}(\mX)_{:,l},
            \sum_{f=1}^F \mX_{:,f}, 
            \sum_{c=1}^C \mX_{:,F+c}])\in \sR^{L_N}.\nonumber
        \end{align}
    \end{itemize}
    By construction, $Q^{(NF)}$ is a polynomial from $\sR^{N\times (F+C)}\setminus\gE$ to $\sR^{L_N\times L_F}$.
    We now prove that $Q^{(NF)}$ satisfies
        $\ \ \forall \mZ,\mY\in\sR^{N\times (F+C)}\setminus\gE:$      
       \[Q^{(NF)}(\mZ) = Q^{(NF)}(\mY)  
        \]
       \begin{equation}
        \label{eq:Q_NF_prop2}\Updownarrow
        \end{equation}
        \[\exists (\sigma_{N-1},\sigma_{F-1},\sigma_C) \in S_{N-1}\times S_{F-1}\times S_C: \quad \mY = (\sigma_{N-1},\sigma_{F-1},\sigma_C) \cdot \mZ.\]
    
    \textbf{Direction $\Uparrow$ of \cref{eq:Q_NF_prop2}.} Let $\mY,\mZ\in\sR^{N\times (F+C)}\setminus\gE$. Suppose there exists
    $
        (\sigma_{N-1},\sigma_{F-1},\sigma_C) \in S_{N-1}\times S_{F-1}\times S_C
    $
    such that $\mY= (\sigma_{N-1},\sigma_{F-1},\sigma_C)\cdot \mZ$. Recall that $Q^{(F)}$ is $S_N$-equivariant. This, together with \cref{eq:Q_F_prop}, gives
    \begin{equation}
    \label{eq:q_f_zy}
      \begin{split}
        \forall l\in[L_F]:\quad Q^{(F)}(\mY)_{:,l} 
        &
        =  Q^{(F)}((\sigma_{N-1},e,e)\cdot(e,\sigma_{F-1},\sigma_C)\cdot \mZ) \\
        & = \sigma_{N-1}\cdot Q^{(F)}((e,\sigma_{F-1},\sigma_C)\cdot \mZ) = \sigma_{N-1}\cdot Q^{(F)}(\mZ)_{:,l}.
    \end{split}  
    \end{equation}
    Substituting \cref{eq:q_f_zy} into the definition of $\widetilde\mY$ in \cref{eq:fc_sum}, we get
    \[
        \widetilde\mY = [\sigma_{N-1}\cdot Q^{(F)}(\mZ),
            \sum_{f=1}^F \mY_{:,f}\oneVec_{L_F}^\top, \sum_{c=1}^C \mY_{:,F+c}\oneVec_{L_F}^\top].
    \]
    Since $\mY= (\sigma_{N-1},\sigma_{F-1},\sigma_C)\cdot \mZ$, and by the fact that summation is invariant to the permutations $\sigma_{F-1}$ and $\sigma_C$, we obtain
    \[
        \widetilde\mY = \sigma_{N-1}\cdot [Q^{(F)}(\mZ),
            \sum_{f=1}^F \mZ_{:,f}\oneVec_{L_F}^\top, \sum_{c=1}^C \mZ_{:,F+c}\oneVec_{L_F}^\top] = \sigma_{N-1}\cdot \widetilde\mZ.
    \]
    Hence, by \cref{eq:q_n_descriptor_f}
    \[
        \forall l\in[L_F]:\quad q^{(N)}(\widetilde\mZ_{:,l,:}) = q^{(N)}(\widetilde\mY_{:,l,:}),
    \]
    and by the definition of $Q^{(NF)}$ in \cref{eq:Q_N_def_f}, we obtain $Q^{(NF)}(\mY) = Q^{(NF)}(\mZ)$.
    
    \textbf{Direction $\Downarrow$ of \cref{eq:Q_NF_prop2}.} We now show the opposite direction. Let $\mY,\mZ\in\sR^{N\times (F+C)}\setminus\gE$, and assume that $Q^{(NF)}(\mZ) = Q^{(NF)}(\mY)$. By the definition of $Q^{(NF)}$ in \cref{eq:Q_N_def_f}, we have that
    \[
        \forall l\in[L_F]:\quad q^{(N)}(\widetilde\mZ_{:,l,:}) = q^{(N)}(\widetilde\mY_{:,l,:}).
    \]
    By \cref{eq:q_n_descriptor_f},  this equality implies the existence of a permutation $\sigma_{N-1}^{(l)}\in S_{N-1}$, for each $l\in[L_F]$, such that
    \begin{equation}
        \label{eq:perm_j_f}
        \widetilde\mY_{:,l,:}=\sigma_{N-1}^{(l)}\cdot\widetilde\mZ_{:,l,:}.
    \end{equation}

    Next, we would like to show that 
    \[
        \forall l_1,l_2 \in [L_F] : \quad \sigma_{N-1}^{(l_1)} = \sigma_{N-1}^{(l_2)}.
    \]
    For that, consider two indices $l_1\neq l_2\in[L_F]$. By \cref{eq:perm_j_f},   and by the definition of $\widetilde\mZ$ and $\widetilde\mY$ (\cref{eq:fc_sum}), we get, for $j=1,2$,
    \begin{equation}
        \label{eq:perm_n_mat}
        \sum_{f=1}^F \mY_{:,f}\oneVec_{L_F}^\top = \sigma_{N-1}^{(l_j)}\cdot \sum_{f=1}^F \mZ_{:,f}\oneVec_{L_F}^\top \;\text{ and }\;
        \sum_{c=1}^C \mY_{:,F+c}\oneVec_{L_F}^\top = \sigma_{N-1}^{(l_j)}\cdot \sum_{c=1}^C \mZ_{:,F+c}\oneVec_{L_F}^\top.
    \end{equation}
    Note that  $\sum_{f=1}^F \mZ_{:,f}\oneVec_{L_F}^\top$ has equal columns (or rows consisting of a scalar times $\oneVec_{L_F}^\top$), and so does $\sum_{c=1}^C \mZ_{:,F+c}\oneVec_{L_F}^\top$. The same property holds also for  $\sigma_{N-1}^{(l_j)}\cdot\sum_{f=1}^F \mZ_{:,f}\oneVec_{L_F}^\top$ and $\sigma_{N-1}^{(l_j)}\cdot\sum_{c=1}^C \mZ_{:,F+c}\oneVec_{L_F}^\top$ for $j=1,2$. 
     Moreover, since $S_{F-1}\times S_C$ permutes all rows simultaneously, \cref{eq:perm_n_mat} is equivalent to
    \begin{equation}
        \label{eq:perm_n_vec}
        \sigma_{N-1}^{(l_1)}\cdot \sum_{f=1}^F \mZ_{:,f} =\sigma_{N-1}^{(l_2)}\cdot \sum_{f=1}^F \mZ_{:,f} \;\text{ and }\;
        \sigma_{N-1}^{(l_1)}\cdot \sum_{c=1}^C \mZ_{:,F+c} =\sigma_{N-1}^{(l_2)}\cdot \sum_{c=1}^C \mZ_{:,F+c}
    \end{equation}

    Recall that $\mY,\mZ\in\sR^{N\times (F+C)}\setminus\gE$. Hence $\sum_{f=1}^F \mY_{:,f}, \sum_{f=1}^F \mZ_{:,f}, \sum_{c=1}^C \mY_{:,F+c}$ and $\sum_{c=1}^C \mZ_{:,F+c}$, each have $N$ distinct elements. Therefore the only way that \cref{eq:perm_n_vec} holds is when all column–wise permutations coincide, i.e.,
    \[
        \exists\sigma_{N-1}\in S_{N-1}, \ \forall l\in [L_F]: \quad \sigma_{N-1}^{(l)}=\sigma_{N-1}.
    \]
    and by \cref{eq:perm_j_f}
    \[
        \forall l\in[L_F]:\quad Q^{(F)}(\mY)_{:,l} = \sigma_{N-1}\cdot Q^{(F)}(\mZ)_{:,l}.
    \]
    Recall that $Q^{(F)}$ is $S_N$-equivariant, so
    \[
        Q^{(F)}(\mY) =\sigma_{N-1}\cdot Q^{(F)}( \mZ) = Q^{(F)}(\sigma_{N-1}\cdot \mZ).
    \]
    Hence, by \cref{eq:Q_F_prop}
    \[
        \exists(\sigma_{F-1},\sigma_C)\in S_{F-1}\times S_C: \quad \mY = (\sigma_{N-1}, \sigma_{F-1}, \sigma_C)\cdot\mZ.
    \]
    
    Thus, $Q^{(NF)}: \sR^{N \times (F+C)} \to \sR^{L_N \times L_F}$ satisfies \cref{eq:Q_NF_prop2}.

    Similarly, we define a polynomial $Q^{(NC)}: \sR^{N \times (F+C)}\setminus\gE \to \sR^{L_N \times L_C}$  as follows.
    Given an input $\mX \in \sR^{N \times (F+C)}\setminus\gE$, the output $Q^{(NF)}(\mX)$ is defined via the following composition of computations.
    \begin{itemize}
        \item[(i)] Compute $Q^{(C)}(\mX)\in\sR^{N\times L_C}$, as defined in \cref{eq:Q_C_def}.
        \item[(ii)] Consider the column vectors $\sum_{f=1}^F \mX_{:,f} \in \sR^N$ and $\sum_{c=1}^C \mX_{:,F+c} \in \sR^N$. Append these vectors to $Q_C(\mX)$ forming a new dimension. Namely, construct the intermediate tensor
        \begin{equation*}
            \Bar\mX = [Q^{(C)}(\mX),
            \sum_{f=1}^F \mX_{:,f}\oneVec_{L_C}^\top, \sum_{c=1}^C \mX_{:,F+c}\oneVec_{L_C}^\top
            ]\in \sR^{N \times L_C\times 3}.
        \end{equation*}
        \item[(iii)] Apply the function $q^{(N)}$ on $\Bar{\mX}$ to define $Q^{(NC)}(\mX)$ via
        \begin{align*}
            \label{eq:Q_N_def_c}
            \forall l\in[L_C]: \quad &Q^{(NC)}(\mX)_{:,l}:=q^{(N)}(\Bar\mX_{:,l,:}) \\
            &=q^{(N)}([Q^{(C)}(\mX)_{:,l},
            \sum_{f=1}^F \mX_{:,f}, 
            \sum_{c=1}^C \mX_{:,F+c}])\in \sR^{L_N}.
        \end{align*}
    \end{itemize}
    Then $Q^{(NC)}$ satisfies the analogue of \cref{eq:Q_NF_prop2}, i.e.
    $\ \ \forall \mZ,\mY\in\sR^{N\times (F+C)}\setminus\gE:$      
    \[
        Q^{(NC)}(\mZ) = Q^{(NC)}(\mY)  
    \]
    \begin{equation*}
        \Updownarrow
    \end{equation*}
    \[\exists (\sigma_{N-1},\sigma_F,\sigma_{C-1}) \in S_{N-1}\times S_F\times S_{C-1}: \quad \mY = (\sigma_{N-1},\sigma_F,\sigma_{C-1}) \cdot \mZ.\]
    
    \textbf{(3) Expressing $q^{(F)}$, $q^{(C)}$ and $q^{(N)}$ using DMPs and PMPs.} 

Let $T_F=T_{M_F,2}=\binom{M_F + 2}{2}$. Since $q^{(F)}: \sR^{(F+C)\times 2} \to \sR^{L_F}$ is an $(S_{F-1}\times S_C)$-polynomial descriptor of degree $M_F$, 
by Lemma \ref{lem:q_via_DMP}, there exist polynomials $U^{(F)}_\vbeta:\sR^{2T_F}\to\sR^{L_F}$, parameterized by $\vbeta\in\sZ_0^{2}$, such that for $\mV\in \sR^{(F+C)\times 2}$ 
\begin{equation}
    \label{eq:qFpoly}
    q^{(F)}(\mV)=\sum_{\norm{\vbeta}_1\leq M_F} \mV_{1,:}^\vbeta W^{(F)}_\vbeta(\mV),
\end{equation}
    where
    \begin{equation}
        \label{eq:WFb_comp}
         W^{(F)}_\vbeta = U^{(F)}_\vbeta \circ t^{(F+C,M_F,2)}_{\rm DMP}\circ b^{(F+C,M_F,2)}.
    \end{equation}
    Similarly, there exist polynomials $U^{(C)}_\vbeta:\sR^{2T_F}\to\sR^{L_F}$ such that  
     \[
     q^{(C)}(\mV)=\sum_{\norm{\vbeta}_1\leq M_F} \mV_{1,:}^\vbeta W^{(C)}_\vbeta(\mV),
    \]
    where
    \[
        W^{(C)}_\vbeta = U^{(C)}_\vbeta \circ t^{(F+C,M_F,2)}_{\rm DMP}\circ b^{(F+C,M_F,2)}.
    \]
    Lastly, for $T_N=\binom{M_N + 3}{3}$, there exist polynomials $U^{(N)}_\vbeta:\sR^{T_N}\to\sR^{L_N}$, parameterized by $\vbeta\in\sZ_0^{3}$, such that for $\mV\in \sR^{N\times 3}$ 
    \begin{equation}
        \label{eq:qNpoly}
        q^{(N)}(\mV)=\sum_{\norm{\vbeta}_1\leq M} \mV_{1,:}^\vbeta W^{(N)}_\vbeta(\mV),
     \end{equation}
    where
    \begin{equation}
        \label{eq:WNb_comp}
        W^{(N)}_\vbeta = U^{(N)}_\vbeta \circ t^{(N,M_N,3)}_{\rm PMP}\circ b^{(N,M_N,3)}.
    \end{equation}

    \textbf{(4) Approximating $Q^{(F)}$ using TSNet$^{(F)}$.} 
Next, we show how to approximate $Q^{(F)}$ by a TSNet$^{(F)}$ via a composition of approximations of the components of $Q^{(F)}$. 

     \textbf{(4.1) The space of polynomials $\gB_1$ and MLPs $\gN_1$.}  
    Note that $c^{(M_F,2)}:\sR^2\to \sR^{T_F}$ of Definition \ref{def:all_PMP_DMP} belongs to the space  $\gB_1$ of all polynomials from $\sR^2$ to $\sR^{T_F}$. By the Universal Approximation Theorem (Theorem \ref{thm:UAT}), the space $\gN_1$ of multilayer perceptrons (MLPs) from $\sR^2$ to $\sR^{T_F}$ is a universal approximator of  $\gB_1$.

    \textbf{(4.2) The space of polynomials $\gB_b$.} The polynomial $b^{(F+C,M_F,2)}$ applies $c^{(M_F,2)}$ row-wise (see Definition \ref{def:all_PMP_DMP}) and belongs to the space $\gB_b$ of all 
    polynomials
    $z:\sR^{(F+C)\times 2}
    \rightarrow \sR^{(F+C)\times T_F}$ of the form $z(\mV) = (y(\mV_{1,:}), \ldots, y(\mV_{F+C,:}))^{\top}$ for an arbitrary polynomial $y\in\gB_1$. 
    
    \textbf{$\quad\ \ \ $ The space of MLPs $\gN_b$.} 
    We define a corresponding space $\gN_b$ of MLPs from $\sR^{(F+C)\times 2}$ to $\sR^{(F+C)\times T_F}$, in which each function $\hat z \in \gN_b$ is of the form $\hat z(\mV):=(\hat y(\mV_{1,:}), \ldots, \hat y(\mV_{F+C,:}))^{\top}$ where $\hat y\in\gN_1$ as defined in (4.1). It is easy to see that $\gN_b$ is a universal approximator of $\gB_b$. 
    Indeed, let $z(\mV) = (y(\mV_{1,:}), \ldots, y(\mV_{F+C,:}))^{\top}$ be a polynomial in $\gB_b$ restricted to a  compact domain $C\subset \sR^{(F+C)\times 2}$, and let $\epsilon>0$. The domain $C$ is contained in a larger compact domain of the form 
    \[
    \hat C=\underbrace{C' \times \ldots \times C'}_{\text{F+C}},
    \]
    where $C'\subset \sR^2$ is compact. Now, note that $y$ is extended uniquely to a polynomial  $y:C'\rightarrow \sR^T$. Hence, since $\gN_1$ is a universal approximator of $\gB_1$, $y$ can be uniformly approximated by some $\hat{y}\in\gN_1$ over the compact domain $C'$, up to error $\epsilon$. As a result, $z$ is uniformly  approximated by $\hat z(\mV):=(\hat y(\mV_{1,:}), \ldots, \hat y(\mV_{F+C,:}))^{\top}$ up to error $\epsilon$ over $\hat{C}$. Lastly, since $C\subset\hat{C}$, $z$ is uniformly approximated by $\hat{z}$ up to $\epsilon$ also in $C$. 

    \textbf{(4.3) The space of linear mappings $\gN_t$.} The transformation $t_{\rm DMP}^{(F+C,M_F,2)}:\sR^{(F+C)\times T_F}\rightarrow \sR^{2T_F}$ of Definition \ref{def:all_PMP_DMP} belongs to the space $\gN_t$ of linear mappings spanned by the two linear mappings
    \begin{equation}
        \label{eq:Fsum}
        [F_{\rm sum}]: \sR^{(F+C)\times T_F}\rightarrow \sR^{2T_F},  \quad[F_{\rm sum}](\mX,\mY)= (\mX\oneVec_{F},0)
    \end{equation}
    and
    \begin{equation}
        \label{eq:Csum}
        [C_{\rm sum}]: \sR^{(F+C)\times T_F}\rightarrow \sR^{2T_F},  \quad[C_{\rm sum}](\mX,\mY)= (0,\mY\oneVec_{C}).
    \end{equation}
Trivially, $\gN_t$ is a universal approximator of itself.

    \textbf{(4.4) The space of polynomials $\gB_u$ and MLPs $\gN_u$.} Consider the polynomial $U^{(F)}:\sR^{2T_F}\to\sR^{T_F\times L_F}$ defined by  
    \begin{equation}
        \label{eq:UF}
        U^{(F)}=\{U^{(F)}_\vbeta:\sR^{2T_F}\to\sR^{L_F}\}_{\norm{\vbeta}_1\leq M_F}.
    \end{equation}
 The mapping $U^{(F)}$ belongs to the space $\gB_u$ of all polynomials from $\sR^{2T_F}$ to $\sR^{T_F\times L_F}$. By the Universal Approximation Theorem (Theorem \ref{thm:UAT}) the space $\gN_u$ of multilayer perceptrons (MLPs) from $\sR^{2T_F}$ to $\sR^{T_F\times L_F}$ is a universal approximator of  $\gB_u$.

    \textbf{(4.5) The space of polynomials $\gB_w$ and MLPs $\gN_w$.}
    Define the polynomial $W^{(F)}:\sR^{(F+C)\times 2}\rightarrow\sR^{T_F\times L_F}$ by 
    \begin{equation*}
        W^{(F)}=\{W^{(F)}_\vbeta\}_{\|\vbeta\|_1\leq M_F},
    \end{equation*}   
      and note that by (\ref{eq:WFb_comp}) we have
     \begin{equation}
        \label{eq:WFb_comp2}
         W^{(F)} = U^{(F)} \circ t^{(F+C,M_F,2)}_{\rm DMP}\circ b^{(F+C,M_F,2)}.
    \end{equation}
   Define the space of MLPs $\sR^{(F+C)\times 2}\rightarrow \sR^{T_F\times L_F}$
    \[
        \gN_w:=\bigl\{\theta_u\circ\theta_t\circ\theta_b\mid
        \theta_u\in\gN_u, \theta_t\in\gN_t, \theta_b\in\gN_b \bigr\}.
    \]  
    By \cref{thm:approx} and the above derivation, the space of MLPs $\gN_w$ is a universal approximator of the space of polynomial $\sR^{(F+C)\times 2}\rightarrow \sR^{T_F\times L_F}$ 
    \[
        \gB_w:=\bigl\{f_u\circ f_t\circ f_b\mid
        f_u\in\gB_u, f_t\in\gN_t, f_b\in\gB_b \bigr\}.
    \]
    Moreover, by (\ref{eq:WFb_comp2}) we have $W^{(F)}\in\gB_w$. 

    \textbf{(4.6) The space of polynomials $\gB_F$.} Next, we construct  a space $\gB_F$ of polynomials from $\sR^{N\times (F+C)}\setminus\gE$ to $\sR^{N\times L_F}$ , and show that $Q^{(F)}\in\gB_F$. 
    A general function from the space $\gB_F$ is defined constructively as follows. Given an input $\mX\in \sR^{N\times (F+C)}\setminus\gE$, we first compute its row-wise sum $\sum_{i=1}^N \mX_{i,:}$ and append it to a new dimension of $\mX$, creating a tensor $\mV\in\sR^{N\times (F+C)\times 2}$ via
    \[
        \forall n\in [N]: \quad  \mV_{n,:,:} = [\mX_{n,:},\sum_{i=1}^N \mX_{i,:}].
    \]
    Note that this step is a TSNet$_{\rm Eq}$.
    Next, for some $Y\in \sN$ (which can be any value from $\sN$), we define for each multi-index $\vbeta$ with $\norm{\vbeta}_1<Y$, a polynomial $R_{\vbeta}\in\gB_w$ ($R_{\vbeta}$ can be any function from $\gB_w$). We then apply $R_{\vbeta}$   row-wise on each $\mV_{n,:,:}$, and multiply it by $\mV_{n,j,:}^\vbeta$,  to create the following array in $\sR^{N\times L_F}$
    \[ 
    \Big\{\sum_{\norm{\vbeta}_1\leq Y} \mV_{n,1,:}^\vbeta R_{\vbeta}(\mV_{n,:,:})\Big\}_{n\in[N]}.
    \]
    Consequently, by \cref{eq:f_sum,eq:Q_F_def} and \ref{eq:qFpoly}, $Q^{(F)}\in\gB_F$.

    \textbf{$\quad \ \ $ The space of MLPs $\gN_F$.} We now define a corresponding  space of MLPs  from $\sR^{N\times (F+C)}\setminus\gE$ to $\sR^{N\times L_F}$, that we denote by $\gN_F$,  as follows. Given an input $\mX\in \sR^{N\times (F+C)}$, first compute its row-wise sum $\sum_{i=1}^N \mX_{i,:}$ and append it to a new dimension of $\mX$, creating the tensor $\mV\in\sR^{N\times (F+C)\times 2}$ defined by
    \[
        \forall n\in [N]: \quad  \mV_{n,:,:} = [\mX_{n,:},\sum_{i=1}^N \mX_{i,:}].
    \]
    This step is a TSNet$_{\rm Eq}$. 
    Next, we apply on $\mV_{n,j,:}$, for each $n\in[N]$ and $j\in[F+C]$, any sequence of general MLPs $\theta_l:\sR^2\rightarrow\sR$, $l\in [L]$ for some $L\in\sN$. Again, this computation is still a TSNet$_{\rm Eq}$. Then, apply the same general MLP $\zeta\in\gN_w$ on $\mV_{n,:,:}$ for all $n\in[N]$. Lastly, we apply any general MLP $\kappa:\sR^{2}\rightarrow\sR$ on each $(\theta_l(\mV_{n,1,:}),\zeta(\mV_{n,:,:}))$ and sum over $l$ to obtain 
    \[
        \Big\{\sum_{l=1}^L \kappa(\theta_l(\mV_{n,1,:}),\zeta(\mV_{n,:,:}))\Big\}_{n\in[N]}.
    \]
    Since we can approximate the function that takes the product of two input numbers (for us $\mV_{n,1,:}^\vbeta $ and $R_{\vbeta}(\mV_{n,:,:})$) by an MLP $\kappa:\sR^{2}\rightarrow\sR$, since we can approximate monomials $\mV_{n,1,:}^\vbeta$ by MLPs $\sR^2\rightarrow\sR$, and since we can approximate polynomials from $\gB_w$ by MLPs from $\gN_w$, the above  space $\gN_F$ is a universal approximator of $\gB_F$. Moreover, by construction, the space  $\gN_F$ consists only of TSNets$^{(F)}$.

    \textbf{(5) Approximating $Q^{(NF)}$ using TSNet$^*$.} Next, we show how to approximate $Q^{(NF)}$ by a TSNet$^{(NF)}$ via a composition of approximations of the components of $Q^{(NF)}$. 

    \textbf{(5.1) The space of polynomials $\gB_b^{(N)}$.}  
    Note that $c^{(M_N,3)}:\sR^3\to \sR^{T_N}$ of Definition \ref{def:all_PMP_DMP} belongs to the space $\gB_1^{(N)}$ of all polynomials from $\sR^3$ to $\sR^{T_N}$. By the Universal Approximation Theorem (Theorem \ref{thm:UAT}), the space $\gN_1^{(N)}$ of multilayer perceptrons (MLPs) from $\sR^3$ to $\sR^{T_N}$ is a universal approximator of  $\gB_1^{(N)}$.

    \textbf{(5.2) The space of polynomials $\gB_b^{(N)}$.} The polynomial $b^{(N,M_N,3)}$ applies $c^{(M_N,3)}$ row-wise (see Definition \ref{def:all_PMP_DMP}) and belongs to the space $\gB_b^{(N)}$ of all 
    polynomials
    $z:\sR^{N\times 3}
    \rightarrow \sR^{N\times T_N}$ of the form $z(\mV) = (y(\mV_{1,:}), \ldots, y(\mV_{N,:}))^{\top}$ for an arbitrary polynomial $y\in\gB_1^{(N)}$. 
    
    \textbf{$\quad\ \ \ $ The space of MLPs $\gN_b^{(N)}$.} 
    We define a corresponding space $\gN_b^{(N)}$ of MLPs from $\sR^{N\times 3}$ to $\sR^{N\times T_N}$, in which each function $\hat z \in \gN_b^{(N)}$ is of the form $\hat z(\mV):=(\hat y(\mV_{1,:}), \ldots, \hat y(\mV_{N,:}))^{\top}$ where $\hat y\in\gN_1^{(N)}$ as defined in (5.1). It is easy to see that $\gN_b^{(N)}$ is a universal approximator of $\gB_b^{(N)}$. 
    Indeed, let $z(\mV) = (y(\mV_{1,:}), \ldots, y(\mV_{n,:}))^{\top}$ be a polynomial in $\gB_b^{(N)}$ restricted to a  compact domain $C\subset \sR^{N\times 3}$, and let $\epsilon>0$. The domain $C$ is contained in a larger compact domain of the form 
    \[
    \hat C=\underbrace{C' \times \ldots \times C'}_{\text{N}},
    \]
    where $C'\subset \sR^3$ is compact. Now, note that $y$ is extended uniquely to a polynomial  $y:C'\rightarrow \sR^T$. Hence, since $\gN_1^{(N)}$ is a universal approximator of $\gB_1^{(N)}$, $y$ can be uniformly approximated by some $\hat{y}\in\gN_1^{(N)}$ over the compact domain $C'$, up to error $\epsilon$. As a result, $z$ is uniformly  approximated by $\hat z(\mV):=(\hat y(\mV_{1,:}), \ldots, \hat y(\mV_{N,:}))^{\top}$ up to error $\epsilon$ over $\hat{C}$. Lastly, since $C\subset\hat{C}$, $z$ is uniformly approximated by $\hat{z}$ up to $\epsilon$ also in $C$. 

    \textbf{(5.3) The space of linear mappings $\gN_t^{(N)}$.} The transformation $t_{\rm PMP}^{(N,M_N,3)}:\sR^{N\times T_N}\rightarrow \sR^{T_N}$ of Definition \ref{def:all_PMP_DMP} belongs to the space $\gN_t^{(N)}$ of linear mappings spanned by the linear mapping
    \begin{equation}
        \label{eq:Nsum}
        [N_{\rm sum}]: \sR^{N\times T_N}\rightarrow \sR^{T_N},  \quad[N_{\rm sum}](\mX)= \mX\oneVec_{N}.
    \end{equation}
    Trivially, $\gN_t^{(N)}$ is a universal approximator of itself.

    \textbf{(5.4) The space of polynomials $\gB_u^{(N)}$ and MLPs $\gN_u^{(N)}$.} Consider the polynomial $U^{(N)}:\sR^{T_N}\to\sR^{T_N\times L_N}$ defined by  
    \begin{equation}
        \label{eq:UN}
        U^{(N)}=\{U^{(N)}_\vbeta:\sR^{T_N}\to\sR^{L_N}\}_{\norm{\vbeta}_1\leq M_N}.
    \end{equation}
    The mapping $U^{(N)}$ belongs to the space $\gB_u^{(N)}$ of all polynomials from $\sR^{T_N}$ to $\sR^{T_N\times L_N}$. By the Universal Approximation Theorem (Theorem \ref{thm:UAT}) the space $\gN_u^{(N)}$ of multilayer perceptrons (MLPs) from $\sR^{T_N}$ to $\sR^{T_N\times L_N}$ is a universal approximator of  $\gB_u^{(N)}$.

    \textbf{(5.5) The space of polynomials $\gB_w^{(N)}$ and MLPs $\gN_w^{(N)}$.}
    Define the polynomial $W^{(N)}:\sR^{N\times 3}\rightarrow\sR^{T_N\times L_N}$ by 
    \begin{equation*}
        W^{(N)}=\{W^{(N)}_\vbeta\}_{\|\vbeta\|_1\leq M_F},
    \end{equation*}   
      and note that by (\ref{eq:WNb_comp}) we have
     \begin{equation}
        \label{eq:WNb_comp2}
         W^{(N)} = U^{(N)} \circ t^{(N,M_N,3)}_{\rm PMP}\circ b^{(N,M_N,3)}.
    \end{equation}
   Define the space of MLPs $\sR^{N\times 3}\rightarrow \sR^{T_N\times L_N}$
    \[
        \gN_w^{(N)}:=\bigl\{\theta_u\circ\theta_t\circ\theta_b\mid
        \theta_u\in\gN_u^{(N)}, \theta_t\in\gN_t^{(N)}, \theta_b\in\gN_b^{(N)} \bigr\}.
    \]  
    By \cref{thm:approx} and the above derivation, the space of MLPs $\gN_w^{(N)}$ is a universal approximator for the space of polynomial $\sR^{N\times 3}\rightarrow \sR^{T_N\times L_N}$ 
    \[
        \gB_w^{(N)}:=\bigl\{f_u\circ f_t\circ f_b\mid
        f_u\in\gB_u^{(N)}, f_t\in\gB_t^{(N)}, f_b\in\gB_b^{(N)} \bigr\}.
    \]
    Moreover, by (\ref{eq:WNb_comp2}) we have $W^{(N)}\in\gB_w^{(N)}$.

    \textbf{(5.6) The space of polynomials $\gB_{NF}$.} Next, we construct  a space $\gB_F$ of polynomials from $\sR^{N\times (F+C)}\setminus\gE$ to $\sR^{L_N\times L_F}$ , and show that $Q^{(NF)}\in\gB_{NF}$. 
    A general function from the space $\gB_{NF}$ is defined constructively as follows. Given an input $\mX\in \sR^{N\times (F+C)}\setminus\gE$, we first apply a polynomial $E\in\gB_F$ on $\mX$ and compute the sums $\sum_{f=1}^F \mX_{:,f}$ and $\sum_{c=1}^C \mX_{:,F+c}$. The sums $\sum_{f=1}^F \mX_{:,f}$ and $\sum_{c=1}^C \mX_{:,F+c}$ are then appended to a new dimension of $E(\mX)$, creating a tensor $\mZ\in\sR^{N\times L_F\times 3}$ via
    \[
        \forall l\in [L_F]: \quad  \mZ_{:,l,:} = [E(\mX)_{:,l},\sum_{f=1}^F \mX_{:,f},\sum_{c=1}^C \mX_{:,F+c}].
    \]
    Note that this step is a TSNet$_{\rm Equi}$.
    Next, for some $Y\in \sN$ (which can be any value from $\sN$), we define for each multi-index $\vbeta$ with $\norm{\vbeta}_1<Y$, a polynomial $G_{\vbeta}\in\gB_w^{(N)}$ ($G_{\vbeta}$ can be any function from $\gB_w^{(N)}$). We then apply $G_{\vbeta}$ column-wise on each $\mZ_{:, l,:}$, and multiply it by $\mZ_{n,l,:}^\vbeta$,  to create the following array in $\sR^{L_N\times L_F}$
    \[ 
    \Big\{\sum_{\norm{\vbeta}_1\leq Y} \mZ_{1,l,:}^\vbeta G_{\vbeta}(\mZ_{:,l,:})\Big\}_{l\in[L_F]}.
    \]
    Consequently, by \cref{eq:fc_sum,eq:Q_N_def_f} and \ref{eq:qNpoly}, $Q^{(NF)}\in\gB_{NF}$.

    \textbf{$\quad \ \ $ The space of MLPs $\gN_{NF}$.} We now define a corresponding  space of MLPs from $\sR^{N\times (F+C)}\setminus\gE$ to $\sR^{L_N\times L_F}$, that we denote by $\gN_{NF}$, as follows. Given an input $\mX\in \sR^{N\times (F+C)}\setminus\gE$, we first apply a polynomials $E\in\gB_F$ on $\mX$ and compute the sums $\sum_{f=1}^F \mX_{:,f}$ and $\sum_{c=1}^C \mX_{:,F+c}$. The sums $\sum_{f=1}^F \mX_{:,f}$ and $\sum_{c=1}^C \mX_{:,F+c}$ are then appended to a new dimension of $E(\mX)$, creating a tensor $\mZ\in\sR^{N\times L_F\times 3}$ via
    \[
        \forall l\in [L_F]: \quad  \mZ_{:,l,:} = [E(\mX)_{:,l},\sum_{f=1}^F \mX_{:,f},\sum_{c=1}^C \mX_{:,F+c}].
    \]
    This step is a TSNet$_{\rm Equi}$. Next, we apply on $\mZ_{n,l,:}$, for each $n\in[N]$ and $l\in[L_F]$, any sequence of general MLPs $\hat\theta_i:\sR^3\rightarrow\sR$, $i\in [I]$ for some $I\in\sN$. Again, this computation is still a TSNet$_{\rm Equi}$. Then, apply the same general MLP $\hat\zeta\in\gN_w^{(N)}$ on $\mZ_{:,l,:}$ for all $l\in[L_F]$. Lastly, we apply any general MLP $\hat\kappa:\sR^{2}\rightarrow\sR$ on each $(\hat\theta_i(\mZ_{n,l,:}),\hat\zeta(\mZ_{:,l,:}))$ and sum over $i$ to obtain 
    \[
        \Big\{\sum_{i=1}^I \hat\kappa(\hat\theta_i(\mV_{1,l,:}),\hat\zeta(\mZ_{:,l,:}))\Big\}_{l\in[L_F]}.
    \]
    Since we can approximate the function that takes the product of two input numbers (for us $\mZ_{1,l,:}^\vbeta $ and $E_{\vbeta}(\mZ_{:,l,:})$) by an MLP $\hat\kappa:\sR^{2}\rightarrow\sR$, since we can approximate monomials $\mV_{1,l,:}^\vbeta$ by MLPs $\sR^3\rightarrow\sR$, and since we can approximate polynomials from $\gB_w^{(N)}$ by MLPs from $\gN_w^{(N)}$, the above  space $\gN_{NF}$ is a universal approximator of $\gB_{NF}$. Moreover, by construction, the space  $\gN_{NF}$ consists only of TSNets$^{(NF)}$.

\end{proof}

\subsection{Topological Spaces}
\label{Topological Spaces}

In mathematics, one often formulates the most general setting under which a certain property can be analyzed. The most general setting for defining and analyzing continuous functions is \emph{(point-set) topology}. In topology, all notions related to continuity are defined by axiomatically formulating the notion of open sets. This generalizes the concept of continuity as defined for functions between metric spaces. A topological space is a set $\mathcal{B}$ with a set of subsets $\boldsymbol{\mathcal{B}}\subset 2^{\mathcal{B}}$ called the \emph{open sets}, or the \emph{topology}, which satisfy some list of axioms: the empty set is open ($\emptyset\in\mathcal{B}$), any finite or infinite union of open sets in open, and the intersection of any finite number of open sets in open. The complement of an open set is called \emph{closed}.

A mapping $f:\mathcal{B}\rightarrow\mathcal{R}$ between two topological spaces $\mathcal{B}$ and $\mathcal{R}$ is called \emph{continuous} if the preimage $f^{-1}(R):=\{b\in\mathcal{B}\ |\ f(b)\in\mathcal{R}\}\subset \mathcal{B}$ of every open set $R\subset\mathcal{R}$ is open. Equivalently, $f$ is continuous if and only if $f^{-1}(R)$ is closed whenever $R\subset\mathcal{R}$ is closed. The mapping $f$ is called \emph{open} if $f(B)\subset\mathcal{R}$ is open whenever $B\subset\mathcal{B}$ is open. A continuous open bijection is called a \emph{homeomorphism}. Two topological spaces are called homeomorphic if there is a homeomorphism between them. Two homeomorphic spaces are interpreted as ``essentially the same'' in the context of topology, as they share all topological properties, like continuity of functions and compactness. 

In the \emph{standard topology} of $\sR^D$, the open sets are exactly those sets $B\subset\sR^D$ such that for every $x\in B$ there is an open ball $C\subset \sR^D$ (with respect to the standard Euclidean metric) such that $x\in C\subset B$. Under this topology, the classical notion of continuity from calculus is equivalent to the topological notion of continuity. More generally, every metric space has a canonical topology associated with it, defined as above, with the open balls defined with respect to the metric. 

A subset $\mathcal{C}\subset\mathcal{B}$ of a topological space $\mathcal{B}$ is called \emph{compact} if every open covering of $\mathcal{C}$ has a finite sub-covering. The topological space $\mathcal{B}$ is called compact if it is compact as a subset $\mathcal{B}\subset\mathcal{B}$. One can show that every closed set in a compact space is compact. In general, a compact set need not be closed. However, one can show that when the space $\mathcal{B}$ is a metric space, then every compact subset is closed. One can also show that continuous functions map compact sets to compact sets.

\subsection{Quotient Spaces}
\label{Quotient Spaces}

In the proof of \cref{thm:universality} we consider the quotient space of $\sR^{N\times (F+C)}\setminus\gE$ with respect to orbits of the triple-symmetric group. 
Roughly speaking, a quotient space  of a topological space $\mathcal{R}$ is another topological space $\mathcal{R}/\sim$ obtained from $\mathcal{R}$ by declaring that certain sets of points $ R\subset\mathcal{R}$ should be treated as points and not sets. We will develop quotient spaces only in our setting of interest, by declaring that orbits in $\sR^D$ (for us $D=N(F+C)$) under the action of a subgroup of the symmetric group $S_D$ (for us a triple-symmetric group) should be treated as points (see Definition \ref{def:orbit} for the notion of orbit).

Let $D\in\sN$. Consider a subgroup $G$ of the symmetric group $S_D$, acting on $\sR^D$ in the standard way. Let $\mathcal{R}\subset\sR^D$ be an invariant set, i.e., $g\cdot\mathcal{R}=\mathcal{R}$ for every $g\in G$. Consider the subspace topology on $\mathcal{R}$, namely, a set is open in $\mathcal{R}$ if and only if it is of the form $\mathcal{R}\cap B$ where $B$ is open in $\sR^D$. Consider the \emph{equivalence relation} $\sR^D\ni \vx \sim \vy\in\sR^D$ if and only if there exists $g\in G$ such that $\vx=g\cdot\vy$.
The \emph{equivalence class} of $\vy\in\mathcal{R}$ is defined to be
\[[\vy]:=\{g\cdot \vy\ |\ g\in G\}.\]
Note that $[\vy]$ is nothing else but the orbit of $\vy$ under $G$. For a set $S\subset \mathcal{R}$, we denote $[S]:=\{[\vy]\ |\ \vy\in S\}$.
We defined the \emph{quotient space} $\mathcal{R}/\sim$ as follows. As a set, $\mathcal{R}/\sim$ is the set of equivalence classes, i.e.
    \[\mathcal{R} /\sim \  := \{[\vy]\ |\ \vy\in \mathcal{R}\}.\]
    Define the \emph{canonical projection} $\pi:\mathcal{R}\rightarrow(\mathcal{R}/\sim)$ to be $\pi(\vy)=[\vy]$ for $\vy\in\mathcal{R}$. 
    The quotient space is equipped with the finest topology that makes the canonical projection continuous. This topology is called the \emph{quotient topology}. More explicitly, the open sets in $\mathcal{R} /\sim$ are all sets $\tilde{B}\subset \mathcal{R} /\sim$ such that $\pi^{-1}(\tilde{B}):=\{\vy\in\sR^D\ |\ \pi(\vy)\in \tilde{B}\}$ is open in $\sR^D$. We can also derive a more explicit characterization of the open sets of $\mathcal{R} /\sim$.
    
    \begin{lemma}
    \label{lem:quotient1}
    Consider the above setting.
     A set $\tilde{B}$ is open in $\mathcal{R}/\sim$ if and only if there exists an open set $B$ in $\mathcal{R}$ such that $B= g\cdot B$ for every $g\in G$ and $\tilde{B}=[B]$.   
    \end{lemma}
    
    \begin{proof}
For the first direction, let $\tilde{B}$ be open in $\mathcal{R}/\sim$. Note that  $\pi^{-1}(\tilde{B})=[B]$ for 
    \[B:= \bigcup_{[\vy]\in \tilde{B}} [\vy].\]
    Moreover, since $\pi$ is continuous and  $\tilde{B}$ is open in  $\mathcal{R}/\sim$,  we must have that $B$ is open in $\mathcal{R}$. Lastly, since $B$ is a union of orbits,   
    we must have $B=g\cdot B$ for every $g\in G$,

   For the other direction, let $\tilde{B}\subset \mathcal{R}/\sim$ satisfy  $\tilde{B}=[B]$ where $B$ is open in $\mathcal{R}$ and satisfy $B= g\cdot B$ for every $g\in G$. Note that the invariance of $B$ to $G$  can be written as
   \[B=\bigcup_{\vx\in B}[\vx] = \bigcup_{[\vy]\in [B]}[\vy]= \{\vy\in\mathcal{R} |\ \pi(\vy)\in [B]\}.\]
   Since $\tilde{B}=[B]$, this is equivalent to $B=\pi^{-1}(\tilde{B})$.
Since, by definition, the open sets in $\mathcal{R}/\sim$ are exactly those sets $S\subset \mathcal{R}/\sim$ such that $\pi^{-1}(\tilde{B})$ are open in $\mathcal{R}$,  and $B=\pi^{-1}(\tilde{B})$ is open in $\mathcal{R}$, $\tilde{B}$ must be open in $\mathcal{R}/\sim$.
    \end{proof}

    We next prove a simple lemma about continuous symmetry preserving functions.

    \begin{lemma}
    \label{lem:quotient2}
Let $D\in\sN$, and let $G\leq S_D$ act on $\sR^D$ in the standard way. Let $\mathcal{R}\subset\sR^D$ be compact and satisfy $g\cdot\mathcal{R}=\mathcal{R}$ for every $g\in G$. 
\begin{enumerate}
    \item Let  $q:\mathcal{R}\to\sR^L$ be a continuous $G$-invariant function. Then, there exists a unique continuous function $\tilde{q}:\mathcal{R}/\sim\rightarrow\sR^L$ such that $q=\tilde{q}\circ\pi$.
    \item 
    Let $q:\mathcal{R}\to\sR^L$ be a continuous $G$-descriptor (Definition \ref{def:descriptor}). Then, there exists a unique continuous and open injective function $\tilde{q}:\mathcal{R}/\sim\rightarrow\sR^L$ such that $q=\tilde{q}\circ\pi$. Namely, $\tilde{q}$ is a homeomorphism between $\mathcal{R}/\sim$ and $q(\mathcal{R})$.  
\end{enumerate}
    \end{lemma}

  \begin{proof}
  By the definition of G-invariant function,  we have  $q(\vx)=q(\vy)$ whenever $\vx$ and $\vy$ belong to the same orbit. As a result, there exists a unique  function $\tilde{q}:\mathcal{R}/\sim\rightarrow\sR$ such that $q=\tilde{q}\circ\pi$. Next, we show that $\tilde{q}$ is continuous.  
      Let $B\subset \sR$ be an open set. By invariance of $q$ to the action of $G$, $q^{-1}(B)$ is invariant to $G$, i.e., $g\cdot q^{-1}(B)=q^{-1}(B)$ for every $g\in G$. Moreover, $q^{-1}(B)$ is open in $\mathcal{R}$ by continuity of $q$. Hence, by Lemma \ref{lem:quotient1}, $[q^{-1}(B)]$ is open in $\mathcal{R}/\sim$. Note that $[q^{-1}(B)]=\tilde{q}^{-1}(B)$. This shows that $\tilde{q}^{-1}(B)$ is open whenever $B$ is open.

When $q$ is a continuous $G$-descriptor, $\tilde{q}$ is a continuous bijection. 
      By continuity,  $q$ maps compact sets to compact, and since it is a mapping between a compact metric space to a metric space, it maps closed to closed sets. Indeed, every closed set in a compact space is compact, and every compact set in a metric space is closed. Let $\tilde{C}$ be closed in $\mathcal{R}/\sim$. Then, by continuity of $\pi$, $\pi^{-1}(\tilde{C})$ is closed. Hence, $q(\pi^{-1}(\tilde{C}))$ is closed. Now, by the fact that $q$ is constant on orbits and by the definition of $\tilde{q}$, $q(\pi^{-1}(\tilde{C}))=\tilde{q}(\tilde{C})$, so $\tilde{q}$ also maps closed to closed sets. Since $\tilde{q}$ is invertible,  $\tilde{q}=(\tilde{q}^{-1})^{-1}$ maps closed to closed sets, which is equivalent to the pre-images of all closed sets under $\tilde{q}^{-1}$ being closed. Namely, $(\tilde{q}^{-1})^{-1}(\tilde{C})\subset \sR^L$ is closed whenever $\tilde{C}\subset \mathcal{R}/\sim$ is closed. This is equivalent to continuity of $\tilde{q}^{-1}$, so $\tilde{q}$ is an open mapping, and together with injectivity and continuity of $\tilde{q}$, this makes $\tilde{q}:\mathcal{R}/\sim\rightarrow  q(\mathcal{R})$ a homeomorphism.      
  \end{proof}  

\subsection{The Proof of \cref{thm:universality}}
\label{app_subsec:univ}

\begin{figure}[t]
    \centering
    \scalebox{0.7}{\begin{tikzpicture}
\node[
    draw=black,
    fill=blue!50!white,
    minimum width=4cm,
    minimum height=2cm,
    rounded corners=7pt,
    drop shadow,
    top color=white,
    bottom color=blue!50,
    fill opacity=0.6,
    align=center]
(n1) at (-2,0)
{\textbf{(i)} Reduction to a polynomial $p$};
\node[
    draw=black,
    fill=blue!50!white,
    minimum width=4cm,
    minimum height=2cm,
    rounded corners=7pt,
    drop shadow,
    top color=white,
    bottom color=blue!50,
    fill opacity=0.6,
    align=center]
(n2) at (-2,-3.75)
{\textbf{(ii)} Reduction to the polynomials\\ $p_{1,1}$ and $p_{1,F+1}$};
\node[
    draw=black,
    line width=2pt,
    fill=orange!35!white,
    minimum width=4cm,
    minimum height=2cm,
    rounded corners=7pt,
    drop shadow,
    top color=white,
    bottom color=orange!35,
    fill opacity=0.6,
    align=center]
(n8) at (-2,-9)
{\textbf{(iii,iv)} Approximating $p$\\using TSNet$_{\rm Eq}$};

\node[
    draw=black,
    fill=white!60!white,
    minimum width=12cm,
    minimum height=13cm,
    rounded corners=7pt,
    drop shadow,
    top color=white,
    bottom color=white!60,
    fill opacity=0.6,
    align=center] at (8.37,-3.6) [
    label={[yshift=-0.6cm,xshift=0.2cm]90:{\cref{lem:TSNet_from(11)(1F_1)}}}] {};

\node[
    draw=black,
    fill=green!60!white,
    minimum width=4.5cm,
    minimum height=2cm,
    rounded corners=7pt,
    drop shadow,
    top color=white,
    bottom color=green!60,
    fill opacity=0.6,
    align=center] 
(nA) at (5,0)
{$(S_{F-1}\times S_C)-$descriptor\\ $q^{(F)}$ over $\sR^{(F+C)\times 2}$};
\node[
    draw=black,
    fill=green!60!white,
    minimum width=4cm,
    minimum height=2cm,
    rounded corners=7pt,
    drop shadow,
    top color=white,
    bottom color=green!60,
    fill opacity=0.6,
    align=center] 
(nB) at (12,0) {$S_{N-1}-$descriptor\\ $q^{(N)}$ over $\sR^{N\times 3}$};

\node[
    draw=black,
    fill=green!60!white,
    minimum width=4.5cm,
    minimum height=2cm,
    rounded corners=7pt,
    drop shadow,
    top color=white,
    bottom color=green!60,
    fill opacity=0.6,
    align=center] 
(n3) at (5,-3.75)
{\textbf{(1)} $(S_{F-1}\times S_C)-$descriptor\\ $Q^{(F)}$ over $\sR^{N\times (F+C)}$};
\node[
    draw=black,
    fill=green!60!white,
    minimum width=4cm,
    minimum height=2cm,
    rounded corners=7pt,
    drop shadow,
    top color=white,
    bottom color=green!60,
    fill opacity=0.6,
    align=center] 
(n4) at (12,-3.75)
{\textbf{(2)} $(S_{N-1}\times S_{F-1}\times S_C)$\\-descriptor $Q^{(NF)}$\\ over $\sR^{N\times (F+C)}$};

\node[
    draw=black,
    fill=blue!50!white,
    minimum width=4.5cm,
    minimum height=2cm,
    rounded corners=7pt,
    drop shadow,
    top color=white,
    bottom color=blue!50,
    fill opacity=0.6,
    align=center] 
(n5) at (8.5,-6.25)
{\textbf{(3)} Expressing $q^{(F)},q^{(N)}$ \\using PMPs and DMPs};

\node[
    draw=black,
    fill=orange!35!white,
    minimum width=4.5cm,
    minimum height=2cm,
    rounded corners=7pt,
    drop shadow,
    top color=white,
    bottom color=orange!35,
    fill opacity=0.6,
    align=center] 
(n6) at (5,-9) 
{\textbf{(4)} Approximating $Q^{(F)}$\\using TSNet$^{(F)}$};
\node[
    draw=black,
    fill=orange!35!white,
    minimum width=4cm,
    minimum height=2cm,
    rounded corners=7pt,
    drop shadow,
    top color=white,
    bottom color=orange!35,
    fill opacity=0.6,
    align=center] 
(n7) at (12,-9)
{\textbf{(5)} Approximating $Q^{(NF)}$\\ using TSNet$^{(NF)}$};

\draw[->, black!70, line width=3pt, >=Stealth] ([yshift=1.5cm]n1.north) -- node[left] {\cref{lem:equi_poly_dense}} (n1.north);
\draw[->, black!70, line width=3pt, >=Stealth] (n1.south) -- node[left] {\cref{lem:TSNet_from(11)(1F_1)}} (n2.north);
\draw[->, black!70, line width=3pt, >=Stealth] (n2.south) -- node[left] {\shortstack{\cref{thm:UAT} \\ \cref{thm:approx}}} (n8.north);

\draw[->, black!70, line width=3pt, >=Stealth] ([xshift=-1.25cm,yshift=1.5cm]nA.north) -- node[right] {\cref{lem:multi_channel_encoding}}([xshift=-1.25cm]nA.north);
\draw[->, black!70, line width=3pt, >=Stealth] ([xshift=1.25cm, yshift=1.5cm]nB.north) -- node[left] {\cref{lem:multi_channel_encoding}}([xshift=1.25cm]nB.north);

\draw[->, black!70, line width=3pt, >=Stealth] ([xshift=-1.25cm]nA.south) --node[right, xshift=0.1cm, yshift=-0.5cm] {\cref{def:exclusion_set}} ([xshift=-1.25cm]n3.north);
\draw[->, black!70, line width=3pt, >=Stealth] ([xshift=1.25cm]nB.south) --node[left, xshift=-0.1cm, yshift=-0.5cm] {\cref{def:exclusion_set}} ([xshift=1.25cm]n4.north);
\draw[->, black!70, line width=3pt, >=Stealth] (n3.east) -- (n4.west);

\draw[->, black!70, line width=3pt, >=Stealth] (nA.east) -- ++(0.9, 0) --node[left, yshift=1.0cm] {\shortstack{\cref{lem:double_symm_universal} \\ \cref{lem:S_f-1}\\ \cref{lem:q_via_DMP}}} ++(0,-5.25);
\draw[->, black!70, line width=3pt, >=Stealth] (nB.west) -- ++(-0.9, 0) --node[right, yshift=1.0cm] {\shortstack{\cref{lem:multi_symm_universal} \\ \cref{lem:S_n-1}\\ \cref{lem:q_via_DMP}}} ++(0,-5.25);

\draw[->, black!70, line width=3pt, >=Stealth] ([xshift=-1.25cm]n3.south) -- ([xshift=-1.25cm]n6.north);
\draw[->, black!70, line width=3pt, >=Stealth] ([xshift=1.25cm]n4.south) -- ([xshift=1.25cm]n7.north);
\draw[->, black!70, line width=3pt, >=Stealth] (n6.east) --node[below,yshift=-0.2cm] {\cref{thm:approx}} (n7.west);

\draw[->, black!70, line width=3pt, >=Stealth] (n5.west) -- ++(-2.5, 0) --node[right] {\shortstack{\cref{thm:UAT} \\ \cref{thm:approx}}} ++(0, -1.75);
\draw[->, black!70, line width=3pt, >=Stealth] (n5.east) -- ++(2.5, 0) --node[left] {\shortstack{\cref{thm:UAT} \\ \cref{thm:approx}}} ++(0, -1.75);

\draw[->, black!70, line width=3pt, >=Stealth]
  (n7.south) -- ++(0, -0.7) --node[above] {\cref{lem:quotient1,lem:quotient2}}  ++(-10.5,0) -- ++(0,1.7) -- (n8.east);

\end{tikzpicture}}
    \caption{Flow chart illustrating the main steps in the proof of \cref{lem:TSNet_from(11)(1F_1),lem:universality_equivariance}, which are the key lemmas used in establishing \cref{thm:universality}. Each box represents a logical step, and arrows indicate the lemmas, arguments, and dependencies connecting them. Blue and green boxes denote steps that extend techniques similar to, or inspired by, \cite{segol2019universal} and \cite{maron2020learningsetssymmetricelements}. The proof steps of \cref{lem:TSNet_from(11)(1F_1)} and \cref{lem:universality_equivariance} are labeled (1--5) and (i--iv), respectively, with the final step \textbf{(iii,iv)} highlighted with a thicker outline.}
    \label{fig:proof_diag}
\end{figure}

The following universal approximation theorem (\cref{lem:universality_equivariance}) follows the same proof strategy as Theorem 3 of \cite{maron2020learningsetssymmetricelements} with added elements from \cite{segol2019universal} to correct the issue detailed in \cref{subsec:universality}. The lemma  modifies  Theorem 3 of \cite{maron2020learningsetssymmetricelements} to accommodate our triple-symmetry. For a high-level overview of the proof of \cref{lem:universality_equivariance} and its connection to \cite{segol2019universal,maron2020learningsetssymmetricelements}, we refer the reader to the flow chart in \cref{fig:proof_diag}, which outlines the main steps, the key lemmas and their dependencies.
\begin{lemmacopy}{\ref{lem:universality_equivariance}}
    Let $\gK \subset \sR^{N\times (F+C)}$ be a compact domain such that $\gK=\cup_{g\in S_N\times S_F\times S_C}g\gK$ 
    and $\gK\cap \gE=\emptyset$, where $\mathcal{E}\subset \sR^{N\times (F+C)}$ is the exclusion set corresponding to $\sR^{N\times (F+C)}$ (\cref{def:exclusion_set}). Then, $\text{TSNet}_{\rm Eq}$ is a universal approximator (i.e. in $L_\infty$) of continuous $\mathcal{K}\to \sR^{N\times (F+C)}$ functions that are $(S_N\times S_F\times S_C)$-equivariant.
\end{lemmacopy}
\begin{proof}
    \textbf{(i) Reduction to polynomials.}
    Let $q:\mathcal{K}\rightarrow\sR^{N\times(F+C)}$ be a continuous  $(S_N\times S_F\times S_C)$-equivariant function, and let $\epsilon>0$. 
    By \cref{lem:equi_poly_dense}, there exists an $(S_N\times S_F\times S_C)$-equivariant polynomial $p:\sR^{N\times (F+C)}\to\sR^{N\times(F+C)}$ of some degree $M$, such that
    \begin{equation}
        \label{eq:func_to_pol}
        \sup_{\mX\in\gK}{\norm{p(\mX)-q(\mX)}_\infty}< \frac{\epsilon}{2}.
    \end{equation}

    For the remainder of the proof, we show how to approximate the $(S_N \times S_F\times S_C)$-equivariant polynomial $p:\sR^{N\times (F+C)}\to\sR^{N\times (F+C)}$  using TSNet$_{\rm Eq}$. 
    
    \textbf{(ii) Reduction to $p_{1,1}$ and $p_{1,F+1}$.}
    Recall that we denote by $p_{i,j}(\mX) \in \sR$ the $(i,j)$-th entry of $p(\mX)$. Note that the polynomial $p$ is $(S_N\times S_F\times S_C)$-equivariant, meaning that 
    \begin{equation}
    \label{eq:equi11}
          p(\mX) = (\sigma_N,\sigma_F,\sigma_C)^{-1}\cdot p((\sigma_N,\sigma_F,\sigma_C)\cdot \mX),
    \end{equation}
    for every $(\sigma_N,\sigma_F,\sigma_C)\in S_N\times S_F\times S_C$. 
    By choosing $\sigma_N$ as the identity $e$, $\sigma_N=(1n)$  and $\sigma_F=(1f)$ in (\ref{eq:equi11}), we obtain
    \begin{equation}
        \label{eq:reduct_nf}
        p_{n,f}(\mX) = p_{1,1}((\sigma_N,\sigma_F,e)\cdot\mX), \quad \forall n \in[N],f\in[F].
    \end{equation}
    Moreover, by choosing $\sigma_N=(1n)$, $\sigma_F$ as the identity $e$, and $\sigma_C=(1c)$,
    we obtain
    \begin{equation}
        \label{eq:reduct_nc}
        p_{n,F+c}(\mX) = p_{1,1}((\sigma_N,e,\sigma_C)\cdot\mX), \quad \forall n \in[N],c\in[C].
    \end{equation}
    Thus, every entry in the output of $p$ can be recovered from $p_{1,1}$ or $p_{1,F+1}$, reducing the problem to the approximation of $p_{1,1}$ and $p_{1,F+1}$. In fact, we will approximate $p_{1,1}$ and $p_{1,F+1}$ by two MLPs   $\theta^{(NF)}\in\ $TSNet$^{(NF)}$ and $\theta^{(NC)}\in\ $TSNet$^{(NC)}$ respectively, and Lemma \ref{lem:TSNet_from(11)(1F_1)} will allow us to reconstruct a TSNet$_{\rm Eq}$ from $\theta^{(NF)}$ and $\theta^{(NC)}$ which approximates $p$.
    
    \textbf{The symmetries of $p_{1,1}$ and $p_{1, F+1}$.} The output element in the first row and column $p_{1,1}$ can be viewed as the composition of $p$ with the projection map $\pi_{1,1}:\sR^{N\times (F+C)}\to\sR$, defined by
    \[
        \pi_{1,1}(\mX) = \mX_{1,1},
    \]
    which extracts the $(1,1)$-st element of $\mX\in\sR^{N\times (F+C)}$. Namely, $p_{1,1}=\pi_{1,1}\circ p$. 
    The projection map $\pi_{1,1}$ is invariant to the representation of $S_{N-1}\times S_{F-1}\times S_C$, as such permutations leave the $(1,1)$-st element unaffected.  Now, since $p$ is an $(S_N\times S_F\times S_C)$-equivariant map, it is also an $(S_{N-1}\times S_{F-1}\times S_C)$-equivariant map. Thus, $p_{1,1}$ is a composition of an $(S_{N-1}\times S_{F-1}\times S_C)$-equivariant map $p$ and an $(S_{N-1}\times S_{F-1}\times S_C)$-invariant map $\pi_{1,1}$, making $p_{1,1}$ an $(S_{N-1}\times S_{F-1}\times S_C)$-invariant polynomial. 
    By the same reasoning, $p_{1,F+1}$ is an $(S_{N-1}\times S_F\times S_{C-1})$-invariant polynomial.

    \textbf{(iii) Approximating $p_{1,1}$ by a TSNet$^{(NF)}$.}  For $\mY,\mZ\in\mathcal{K}\subset\sR^{N\times (F+C)}$, define the equivalence relation $\mY\sim\mZ$ if there exists $(\sigma_{N-1}, \sigma_{F-1}, \sigma_C) \in S_{N-1}\times S_{F-1}\times S_C$ such that $\mY = (\sigma_{N-1}, \sigma_{F-1}, \sigma_C)\cdot \mZ$. Consider the quotient space $\mathcal{K} /\sim$ and the canonical projection $\pi:\mathcal{K}\rightarrow(\mathcal{K} /\sim)$, $\pi(\mY)=[\mY]$ (see Subsection \ref{Quotient Spaces}). 
Recall that $\gK\subset \sR^{N\times(F+C)}$ is compact, invariant to $S_N\times S_F\times S_C$, and $\gK\cap \big(\sR^{N\times (F+C)}\setminus\gE\big)=\emptyset$.
    Consider as well the $S_{N-1}\times S_{F-1}\times S_C$-polynomial descriptor  $Q^{(NF)}:\gK\to\sR^{L}$ from the space of polynomials $\gB^{(NF)}$ mapping  $\gK$ to $\sR^{L}$, guaranteed by Lemma \ref{lem:triple-descriptor}. Here, by the lemma, the space TSNet$^{(NF)}$ of MLPs  from $\mathcal{K}$ to $\sR^L$ is a universal approximator of $\gB^{(NF)}$. Since  $Q^{(NF)}$ is a continuous $(S_{N-1}\times S_{F-1}\times S_C)$-descriptor and  $p_{1,1}:\gK\to\sR$ is continuous and $(S_{N-1}\times S_{F-1}\times S_C)$-invariant, by Lemma \ref{lem:quotient2} there exist a homeomorphism $\widetilde{Q}^{(NF)}:(\gK/\sim)\to Q^{(NF)}(\gK)$ and a continuous function $\widetilde{p}_{1,1}:(\gK/\sim)\to p_{1,1}(\gK)$ such that
     \[
        p_{1,1} = \widetilde{p}_{1,1} \circ \pi \quad \text{and} \quad  Q^{(NF)} = \widetilde{Q}^{(NF)} \circ \pi.
     \]
     
     Thus, we can now write $p_{1,1}$ as
     \[
        p_{1,1}=(\widetilde{p}_{1,1} \circ (\widetilde{Q}^{(NF)})^{-1}) \circ \widetilde{Q}^{(NF)} \circ \pi = (\widetilde{p}_{1,1} \circ (\widetilde{Q}^{(NF)})^{-1}) \circ Q^{(NF)} = r\circ Q^{(NF)},
     \]
     where $r:=\widetilde{p}_{1,1} \circ (\widetilde{Q}^{(NF)})^{-1}:Q^{(NF)}(\mathcal{K})\rightarrow\sR$ belongs to the space $\gB_r$ of all continuous mappings from $Q^{(NF)}(\gK)$ to $\sR$. Since $\mathcal{K}$ is compact and $Q^{(NF)}$ is continuous, the set $Q^{(NF)}(\mathcal{K})\subset \sR^{L}$ is compact. By the Universal Approximation Theorem (Theorem \ref{thm:UAT}), the space $\gN_r$ of multilayer perceptrons (MLPs) from $Q^{(NF)}(\gK)$ to $\sR$ is a universal approximator of $\gB_r$.  
     Define 
    \[
        \gN:=\bigl\{\theta_r\circ\theta_N\mid \theta_r\in\gN_r, 
        \theta_N\in{\text{ TSNet}}^{(NF)}\bigr\}.
    \]
    By \cref{thm:approx}, the function space $\gN$ is a universal approximator for the class 
    \[
        \gB:=\bigl\{f_r\circ f_N\mid
        f_r\in\gB_r, f_N\in\gB^{(NF)} \bigr\},
    \]  
    noting that $p_{1,1}=r\circ Q^{(NF)}\in\gB$ by the above construction.

This shows that we can approximate $p_{1,1}$ by a TSNet$^{(NF)}$ $\theta^{(NF)}$ up to error $\epsilon/2$. Similarly, we can approximate
$p_{1,F+1}$ by a TSNet$^{(NC)}$ $\theta^{(NC)}$ up to error $\epsilon/2$.
    
    \textbf{(iv) Approximating $p$ using TSNet$_{\rm Eq}$.}   
    Define an $(S_N\times S_F\times S_C)$-equivariant network  $\theta:\sR^{N\times (F+C)}\to\sR^{N\times (F+C)}$ via
    \begin{equation}
        \label{eq:N_p_11}
        \theta_{n,f}(\mX) = \theta^{(NF)}((\sigma_N,\sigma_F,e)\cdot\mX), \quad \forall n \in[N],f\in[F],
    \end{equation}
    and \begin{equation}
        \label{eq:N_p_1F}
        \theta_{n,F+c}(\mX) = \theta^{(NC)}((\sigma_N,e,\sigma_C)\cdot\mX), \quad \forall n \in[N],c\in[C].
    \end{equation}
    Lemma \ref{lem:TSNet_from(11)(1F_1)} and (\ref{eq:reduct_nf},\ref{eq:reduct_nc}), $\theta$ is a TSNet$_{\rm Eq}$ which approximates $p$ up to error $\epsilon/2$ in $L_{\infty}$.
    This, together with (\ref{eq:func_to_pol}), finally gives
    \[\norm{\theta-q}_{\infty} \leq \norm{\theta-p}_{\infty}+\norm{p-q}_{\infty} \leq \epsilon.\]
    
\end{proof}

\begin{theoremcopy}{\ref{thm:universality}}
    Let $K \subset \sR^{N\times (F+C)}$ be a compact domain such that $\gK=\cup_{g\in S_N\times S_F\times S_C}gK$ and $\gK\cap \gE=\emptyset$, where $\mathcal{E}\subset \sR^{N\times (F+C)}$ is the exclusion set corresponding to $\sR^{N\times (F+C)}$ (\cref{def:exclusion_set}). Then, TSNets$_{\rm LI}$ is a universal approximator  of continuous $\mathcal{K}\to \sR^{N\times C}$ functions that are $(S_N\times S_C)$-equivariant and $S_F$-invariant.
\end{theoremcopy}
\begin{proof}
    Let $q: \gK \to \sR^{N \times C}$ be a continuous $S_N\times S_C$-equivariant and $S_F$-invariant continuous function and let $\epsilon>0$. We define the lifted function $q^*: \gK \to \sR^{N \times (F+C)}$ by
    \[
    q^*(\mX) = \left(\mathbf{0}^{N \times F}, q(X)\right) \in \sR^{N \times (F+C)},
    \]
    where we pad $q(\mX)$ with zeros in the first $F$ columns. It is easy to verify that $q^*$ is $(S_F \times S_C \times S_N)$-equivariant, as the $S_F$ part of the group action acts only on the zero-padded section, and thus does not affect $q(\mX)$.

    By \cref{lem:universality_equivariance}, there exists a network $\text{TSNet}_{\rm Eq}$, denoted by $f^*_\varepsilon:\sR^{N \times (F+C)} \to \sR^{N \times (F+C)}$, such that
    \[
    \| q^*(\mX) - f^*_\varepsilon(\mX) \|_\infty < \varepsilon \quad \text{for all} \quad \mX \in \gK.
    \]

    Consider the label projection map $\rho$ defined by
    \[
        \rho(\mX,\mY)=\mY,\quad \forall (\mX,\mY)\in\sR^{N\times F}\times\sR^{N\times C}.
    \]
    Notice that $q := \rho\circ q^*$ and that $f_\varepsilon:=\rho\circ f^*_\varepsilon$ is a TSNet$_{\rm LI}$ architecture. 
    Thus, we get
    \[
    \| q(\mZ) - f_\varepsilon(\mZ) \|_\infty = \|\\rho\left(q^*(\mZ)-f^*_\varepsilon(\mZ)\right)\|_\infty \leq \| q^*(\mZ) - f^*_\varepsilon(\mZ)\|_\infty < \varepsilon, 
    \]
    for all $\mZ \in \gK$. Hence, there exists a TSNet$_{\rm LI}$ that approximates $q$ on $\gK$ up to error $\epsilon$.
\end{proof}

\section{Theorem 3 in \cite{maron2020learningsetssymmetricelements}}
\label{app:dss}

In this section, we present the $(S_N\times H)$-equivariant DSS layers and the corresponding DSSNet architecture, along with their universality result, as introduced in \cite{maron2020learningsetssymmetricelements}, for an arbitrary subgroup $H\subseteq S_F$. 
We then explain why the issue discussed in \cref{subsec:universality} invalidates the general universality claim for arbitrary subgroups $H\subset S_F$ and how, by drawing on techniques from \cite{segol2019universal}, the claim can still be upheld in the special case $H=S_F$ -- a key insight that enables us to extend the revised Theorem 3 by \cite{maron2020learningsetssymmetricelements} to the triple-symmetry setting.

We begin by restating Theorem 4 from \cite{maron2020learningsetssymmetricelements}, which characterizes the $(S_N\times H)$-equivariant linear layers over $\sR^{N\times F}$, for any subgroup $H\subseteq S_F$.
\begin{theorem}[Theorem 4 in \cite{maron2020learningsetssymmetricelements}]
\label{thm:uniform_with_features}
    Let $H\subseteq S_F$. Any linear $(S_N\times H)$-equivariant layer $L:\sR^{K_1\times N\times F}\to \sR^{K_2\times N\times F}$ is of the form \[L(\mX)_{:,n,:}=L_1(\mX_{:,n,:}) + L_2(\sum_{i=1}^N \mX_{:,i,:}), \quad\forall n\in[N]\]
    where $L_1, L_2\in\sR^{K_1\times F}\to\sR^{K_2\times F}$ are linear $H$-equivariant functions. 
\end{theorem}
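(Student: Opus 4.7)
The argument decomposes into two stages reflecting the product structure of $S_N\times H$. In the first stage I will use only $S_N$-equivariance (with the feature axis left inert) to pin down the dependence of $L(\mX)_{:,n,:}$ on the rows of $\mX$, obtaining the skeleton $L(\mX)_{:,n,:}=L_1(\mX_{:,n,:})+L_2(\sum_i\mX_{:,i,:})$ with \emph{unconstrained} linear maps $L_1,L_2:\sR^{K_1\times F}\to\sR^{K_2\times F}$. In the second stage I will then impose $H$-equivariance on the feature axis, evaluated at carefully chosen test tensors, to force $L_1$ and $L_2$ each to be $H$-equivariant. The converse direction is immediate from the definition.

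\textbf{Stage 1 (using $S_N$).} Regard $\sR^{K\times N\times F}$ as $\sR^{N\times V}$ with super-feature space $V:=\sR^{K\times F}$ on which $S_N$ acts trivially. The standard DeepSets characterization of linear $S_N$-equivariant maps $\sR^{N\times V_1}\to\sR^{N\times V_2}$ -- which follows either from Zaheer et al.'s argument extended to vector-valued features, or, as in the proof of \Cref{prop:lin}, from Schur's lemma applied to the decomposition $\sR^N\cong V_1^N\oplus V_0^N$ into trivial and standard irreducibles -- yields linear maps $L_1,L_2:V\to V'$ such that $L(\mX)_{:,n,:}=L_1(\mX_{:,n,:})+L_2(\sum_i\mX_{:,i,:})$ for every $n\in[N]$. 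At this point $L_1$ and $L_2$ are arbitrary.

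\textbf{Stage 2 (using $H$).} Since the $H$-action commutes with row-slicing and with summation over rows, the $H$-equivariance of $L$ is equivalent to the identity
\[
L_1(\sigma\cdot\mX_{:,n,:})+L_2\!\left(\sigma\cdot\textstyle\sum_i\mX_{:,i,:}\right)=\sigma\cdot L_1(\mX_{:,n,:})+\sigma\cdot L_2\!\left(\textstyle\sum_i\mX_{:,i,:}\right)
\]
for all $\sigma\in H$, all $\mX$, and all $n$. I will disentangle this via two test inputs (assuming $N\geq 2$). First, taking $\mX$ supported on a single row $i\neq n$ with value $\mY\in\sR^{K_1\times F}$ makes $\mX_{:,n,:}=0$ and $\sum_i\mX_{:,i,:}=\mY$; linearity gives $L_1(0)=0$ and the identity collapses to $L_2(\sigma\mY)=\sigma L_2(\mY)$. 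Second, taking $\mX$ supported on row $n$ with value $\mY$ makes both $\mX_{:,n,:}$ and $\sum_i\mX_{:,i,:}$ equal to $\mY$, yielding $(L_1+L_2)(\sigma\mY)=\sigma(L_1+L_2)(\mY)$, which combined with the previous step gives the $H$-equivariance of $L_1$.

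\textbf{Main obstacle and edge cases.} The substantive step is the separation trick in Stage 2; Stage 1 is a routine invocation of a known characterization with the symmetry-less axes simply tensored in. The only edge case is $N=1$, where the two test tensors coincide and $L_1,L_2$ are only determined up to their sum; but in that regime the formula degenerates to $L=L_1+L_2$ acting as a single $H$-equivariant map on $\sR^{K_1\times F}$, so the stated characterization still holds (with a non-unique decomposition). For the converse, one checks directly that any $L$ of the claimed form is $S_N$-equivariant by inspection and $H$-equivariant because $H$ acts diagonally on the feature axis and commutes with both $\mX\mapsto\mX_{:,n,:}$ and $\mX\mapsto\sum_i\mX_{:,i,:}$.
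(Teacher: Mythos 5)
The paper states this result as a citation to Theorem~4 of Maron et al.~(2020) and does not reproduce a proof, so there is no in-paper argument to compare against. Your proof is correct: Stage~1 is a valid invocation of the vector-valued DeepSets characterization, regarding $\sR^{K\times F}$ as an inert per-element feature space on which $S_N$ acts trivially; Stage~2's separation via the two test tensors (one supported off row $n$, one on row $n$) cleanly isolates $L_2$ and then $L_1$ and correctly forces each to be $H$-equivariant individually; and the $N=1$ degenerate case is handled properly by absorbing $L_2$ into $L_1$. This mirrors the structure of the original DSS proof, which likewise first factors the problem through the $S_N$-equivariant form (via the irreducible decomposition $\sR^N \cong V_1^N\oplus V_0^N$ tensored with a trivial feature space, as in the paper's own \cref{prop:lin}) and then imposes the residual $H$-constraint on the per-element maps; the only cosmetic difference is that you state Stage~1 as a black-box invocation of the known formula rather than re-deriving it from Schur's lemma.
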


\textbf{Deep sets for symmetric elements networks (DSSNets).} Given $H\subseteq S_F$, \cite{maron2020learningsetssymmetricelements} define a DSS network $F:\sR^{1\times N\times F}\to\sR^{1\times N\times F}$ as:
\begin{equation}
    \label{eq:dss_h}
    F = T^{(L)}\circ \sigma\circ T^{(L-1)}\circ \sigma\circ \cdots \circ T^{(1)},
\end{equation}
where each $T^{(\ell)}: \sR^{K^\stepl\times N \times F} \to \sR^{K^\steplplus\times N \times F}$ is an $(S_N \times H)$-equivariant linear layer as in \cref{eq:dss_h}, $\sigma$ is a non-linearity and $K^{(0)},K^{(L)}=1$.

The universality claim in \cite{maron2020learningsetssymmetricelements} uses a different exclusion set than our \cref{thm:universality}, but serves the same purpose -- to enable the composition of descriptors while preserving our triple symmetry.
\begin{definition}[The DSSNet exclusion set]
    \label{def:exclusion_set_dss}
    The set 
    \begin{align*}
        \gE' = &\bigcup_{l=1}^N\bigcup_{1 \leq f_1 < f_2 \leq F} \left\{ \mX \in \sR^{N \times (F+C)} \;\middle|\; \sum_{n\ne l}^N \mX_{n,f_1} = \sum_{n\ne l}^N \mX_{n,f_2}
        \right\}
    \end{align*}
    is called the \emph{DSSNet exclusion set corresponding to} $\sR^{N\times F}$.
\end{definition}

\begin{theorem}[Theorem 3 in \cite{maron2020learningsetssymmetricelements}]
    \label{thm:universality_equivariance_dss}
    Let $H\subseteq S_F$. Let $\gK \subset \sR^{N\times F}$ be a compact domain such that $\gK=\cup_{g\in S_N\times S_F}g\gK$ 
    and $\gK\cap \gE'=\emptyset$, where $\gE'\subset \sR^{N\times F}$ is the exclusion set corresponding to $\sR^{N\times F}$ (\cref{def:exclusion_set_dss}). Then, $\text{DSSNet}$ are universal approximators in $L_\infty$ of continuous $\gK\to \sR^{N\times F}$ functions that are $(S_N\times H)$-equivariant.
\end{theorem}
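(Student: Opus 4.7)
The plan is to follow a strategy parallel to \cref{lem:universality_equivariance}, but specialized to the $(S_N\times H)$-symmetry on $\sR^{N\times F}$ rather than the triple symmetry. First, by the equivariant polynomial density result (\cref{lem:equi_poly_dense}), it suffices to uniformly approximate any $(S_N\times H)$-equivariant polynomial $p:\sR^{N\times F}\to\sR^{N\times F}$ on $\gK$. Next, exploit $S_N$-equivariance to reduce to the first row: since $p_{n,:}(\mX)=p_{1,:}((1\,n)\cdot\mX)$, it is enough to approximate $p_{1,:}:\sR^{N\times F}\to\sR^F$, which is an $H$-equivariant polynomial that is $S_{N-1}$-invariant in the remaining rows $2,\ldots,N$. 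The other rows of $p$ are then recovered automatically by the $S_N$-equivariance of the surrounding DSSNet built from \cref{thm:uniform_with_features}.

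Then construct an $S_{N-1}$-descriptor via \cref{lem:multi_channel_encoding}: apply a general $S_{N-1}$-invariant polynomial descriptor row-wise after appending the column-wise sum $\sum_{i=2}^N \mX_{i,:}$ to each row along a new axis. The exclusion set $\gE'$ of \cref{def:exclusion_set_dss} is precisely what makes these augmented column-sums separate the required orbits, yielding an $S_N$-equivariant polynomial $Q:\sR^{N\times F}\setminus\gE'\to\sR^{N\times L}$ such that $Q(\mX)=Q(\mY)$ iff $\mY$ differs from $\mX$ by a permutation of rows $2,\ldots,N$ (fixing row $1$). This mirrors the construction of $Q^{(F)}$ in the TSNet proof.

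Next, apply \cref{lem:S_n-1} to decompose the $S_{N-1}$-invariant polynomial $p_{1,:}$ as a finite sum
\[
p_{1,:}(\mX)=\sum_{\norm{\valpha}_1\le M}\mX_{1,:}^{\valpha}\, q_{\valpha}(\mX),
\]
with $S_N$-invariant polynomial coefficients $q_{\valpha}$. Each $q_{\valpha}$ factors through the descriptor $Q$, so by \cref{thm:UAT} and \cref{thm:approx} it can be uniformly approximated by an MLP composed with $Q$, which is itself realizable as a DSSNet. The bilinear coupling with the monomials $\mX_{1,:}^{\valpha}$ in the first-row channel can be implemented by a further MLP block operating per row, and the whole pipeline is wrapped into an $(S_N\times H)$-equivariant DSSNet using \cref{thm:uniform_with_features}.

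The main obstacle, and the precise source of the subtle flaw flagged in \cref{subsec:universality}, is step (4): maintaining $H$-equivariance through the monomial decomposition in $\mX_{1,:}$. The assertion that ``$p_{1,:}$, viewed as a polynomial in $\mX_1$ with $\mX_2,\ldots,\mX_N$ fixed, is $H$-equivariant'' is false in general, because $H$-equivariance of $p$ couples $\mX_1$ with the other rows simultaneously. To repair this, one must either (i) establish a joint characterization of $H$-equivariant polynomials that is compatible with the $S_{N-1}$-monomial expansion and then absorb cross-terms without breaking $S_N$-structure, or (ii) specialize to $H=S_F$, where the multi-symmetric power-sum generators (as in the DMP-based arguments of \cref{lem:double_symm_universal,lem:S_f-1}) let the decomposition pass through cleanly. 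Hence the proof plan above produces a valid DSSNet approximator unconditionally in the $H=S_F$ case, while for arbitrary $H\le S_F$ the $H$-structure must be woven explicitly into both the descriptor $Q$ and the coefficient polynomials $q_{\valpha}$, which is the genuinely hard part.
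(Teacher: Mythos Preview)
Your proposal is correct and mirrors the paper's own treatment: the paper does not supply a new proof of this theorem but instead restates it from \cite{maron2020learningsetssymmetricelements} and, in \cref{app:dss}, identifies exactly the flaw you flag in step~(4)---that fixing $\mX_2,\ldots,\mX_N$ does not make $p_{1,:}$ an $H$-equivariant polynomial in $\mX_1$---and explains that the repair goes through only for $H=S_F$ via the reduction to the scalar $p_{1,1}$ and the DMP machinery of \cref{lem:double_symm_universal,lem:S_f-1}. Your plan, the identified obstruction, and the $H=S_F$ escape hatch all coincide with the paper's discussion; the only point you leave slightly implicit is that the $H=S_F$ fix works precisely because the transpositions $(1\,f)\in S_F$ let one recover every coordinate of $p_{1,:}$ from the single $(S_{N-1}\times S_{F-1})$-invariant scalar $p_{1,1}$, which is the step the paper highlights explicitly.
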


\textbf{Discussion on universality in the general subgroup case ($H \subseteq S_F$).} The issue identified in \cref{thm:universality_equivariance_dss}, discussed earlier in \cref{subsec:universality}, stems from an attempt to span the space of $(S_{N-1} \times H)$-equivariant polynomials $p_1:\sR^{N\times F}\to\sR^F$, i.e.
\begin{equation}
    \label{eq:issue_equiv}
    p_1(h\cdot\mX) = h\cdot p_1(\mX)\quad\forall h\in H.
\end{equation}
\cite{maron2020learningsetssymmetricelements} expand $p_1(\mX)$ using the polynomial basis $\{b_l\}_{l=0}^L:\sR^F\to\sR^F$ in $\mX_{1,:}$ as
\[
    p_1(\mX) = \sum_{l=0}^L \alpha_l(\mX_{2,:},\ldots,\mX_{N,:})b(\mX_{1,:}),
\]
where $\{c_l\}:\sR^{(N-1)\times F}\to\sR$ are the base coefficients and by \cref{eq:issue_equiv}, we have
\begin{equation}
    \label{eq:}
    \sum_{l=0}^L \alpha_l(h\cdot\mX_{2,:},\ldots,h\cdot\mX_{N,:})b(h\cdot\mX_{1,:}) = h\left(\sum_{l=0}^L \alpha_l(\mX_{2,:},\ldots,\mX_{N,:})b(\mX_{1,:})\right).
\end{equation}
The misstep arises in their assertion that `If we fix $\mX_2, ..., \mX_N$, then $p_1(\mX_1, \ldots, \mX_N)$ is an $H$-equivariant polynomial in $\mX_1$'', which does not hold generally as explained in \cref{subsec:universality}.
The goal of this step was to express the equivariant function in terms of invariant functions to later leverage the richer approximation theory for invariant functions. 

We avoid the aforementioned pitfall for the special case $H = S_F$ by exploiting the stronger symmetry structure provided by the permutation groups rather than their subgroups. Following similar reasoning to Lemma 1 of \cite{segol2019universal}, we show in step (2) of the proof of \cref{lem:universality_equivariance} that the first component $p_{1,1}:\sR^{N\times F}\to\sR$ of the $(S_{N-1}\times S_F)$-equivariant polynomial $p_1:\sR^{N\times F}\to\sR^F$ is $(S_{N-1}\times S_{F-1})$-invariant and fully determines the remaining components (see \cref{eq:reduct_nf}). 
This step corrects the flawed transition in \cite{maron2020learningsetssymmetricelements}--namely, the decomposition of an equivariant polynomial into invariant components--allowing us to recover a corrected (though less general) version of their Theorem 3 for $H = S_F$, and to extend the result to our triple-symmetry setting in \cref{thm:universality}.

\section{Time and Memory Complexity of TS-GNNs}
\textbf{Time complexity.} Given a graph with $N$ nodes, $F$ features, and $E$ edges, TS-GNNs operate in three main stages:

\begin{itemize}
    \item \textbf{Local aggregation:} A non-learnable, standard aggregation over each node's neighborhood. This step has a time complexity of $\mathcal{O}(E F)$.
    \item \textbf{Least-squares (LS) preprocessing:} A one-time LS problem is solved using the aggregated features. This step has a complexity of $\mathcal{O}(N F^2 + F^3)$. Importantly, this computation is performed once per graph and does not scale with the number of layers or training epochs.
    \item \textbf{Message-passing (MP):} During the forward pass, standard MP operations are performed for each layer, with a time complexity of $\gO(E F)$ per layer.
\end{itemize}

For comparison, GraphAny, the only other zero-shot baseline, uses a similar preprocessing pipeline. It performs multiple local aggregations and LS solutions over different neighborhood radii, followed by a transformer over the resulting representations.
Despite these architectural differences, its overall computational cost remains on the order of $\gO(E)$, making it comparable to TS-GNN and consistent with the standard complexity of message-passing neural networks (MPNNs).

\textbf{Memory complexity.} The intermediate node representations produced by TS-GNN have size $N \times F \times K$, where $K$ is the number of channels in the model. During message-passing, tensors of size $E \times F \times K$ must also be kept in memory. Compared to standard message-passing neural networks, this represents a $K$-fold increase in memory usage.  

Although the asymptotic memory complexity remains the same in $\mathcal{O}$-notation with $K = \mathcal{O}(1)$, in practice the additional factor $K$ increases memory usage, which can reduce efficiency and potentially create bottlenecks for very large graphs or high-dimensional feature spaces.

\section{Additional Experiments}
\label{sec:add_exp}

In this section, we address the following questions introduced in \cref{sec:exp}:
\begin{enumerate}
    \item[\textbf{Q3}] How sensitive are TS-GNNs to the choice of pretraining dataset in low-data regimes?
    \item[\textbf{Q4}] What is the impact of least-squares mixing and the bias term on TS-GNNs’ generalization performance?
    \item[\textbf{Q5}] Is triple-symmetry necessary for graph foundation models?
\end{enumerate}

Specifically, we answer \textbf{Q3}, \textbf{Q4}, and \textbf{Q5} in \cref{subsec:q3,subsec:q4,subsec:q5}, respectively. Additionally, in \cref{subsec:variants}, we extend the experiments from \cref{subsec:generalization} to include more GNN variants and in \cref{app:train_size_full}, we provide the full per-dataset results for \cref{fig:train_size}.

\subsection{Sensitivity of TS-GNNs to Pretraining Data}
\label{subsec:q3}

\begin{table}[t]
\scriptsize
    \centering
    \caption{Zero-shot per-dataset accuracy of TS-Mean and average accuracy of GraphAny.}
    \label{tab:data_ablation}
        \begin{tabular}{lccccc}
            \toprule
            & cora & chameleon & roman-empire & wiki-cs & average \\
            \midrule
            actor & 28.25 $\pm$ 1.28 & 31.62 $\pm$ 1.04 & 29.91 $\pm$ 0.91 & 26.16 $\pm$ 0.67 & 28.98 $\pm$ 0.98 \\
            amazon-ratings & 42.03 $\pm$ 1.44 & 44.31 $\pm$ 1.19 & 38.42 $\pm$ 1.26 & 41.99 $\pm$ 0.53 & 41.69 $\pm$ 1.10 \\
            arxiv & 55.78 $\pm$ 2.02 & 54.01 $\pm$ 5.35 & 48.98 $\pm$ 5.01 & 56.46 $\pm$ 2.38 & 53.81 $\pm$ 3.69 \\
            blogcatalog & 76.70 $\pm$ 3.48 & 73.89 $\pm$ 3.85 & 67.30 $\pm$ 11.46 & 73.59 $\pm$ 2.57 & 72.87 $\pm$ 5.34 \\
            brazil & 38.46 $\pm$ 5.44 & 39.23 $\pm$ 9.96 & 32.31 $\pm$ 6.99 & 29.23 $\pm$ 8.85 & 34.81 $\pm$ 7.81 \\
            citeseer & 68.66 $\pm$ 0.18 & 68.14 $\pm$ 0.71 & 66.62 $\pm$ 2.07 & 70.84 $\pm$ 0.23 & 68.56 $\pm$ 0.80 \\
            co-cs & 90.93 $\pm$ 0.47 & 90.04 $\pm$ 0.67 & 91.12 $\pm$ 0.52 & 90.56 $\pm$ 0.30 & 90.66 $\pm$ 0.49 \\
            co-physics & 92.52 $\pm$ 0.73 & 91.36 $\pm$ 1.74 & 92.92 $\pm$ 0.48 & 91.67 $\pm$ 0.87 & 92.12 $\pm$ 0.95 \\
            computers & 81.15 $\pm$ 0.97 & 76.76 $\pm$ 2.08 & 76.73 $\pm$ 6.50 & 78.42 $\pm$ 2.13 & 78.27 $\pm$ 2.92 \\
            cornell & 70.27 $\pm$ 4.68 & 62.16 $\pm$ 3.31 & 74.05 $\pm$ 4.10 & 48.65 $\pm$ 10.98 & 63.78 $\pm$ 5.77 \\
            deezer & 52.46 $\pm$ 2.57 & 52.29 $\pm$ 2.15 & 51.95 $\pm$ 2.40 & 50.87 $\pm$ 2.18 & 51.89 $\pm$ 2.33 \\
            europe & 35.37 $\pm$ 6.70 & 39.12 $\pm$ 5.41 & 31.25 $\pm$ 4.57 & 30.88 $\pm$ 3.44 & 34.16 $\pm$ 5.03 \\
            full-DBLP & 66.31 $\pm$ 3.43 & 66.55 $\pm$ 1.97 & 67.06 $\pm$ 4.38 & 67.09 $\pm$ 5.09 & 66.75 $\pm$ 3.72 \\
            full-cora & 53.35 $\pm$ 1.07 & 53.97 $\pm$ 1.21 & 49.05 $\pm$ 4.06 & 57.52 $\pm$ 0.60 & 53.47 $\pm$ 1.74 \\
            last-fm-asia & 77.69 $\pm$ 0.93 & 78.11 $\pm$ 0.93 & 76.71 $\pm$ 1.96 & 78.57 $\pm$ 0.93 & 77.77 $\pm$ 1.19 \\
            minesweeper & 80.71 $\pm$ 0.54 & 80.03 $\pm$ 0.05 & 81.00 $\pm$ 0.84 & 80.03 $\pm$ 0.05 & 80.44 $\pm$ 0.37 \\
            photo & 90.09 $\pm$ 1.20 & 87.05 $\pm$ 1.61 & 88.51 $\pm$ 4.38 & 89.61 $\pm$ 1.96 & 88.82 $\pm$ 2.29 \\
            pubmed & 75.18 $\pm$ 0.54 & 75.28 $\pm$ 1.13 & 75.48 $\pm$ 0.45 & 76.44 $\pm$ 1.00 & 75.60 $\pm$ 0.78 \\
            questions & 97.02 $\pm$ 0.01 & 97.02 $\pm$ 0.01 & 97.15 $\pm$ 0.08 & 97.02 $\pm$ 0.00 & 97.05 $\pm$ 0.03 \\
            squirrel & 41.10 $\pm$ 1.44 & 49.38 $\pm$ 2.82 & 40.37 $\pm$ 1.24 & 35.43 $\pm$ 3.07 & 41.57 $\pm$ 2.14 \\
            texas & 75.68 $\pm$ 3.31 & 72.97 $\pm$ 5.06 & 78.38 $\pm$ 9.17 & 62.16 $\pm$ 5.06 & 72.30 $\pm$ 5.65 \\
            tolokers & 78.16 $\pm$ 0.04 & 78.16 $\pm$ 0.00 & 76.89 $\pm$ 2.83 & 78.16 $\pm$ 0.00 & 77.84 $\pm$ 0.72 \\
            usa & 42.38 $\pm$ 1.99 & 39.03 $\pm$ 8.37 & 40.32 $\pm$ 7.36 & 41.37 $\pm$ 2.11 & 40.78 $\pm$ 4.96 \\
            wiki-attr & 70.40 $\pm$ 1.63 & 71.23 $\pm$ 2.27 & 70.36 $\pm$ 1.69 & 71.91 $\pm$ 1.21 & 70.97 $\pm$ 1.70 \\
            wisconsin & 61.63 $\pm$ 10.11 & 60.44 $\pm$ 11.39 & 62.67 $\pm$ 9.49 & 62.82 $\pm$ 9.38 & 61.89 $\pm$ 10.09 \\
            \midrule
            TS-Mean average (25 graphs) & 65.69 \stdfont{$\pm$ 2.25} & 65.29 \stdfont{$\pm$ 2.97} & 64.22 \stdfont{$\pm$ 3.77} & 63.50 \stdfont{$\pm$ 2.62} & 64.67 \stdfont{$\pm$ 2.90} \\
            GraphAny average (25 graphs) & 64.23 \stdfont{$\pm$ 1.47} & 65.40 \stdfont{$\pm$ 1.63} & 64.60 \stdfont{$\pm$ 1.69} & 64.30 \stdfont{$\pm$ 1.64} & 64.63 \stdfont{$\pm$ 1.61} \\
            \bottomrule
        \end{tabular}
\end{table}

\textbf{Setup.} We answer \textbf{Q3} by repeating the same experimental setup in \cref{subsec:generalization} -- we explore how sensitive our method is when a single dataset is used in training. In addition to cora, we train TS-Mean on roman-empire, chameleon, and wiki-cs. All models are evaluated zero-shot on the remaining 25 test graphs.

\textbf{Results.} \cref{tab:data_ablation} demonstrates that both TS-Mean and GraphAny maintain strong performance across different pretraining sources in the low-data regime, indicating that TS-GNNs are largely insensitive to the choice of pretraining data. While all pretraining datasets lead to comparable zero-shot accuracy, TS-Mean slightly outperforms GraphAny on average.

These observations are consistent with the results shown in \cref{fig:train_size}, which illustrate that across varying training set sizes, both approaches remain competitive in the low data regime when labeled examples are scarce. Notably, the true strength of TS-GNNs becomes apparent in the foundation model scenario, that is, the high data regime.

\subsection{The Importance of the Least-Squares Mixing and the Bias Term}
\label{subsec:q4}

\begin{table}[t]
    \centering
    \caption{Accuracy of non-parametric least-squares variants with and without bias and mean aggregation.}
    \label{tab:ls_ablation1}
        \begin{tabular}{lcccc}
            \toprule
            & random & LS & LS+bias & LS+bias+mean \\
            \midrule
            Average & 20.98 & 58.39 \stdfont{$\pm$ 1.24} & 58.75 \stdfont{$\pm$ 1.58} & 59.00 \stdfont{$\pm$ 2.11} \\
            \bottomrule
        \end{tabular}
\end{table}

\begin{table}[t]
    \centering
    \caption{Zero-shot accuracy of TS-Mean and TS-GAT with and without the least-squares mixing and the bias term.}
    \label{tab:ls_ablation2}
        \begin{tabular}{lccc}
            \toprule
            & plain & +LS & +LS+bias \\
            \midrule
            TS-Mean & 55.43 \stdfont{$\pm$ 1.12} & 63.21 \stdfont{$\pm$ 2.08} & 66.21 \stdfont{$\pm$ 1.93} \\
            TS-GAT  & 55.19 \stdfont{$\pm$ 1.54} & 62.66 \stdfont{$\pm$ 2.20} & 65.85 \stdfont{$\pm$ 2.38} \\
            \bottomrule
        \end{tabular}
\end{table}

\textbf{Setup.} To address Q4, we isolate the contributions of (a) the least-squares (LS) feature-label mixing and (b) the inclusion of a bias term. First, we evaluate a non-parametric setup across our test set of 27 datasets. We compare four configurations: LS alone, LS with a bias term, LS with a bias term followed by a local mean aggregation step (analogous to MEAN-GNN), and a random baseline. Next, we assess the impact of these components on TS-Mean and TS-GAT, trained on Cora and evaluated zero-shot on the 28 datasets. We consider three configurations: plain (no LS, no bias), LS only, and LS plus bias. This allows us to understand how each component contributes to generalization in parametric TS-GNNs. 

\textbf{Results.} As shown in \cref{tab:ls_ablation1}, the plain TS-GNN variants slightly lag behind the LS baseline. This behavior is expected, since global pooling in plain TS-GNNs creates an information bottleneck that limits feature-label mixing. Nevertheless, even without LS or a bias term, plain TS-GNNs perform substantially above random guessing, indicating that the model captures transferable graph structure on its own.

\cref{tab:ls_ablation2} highlights that including LS alone provides a notable performance boost, demonstrating that LS and TS-GNNs capture complementary patterns. Adding a bias term further improves generalization, with the combination of LS and bias yielding the best performance across both TS-Mean and TS-GAT. The combination of the two yields greater generalization than either alone.

\subsection{Necessity of Triple-Symmetry in GFMs}
\label{subsec:q5}

\begin{table}[t]
    \centering
    \caption{Zero-shot accuracy of models with varying symmetries.}
    \label{tab:symmetry}
        \begin{tabular}{lccccc}
            \toprule
            & cora & roman-empire & chameleon & wiki-cs & average \\
            \midrule
            MeanGNN  & 23.30 \stdfont{$\pm$ 5.12} & 21.78 \stdfont{$\pm$ 6.76} & 23.11 \stdfont{$\pm$ 4.52} & 22.28 \stdfont{$\pm$ 6.01} & 22.62 \stdfont{$\pm$ 5.60} \\
            DSS-Mean & 28.30 \stdfont{$\pm$ 4.97} & 23.69 \stdfont{$\pm$ 6.33} & 27.06 \stdfont{$\pm$ 3.70} & 24.22 \stdfont{$\pm$ 4.81} & 25.82 \stdfont{$\pm$ 4.95} \\
            TS-Mean  & 65.69 \stdfont{$\pm$ 2.25} & 65.29 \stdfont{$\pm$ 2.97} & 64.22 \stdfont{$\pm$ 3.77} & 63.50 \stdfont{$\pm$ 2.62} & 64.67 \stdfont{$\pm$ 2.90} \\
            \bottomrule
        \end{tabular}
\end{table}

\textbf{Setup.} To investigate \textbf{Q5}, we examine the importance of feature and label symmetries in graph foundation models. We consider three architectures with varying symmetries: (i) MeanGNN, which is node-permutation equivariant; (ii) DSS with local mean aggregation, denoted as DSS-Mean, which is node-equivariant and feature-permutation invariant; and (iii) \textbf{TS-Mean}, which is fully symmetric to node, feature, and label permutations (triple-symmetric). Following the setup in \cref{subsec:q3}, each model is trained on a single dataset, choosing from cora, roman-empire, chameleon, or wiki-cs, and evaluated zero-shot on the remaining 25 test graphs. To handle varying input and output dimensions across datasets, we pad or truncate features and labels when necessary.

\textbf{Results.} As shown in \cref{tab:symmetry}, performance increases with the inclusion of additional symmetries. MeanGNN achieves the lowest accuracy, DSS-Mean improves by incorporating feature symmetry, and TS-Mean attains the highest zero-shot accuracy by aligning with all three symmetries that are required of a GFM. This provides concrete empirical evidence that both feature and label symmetries (the triple-symmetry criteria) is crucial for strong generalization in GFMs.

\subsection{TS-GNNs variants}
\label{subsec:variants}

\begin{table}[t]
    \large
    \centering
    \caption{Zero-shot accuracy of TS-SGC, TS-GCN and TS-GCNII, all trained on cora.}
    \label{tab:model_ablation}
        \begin{tabular}{lccc}
            \toprule
            & TS-SGC & TS-GCN & TS-GCNII \\
            \midrule
            actor & 30.24 \stdfont{$\pm$ 0.56} & 29.59 \stdfont{$\pm$ 1.07} & 26.95 \stdfont{$\pm$ 1.81} \\
            amazon-ratings & 41.24 \stdfont{$\pm$ 1.62} & 40.78 \stdfont{$\pm$ 1.71} & 38.10 \stdfont{$\pm$ 1.94} \\
            arxiv & 54.79 \stdfont{$\pm$ 1.80} & 53.25 \stdfont{$\pm$ 3.59} & 54.53 \stdfont{$\pm$ 2.13} \\
            blogcatalog & 78.94 \stdfont{$\pm$ 9.06} & 79.15 \stdfont{$\pm$ 7.04} & 56.52 \stdfont{$\pm$ 5.20} \\
            brazil & 30.00 \stdfont{$\pm$ 9.58} & 26.15 \stdfont{$\pm$ 9.18} & 44.62 \stdfont{$\pm$ 17.54} \\
            chameleon & 48.86 \stdfont{$\pm$ 5.53} & 48.29 \stdfont{$\pm$ 5.12} & 52.59 \stdfont{$\pm$ 4.56} \\
            citeseer & 69.74 \stdfont{$\pm$ 0.87} & 69.70 \stdfont{$\pm$ 0.69} & 65.90 \stdfont{$\pm$ 0.27} \\
            co-cs & 91.75 \stdfont{$\pm$ 0.41} & 91.65 \stdfont{$\pm$ 0.45} & 89.30 \stdfont{$\pm$ 0.54} \\
            co-physics & 92.76 \stdfont{$\pm$ 0.70} & 92.54 \stdfont{$\pm$ 0.72} & 90.98 \stdfont{$\pm$ 1.71} \\
            computers & 80.63 \stdfont{$\pm$ 1.43} & 80.01 \stdfont{$\pm$ 1.59} & 73.99 \stdfont{$\pm$ 1.81} \\
            cornell & 69.19 \stdfont{$\pm$ 4.52} & 67.57 \stdfont{$\pm$ 5.06} & 60.54 \stdfont{$\pm$ 3.08} \\
            deezer & 52.61 \stdfont{$\pm$ 2.91} & 52.94 \stdfont{$\pm$ 2.87} & 50.91 \stdfont{$\pm$ 2.27} \\
            europe & 32.38 \stdfont{$\pm$ 5.60} & 31.75 \stdfont{$\pm$ 7.06} & 35.25 \stdfont{$\pm$ 9.34} \\
            full-DBLP & 68.01 \stdfont{$\pm$ 2.49} & 67.67 \stdfont{$\pm$ 1.49} & 65.76 \stdfont{$\pm$ 2.49} \\
            full-cora & 53.01 \stdfont{$\pm$ 1.64} & 52.56 \stdfont{$\pm$ 0.86} & 54.52 \stdfont{$\pm$ 0.75} \\
            last-fm-asia & 76.23 \stdfont{$\pm$ 1.06} & 75.79 \stdfont{$\pm$ 1.17} & 78.22 \stdfont{$\pm$ 0.76} \\
            minesweeper & 80.47 \stdfont{$\pm$ 0.47} & 80.38 \stdfont{$\pm$ 0.26} & 80.00 \stdfont{$\pm$ 0.00} \\
            photo & 90.49 \stdfont{$\pm$ 0.94} & 90.23 \stdfont{$\pm$ 0.98} & 84.62 \stdfont{$\pm$ 1.35} \\
            pubmed & 77.32 \stdfont{$\pm$ 0.69} & 77.60 \stdfont{$\pm$ 0.31} & 76.98 \stdfont{$\pm$ 0.68} \\
            questions & 97.08 \stdfont{$\pm$ 0.07} & 97.03 \stdfont{$\pm$ 0.03} & 97.04 \stdfont{$\pm$ 0.02} \\
            roman-empire & 66.99 \stdfont{$\pm$ 1.40} & 66.13 \stdfont{$\pm$ 1.57} & 52.58 \stdfont{$\pm$ 1.72} \\
            squirrel & 33.22 \stdfont{$\pm$ 1.93} & 32.45 \stdfont{$\pm$ 1.79} & 34.08 \stdfont{$\pm$ 3.27} \\
            texas & 72.97 \stdfont{$\pm$ 10.64} & 75.68 \stdfont{$\pm$ 6.62} & 82.70 \stdfont{$\pm$ 3.63} \\
            tolokers & 72.38 \stdfont{$\pm$ 8.62} & 78.24 \stdfont{$\pm$ 0.15} & 63.37 \stdfont{$\pm$ 23.79} \\
            usa & 38.81 \stdfont{$\pm$ 3.08} & 38.59 \stdfont{$\pm$ 4.18} & 31.32 \stdfont{$\pm$ 6.26} \\
            wiki-attr & 74.60 \stdfont{$\pm$ 1.65} & 74.85 \stdfont{$\pm$ 1.25} & 69.70 \stdfont{$\pm$ 1.62} \\
            wiki-cs & 72.62 \stdfont{$\pm$ 0.80} & 68.00 \stdfont{$\pm$ 3.32} & 65.05 \stdfont{$\pm$ 10.78} \\
            wisconsin & 64.40 \stdfont{$\pm$ 2.22} & 62.85 \stdfont{$\pm$ 1.97} & 60.18\stdfont{$\pm$ 2.59} \\
            \midrule
            Average (28 graphs) & 64.70 \stdfont{$\pm$ 2.94} & 64.34 \stdfont{$\pm$ 2.57} & 62.01 \stdfont{$\pm$ 4.00} \\
            \bottomrule
        \end{tabular}
\end{table}

We extend \cref{tab:cora_results} to include zero-shot experiments for TS-SGC, TS-GCN, and TS-GCNII, which are TS-GNN variants employing SGC \citep{wu2019simplifyinggraphconvolutionalnetworks}, GCN \citep{Kipf16}, and GCNII \citep{chen2020simple} as their respective aggregation functions. Each model is trained on the cora dataset and evaluated on the remaining 28 datasets. All reported results reflect the mean accuracy and standard deviation over five random seeds.

\textbf{Results.} \cref{tab:model_ablation} reveals a interesting trend: as the GNN architecture becomes more complex--from SGC to GCN to GCNII, the zero-shot performance degrades in the low-data regime. This supports a hypothesis that complex GNN architectures, while beneficial in fully supervised settings, may overfit to training distributions and generalize poorly in transfer scenarios (when trained on a single graph). 

\subsection{Per-dataset Results for TS-Mean in \cref{fig:train_size}}
\label{app:train_size_full}

\begin{table}[t]
    \centering
    \caption{Zero-shot accuracy of TS-Mean across 20 datasets when trained on 1, 3, 5, 7 and 9 held-out graphs.}
    \label{tab:train_size_full}
    \resizebox{\textwidth}{!}{
        \begin{tabular}{lccccc}
            \toprule
            \# Training Graphs & 1 & 3 & 5 & 7 & 9 \\
            \midrule
            amazon-ratings & 42.16 \stdfont{$\pm$ 1.41} & 41.93 \stdfont{$\pm$ 2.20} & 39.92 \stdfont{$\pm$ 1.30} & 39.79 \stdfont{$\pm$ 0.79} & 42.20 \stdfont{$\pm$ 0.72} \\
            arxiv & 56.20 \stdfont{$\pm$ 2.50} & 55.47 \stdfont{$\pm$ 4.24} & 57.28 \stdfont{$\pm$ 1.91} & 56.88 \stdfont{$\pm$ 1.53} & 56.13 \stdfont{$\pm$ 1.67} \\
            blogcatalog & 76.22 \stdfont{$\pm$ 2.90} & 76.39 \stdfont{$\pm$ 5.38} & 77.82 \stdfont{$\pm$ 5.19} & 77.01 \stdfont{$\pm$ 2.87} & 77.90 \stdfont{$\pm$ 3.80} \\
            brazil & 39.23 \stdfont{$\pm$ 5.70} & 38.46 \stdfont{$\pm$ 8.16} & 36.15 \stdfont{$\pm$ 5.83} & 33.08 \stdfont{$\pm$ 7.98} & 40.77 \stdfont{$\pm$ 14.80} \\
            chameleon & 60.83 \stdfont{$\pm$ 5.53} & 58.51 \stdfont{$\pm$ 4.03} & 53.86 \stdfont{$\pm$ 5.60} & 55.88 \stdfont{$\pm$ 2.82} & 56.97 \stdfont{$\pm$ 3.68} \\
            citeseer & 68.66 \stdfont{$\pm$ 0.18} & 67.58 \stdfont{$\pm$ 0.50} & 67.80 \stdfont{$\pm$ 0.56} & 67.88 \stdfont{$\pm$ 0.70} & 68.14 \stdfont{$\pm$ 0.30} \\
            co-cs & 90.89 \stdfont{$\pm$ 0.44} & 90.94 \stdfont{$\pm$ 0.65} & 91.52 \stdfont{$\pm$ 0.39} & 91.45 \stdfont{$\pm$ 0.43} & 91.36 \stdfont{$\pm$ 0.36} \\
            co-physics & 92.56 \stdfont{$\pm$ 0.71} & 92.75 \stdfont{$\pm$ 0.64} & 92.83 \stdfont{$\pm$ 0.34} & 92.97 \stdfont{$\pm$ 0.33} & 92.80 \stdfont{$\pm$ 0.54} \\
            cornell & 68.65 \stdfont{$\pm$ 2.42} & 67.57 \stdfont{$\pm$ 1.91} & 77.84 \stdfont{$\pm$ 6.16} & 78.38 \stdfont{$\pm$ 4.27} & 74.59 \stdfont{$\pm$ 4.91} \\
            deezer & 52.31 \stdfont{$\pm$ 2.53} & 51.70 \stdfont{$\pm$ 2.99} & 51.47 \stdfont{$\pm$ 2.34} & 52.06 \stdfont{$\pm$ 2.22} & 51.88 \stdfont{$\pm$ 2.85} \\
            full-DBLP & 66.49 \stdfont{$\pm$ 3.62} & 67.51 \stdfont{$\pm$ 2.94} & 66.94 \stdfont{$\pm$ 4.40} & 68.97 \stdfont{$\pm$ 2.82} & 66.64 \stdfont{$\pm$ 3.55} \\
            full-cora & 53.57 \stdfont{$\pm$ 0.76} & 51.04 \stdfont{$\pm$ 2.15} & 51.45 \stdfont{$\pm$ 1.52} & 53.39 \stdfont{$\pm$ 1.43} & 53.20 \stdfont{$\pm$ 1.70} \\
            last-fm-asia & 78.01 \stdfont{$\pm$ 1.01} & 77.69 \stdfont{$\pm$ 0.73} & 77.40 \stdfont{$\pm$ 0.85} & 78.72 \stdfont{$\pm$ 0.87} & 78.07 \stdfont{$\pm$ 0.76} \\
            minesweeper & 80.65 \stdfont{$\pm$ 0.38} & 80.41 \stdfont{$\pm$ 0.41} & 80.86 \stdfont{$\pm$ 0.94} & 80.98 \stdfont{$\pm$ 0.34} & 80.05 \stdfont{$\pm$ 0.05} \\
            pubmed & 75.06 \stdfont{$\pm$ 0.52} & 75.44 \stdfont{$\pm$ 1.19} & 76.60 \stdfont{$\pm$ 0.82} & 77.92 \stdfont{$\pm$ 0.45} & 77.82 \stdfont{$\pm$ 0.56} \\
            questions & 97.02 \stdfont{$\pm$ 0.01} & 97.02 \stdfont{$\pm$ 0.02} & 96.91 \stdfont{$\pm$ 0.39} & 97.18 \stdfont{$\pm$ 0.02} & 97.03 \stdfont{$\pm$ 0.02} \\
            squirrel & 41.61 \stdfont{$\pm$ 1.25} & 40.31 \stdfont{$\pm$ 2.24} & 37.29 \stdfont{$\pm$ 2.41} & 37.89 \stdfont{$\pm$ 0.67} & 37.35 \stdfont{$\pm$ 1.62} \\
            wiki-attr & 69.84 \stdfont{$\pm$ 1.23} & 70.76 \stdfont{$\pm$ 1.58} & 73.03 \stdfont{$\pm$ 1.63} & 74.73 \stdfont{$\pm$ 0.65} & 73.71 \stdfont{$\pm$ 1.61} \\
            wiki-cs & 73.89 \stdfont{$\pm$ 2.22} & 73.13 \stdfont{$\pm$ 4.16} & 73.37 \stdfont{$\pm$ 1.40} & 72.82 \stdfont{$\pm$ 3.46} & 74.31 \stdfont{$\pm$ 0.90} \\
            wisconsin & 60.78 \stdfont{$\pm$ 11.76} & 72.55 \stdfont{$\pm$ 6.93} & 76.08 \stdfont{$\pm$ 2.56} & 80.39 \stdfont{$\pm$ 2.77} & 80.47 \stdfont{$\pm$ 6.04} \\
            \midrule
            Average (20 graphs) & 67.23 \stdfont{$\pm$ 2.35} & 67.36 \stdfont{$\pm$ 2.65} & 67.82 \stdfont{$\pm$ 2.33} & 68.42 \stdfont{$\pm$ 1.87} & 68.57 \stdfont{$\pm$ 2.52} \\
            \bottomrule
        \end{tabular}
    }
\end{table}

In \cref{subsec:train_size}, we studied how the size of the training set affects zero-shot generalization by training TS-Mean and GraphAny on progressively larger subsets of a held-out training pool. \cref{tab:train_size_full} reports the per-dataset zero-shot accuracy of TS-Mean for training subset sizes of 1, 3, 5, 7, and 9. Each larger subset includes all datasets from the smaller subsets and is sampled from a held-out pool of 9 datasets. Zero-shot evaluation is performed on the remaining 20 datasets, following the protocol described in \cref{subsec:train_size}.

\section{Lack of Node-regression Benchmarks}
\label{app:node_reg}
Our proposed architecture is designed to handle both node classification and regression tasks, as its theoretical derivation imposes no assumptions on label semantics. However, our empirical evaluation is restricted to node classification -- not by design, but due to a fundamental limitation in the current ecosystem of graph machine learning benchmarks.

In our experiments, we evaluated performance across 28 node classification datasets sourced from the major graph learning libraries such as PyG \cite{Fey/Lenssen/2019}, DGL\cite{wang2019deep}, or OGB\cite{hu2020open}. Yet, not a single node-level regression benchmark was available. 
This absence is not an isolated gap but reflects a broader issue in the field: the lack of standardized, diverse, and meaningful benchmarks. As recently emphasized by \citet{bechlerspeicher2025positiongraphlearninglose}, the progress of graph foundation models is increasingly bottlenecked not by modeling capacity or computational resources, but by the lack of representative evaluation tasks. 

We strongly encourage future works to develop standardized benchmarks for node-level regression. In the absence of such benchmarks, our results in \cref{sec:exp} on node classification serve as a strong indicator of the method’s potential for more general node-level prediction tasks.

\section{Dataset Statistics}
\label{app:stats}
The statistics of the 29 node classification datasets used in \cref{tab:cora_results} can be found in \cref{tab:stats}.
The node classification datasets used for training in \cref{fig:train_size} are detailed in \cref{tab:training-subsets}.

\begin{table}[t!]
\caption{Statistics of the 28 node node classification datasets.}
    \centering
    \begin{tabular}{lccccc}
        \toprule
        Dataset & \#Nodes & \#Edges & \#Feature & \#Classes & Train/Val/Test Ratios (\%) \\
        \midrule
        actor & 7600 & 30019 & 932 & 5 & 48.0/32.0/20.0 \\
        amazon-ratings & 24492 & 186100 & 300 & 5 & 50.0/25.0/25.0 \\
        Arxiv & 169343 & 1166243 & 128 & 40 & 53.7/17.6/28.7 \\
        blogcatalog & 5196 & 343486 & 8189 & 6 & 2.3/48.8/48.8 \\
        brazil & 131 & 1074 & 131 & 4 & 61.1/19.1/19.8 \\
        chameleon & 2277 & 36101 & 2325 & 5 & 48.0/32.0/20.0 \\
        citeseer & 3327 & 9104 & 3703 & 6 & 3.6/15.0/30.1 \\
        co-cs & 18333 & 163788 & 6805 & 15 & 1.6/49.2/49.2 \\
        co-physics & 34493 & 495924 & 8415 & 5 & 0.3/49.9/49.9 \\
        computers & 13752 & 491722 & 767 & 10 & 1.5/49.3/49.3 \\        
        cora & 2708 & 10556 & 1433 & 7 & 5.2/18.5/36.9 \\
        cornell & 183 & 554 & 1703 & 5 & 47.5/32.2/20.2 \\
        deezer & 28281 & 185504 & 128 & 2 & 0.1/49.9/49.9 \\
        europe & 399 & 5995 & 399 & 4 & 20.1/39.8/40.1 \\
        full-DBLP & 17716 & 105734 & 1639 & 4 & 0.5/49.8/49.8 \\
        full-cora & 19793 & 126842 & 8710 & 70 & 7.1/46.5/46.5 \\
        last-fm-asia & 7624 & 55612 & 128 & 18 & 4.7/47.6/47.6 \\
        minesweeper & 10000 & 78804 & 7 & 2 & 50.0/25.0/25.0 \\
        photo & 7650 & 238162 & 745 & 8 & 2.1/49.0/49.0 \\
        pubmed & 19717 & 88648 & 500 & 3 & 0.3/2.5/5.1 \\
        questions & 48921 & 307080 & 301 & 2 & 50.0/25.0/25.0 \\
        roman-empire & 22662 & 65854 & 300 & 18 & 50.0/25.0/25.0 \\
        squirrel & 5201 & 217073 & 2089 & 5 & 48.0/32.0/20.0 \\
        texas & 183 & 558 & 1703 & 5 & 47.5/31.7/20.2 \\
        tolokers & 11758 & 1038000 & 10 & 2 & 50.0/25.0/25.0 \\
        usa & 1190 & 13599 & 1190 & 4 & 6.7/46.6/46.6 \\
        wiki & 2405 & 17981 & 4973 & 17 & 14.1/42.9/43.0 \\
        wiki-cs & 11701 & 431206 & 300 & 10 & 5.0/15.1/49.9 \\
        wisconsin & 251 & 900 & 1703 & 5 & 47.8/31.9/20.3 \\
    \bottomrule
    \end{tabular}
    \label{tab:stats}
\end{table}

\begin{table}[t]
\centering
\caption{Datasets used for training in \cref{fig:train_size}.}
\label{tab:training-subsets}
\begin{tabular}{ll}
\toprule
Train-set size & Graphs \\
\midrule
1 & cora \\
3 & cora, texas, tolokers \\
5 & cora, texas, tolokers, photo, roman-empire \\
7 & cora, texas, tolokers, photo, roman-empire, usa, actor \\
9 & cora, texas, tolokers, photo, roman-empire, usa, actor, computers, europe\\
\bottomrule
\end{tabular}
\end{table}

\section{Hyperparameters}
\label{app:hyperparameters}
We adopt the hyperparameters from \cite{zhao2025fullyinductivenodeclassificationarbitrary} for the experiments in \cref{tab:cora_results} over the MeanGNN, GAT and GraphAny architectures. The complete hyperparameter configurations, including those used for TS-Mean and TS-GAT, is provided in \cref{tab:cora_hps}.

The same hyperparameters detailed in \cref{tab:cora_hps} are also used for the experiment in \cref{fig:train_size} when training on a single graph. Hyperparameters used for training on more than one graph in \cref{fig:train_size} are reported in \cref{tab:scale_hps}.

\begin{table}[t]
\caption{Hyperparameters used in \cref{tab:cora_results}.}
\label{tab:cora_hps}
\centering
\begin{tabular}{lccccc}
\toprule
 & MeanGNN & GAT & GraphAny & TS-Mean & TS-GAT \\
\midrule
lr & \makecell{$2\cdot 10^{-4}$\\$5\cdot 10^{-4}$}
   & \makecell{$2\cdot 10^{-4}$\\$5\cdot 10^{-4}$}
   & $2\cdot 10^{-4}$ & 0.01 & \makecell{0.01\\0.03} \\
hidden dimension & 64, 128 & 64, 128 & 32, 64, 128 & 16 & 16 \\
\# layers & 2 & 2 & 2 & 2 & 2 \\
\# batch & - & - & 128 & - & - \\
visible train labels (\%) & - & - & 0.5 & \makecell{0.25, 0.3,\\ 0.35, 0.4} & \makecell{0.25, 0.3,\\ 0.35, 0.4} \\
\# epochs & 400 & 400 & 1000 & 2000 & 2000 \\
Entropy & - & - & 1, 2 & - & - \\
\# MLP layers & 0 & 0 & 1, 2 & 0 & 0 \\
\bottomrule
\end{tabular}
\end{table}

\begin{table}[t]
\caption{Hyperparameters used in \cref{fig:train_size}.}
\label{tab:scale_hps}
\centering
\begin{tabular}{lcc}
\toprule
 & GraphAny & TS-Mean \\
\midrule
lr & $2\cdot 10^{-4}$ & \makecell{$3\cdot 10^{-3}$\\ $5\cdot 10^{-3}$} \\
hidden dimension & 32, 64, 128 & 16 \\
\# layers & 2 & 2 \\
\# batch & 128 & 1 \\
visible train labels (\%) & 0.5 & 0.6, 0.7, 0.8 \\
\# epochs & 500, 1000 & 50, 100, 300, 500 \\
Entropy & 1, 2 & - \\
\# MLP layers & 1, 2 & 0 \\
\bottomrule
\end{tabular}
\end{table}


\end{document}